\definecolor{tud-0a}{HTML}{DCDCDC}
\definecolor{tud-1a}{HTML}{5D85C3}
\definecolor{tud-2a}{HTML}{009CDA}
\definecolor{tud-3a}{HTML}{50B695}
\definecolor{tud-4a}{HTML}{AFCC50}
\definecolor{tud-5a}{HTML}{DDDF48}
\definecolor{tud-6a}{HTML}{FFE05C}
\definecolor{tud-7a}{HTML}{F8BA3C}
\definecolor{tud-8a}{HTML}{EE7A34}
\definecolor{tud-9a}{HTML}{E9503E}
\definecolor{tud-10a}{HTML}{C9308E}
\definecolor{tud-11a}{HTML}{804597}
\definecolor{tud-0b}{HTML}{B5B5B5}
\definecolor{tud-1b}{HTML}{005AA9}
\definecolor{tud-2b}{HTML}{0083CC}
\definecolor{tud-3b}{HTML}{009D81}
\definecolor{tud-4b}{HTML}{99C000}
\definecolor{tud-5b}{HTML}{C9D400}
\definecolor{tud-6b}{HTML}{FDCA00}
\definecolor{tud-7b}{HTML}{F5A300}
\definecolor{tud-8b}{HTML}{EC6500}
\definecolor{tud-9b}{HTML}{E6001A}
\definecolor{tud-10b}{HTML}{A60084}
\definecolor{tud-11b}{HTML}{721085}
\definecolor{tud-0c}{HTML}{898989}
\definecolor{tud-1c}{HTML}{004E8A}
\definecolor{tud-2c}{HTML}{00689D}
\definecolor{tud-3c}{HTML}{008877}
\definecolor{tud-4c}{HTML}{7FAB16}
\definecolor{tud-5c}{HTML}{B1BD00}
\definecolor{tud-6c}{HTML}{D7AC00}
\definecolor{tud-7c}{HTML}{D28700}
\definecolor{tud-8c}{HTML}{CC4C03}
\definecolor{tud-9c}{HTML}{B90F22}
\definecolor{tud-10c}{HTML}{951169}
\definecolor{tud-11c}{HTML}{611C73}
\definecolor{tud-0d}{HTML}{535353}
\definecolor{tud-1d}{HTML}{243572}
\definecolor{tud-2d}{HTML}{004E73}
\definecolor{tud-3d}{HTML}{00715E}
\definecolor{tud-4d}{HTML}{6A8B22}
\definecolor{tud-5d}{HTML}{99A604}
\definecolor{tud-6d}{HTML}{AE8E00}
\definecolor{tud-7d}{HTML}{BE6F00}
\definecolor{tud-8d}{HTML}{A94913}
\definecolor{tud-9d}{HTML}{961C26}
\definecolor{tud-10d}{HTML}{732054}
\definecolor{tud-11d}{HTML}{4C226A}
\tikzset{
    possible positions/.style = {
        draw = black,
        fill = black,
        fill opacity = 0.2,
        rectangle,
        minimum size = 0.5cm
    },
    included positions/.style = {
        draw = tud-4b,
        fill = tud-4b,
        fill opacity = 0.3,
        dashed,
        rectangle,
        minimum size = 0.5cm
    },
    excluded positions/.style = {
        draw = tud-8b,
        fill = tud-8b,
        fill opacity = 0.3,
        dashed,
        rectangle,
        minimum size = 0.5cm
    },
    irrelevant positions/.style = {
        draw = tud-0b,
        fill = tud-0b,
        fill opacity = 0.3,
        dashed,
        rectangle,
        minimum size = 0.5cm
    },
    point/.style = {
        draw = black,
        fill = black,
        fill opacity = 1,
        circle,
        inner sep = 0pt,
        minimum size = 3pt
    },
    included point/.style = {
        draw = black,
        fill = tud-4b,
        fill opacity = 1,
        circle,
        inner sep = 0pt,
        minimum size = 3pt
    },
    excluded point/.style = {
        draw = black,
        fill = tud-8b,
        fill opacity = 1,
        circle,
        inner sep = 0pt,
        minimum size = 3pt
    },
    irrelevant point/.style = {
        draw = tud-0c,
        fill = white,
        fill opacity = 1,
        circle,
        inner sep = 0pt,
        minimum size = 3pt
    },
    leaf/.style = {
        draw=tud-2b,
        fill=tud-2b,
        fill opacity=0.2,
        draw opacity=0.3,
        circle,
        minimum size = 2cm
    }
}
\theoremstyle{definition}
\newtheorem{defi}{Definition}
\newtheorem{exone}{Example}
\theoremstyle{remark}
\newtheorem{rem}{Remark}
\newtheorem*{rem*}{Remark}
\theoremstyle{plain}
\newtheorem{thm}{Theorem}
\newtheorem{prop}{Proposition}
\newtheorem{lem}{Lemma}
\newtheorem{cor}{Corollary}
\let\P\relax
\DeclareMathOperator{\R}{\mathbb{R}}
\DeclareMathOperator{\N}{\mathbb{N}}
\DeclareMathOperator{\P}{\mathbb{P}}
\DeclareMathOperator{\F}{\mathcal{F}}
\DeclareMathOperator{\M}{\mathcal{M}}
\let\v\relax
\DeclareMathOperator{\v}{\mathbf{v}}
\DeclareMathOperator{\p}{\mathbf{p}}
\DeclareMathOperator{\m}{\mathbf{m}}
\DeclareMathOperator*{\argmax}{arg\,max}
\newcommand{\skp}[1]{\left\langle #1 \right\rangle}
\newcommand{\norm}[1]{\left\Vert #1 \right\Vert}
\newcommand{\abs}[1]{\left\vert #1 \right\vert}
\newcommand{\sgn}{\operatorname{sgn}}
\let\div\relax
\newcommand{\div}{\operatorname{div}}
\newcommand{\normal}[1]{\begin{pmatrix}\cos(#1)\\\sin(#1)\end{pmatrix}}
\title{Ideal Observer for Segmentation of Dead Leaves Images}
\date{\today}
\author{Swantje Mahncke\,\orcidlink{0009-0006-2761-0038}}
\affiliation{Centre for Cognitive Science, Institute of Psychology, TU Darmstadt, Germany; Center for Mind, Brain and Behavior (CMBB), Universities of Marburg, Giessen, and Darmstadt, Germany}
\email{swantje.mahncke@tu-darmstadt.de}
\author{Malte Ott\,\orcidlink{0000-0002-4962-8689}}
\affiliation{Department of Mathematics, TU Darmstadt, Germany}
\abstract{
    The human visual environment is comprised of different surfaces that are distributed in space.
The parts of a scene that are visible at any one time are governed by the occlusion of overlapping objects.
In this work we consider "dead leaves" models, which replicate these occlusions when generating images by layering objects on top of each other.
A dead leaves model is a generative model comprised of distributions for object position, shape, color and texture.
An image is generated from a dead leaves model by sampling objects ("leaves") from these distributions until a stopping criterion is reached, usually when the image is fully covered or until a given number of leaves was sampled.
Here, we describe a theoretical approach, based on previous work, to derive a Bayesian ideal observer for the partition of a given set of pixels based on independent dead leaves model distributions.
Extending previous work, we provide step-by-step explanations for the computation of the posterior probability as well as describe factors that determine the feasibility of practically applying this computation.
The dead leaves image model and the associated ideal observer can be applied to study segmentation decisions in a limited number of pixels, providing a principled upper-bound on performance, to which humans and vision algorithms could be compared.

}
\keywords{Dead leaves model, Segmentation, Bayesian ideal observer}
\begin{document}

\maketitle

% !TeX spellcheck = en_GB

If you look around right now and describe what you see, you might name things like a computer, a desk or a cup of coffee.
Objects like these and their occlusion of each other are an essential part of what humans perceive of the visual world.
The human environment is comprised of different objects which are distributed in space.
Everything that humans see is a projection of these objects in the 3-dimensional world onto the 2-dimensional retinae.
Which parts of a scene are visible is then governed by occlusion of overlapping objects.
Even though humans work with this reduced information they receive as visible input, they are good at separating their environment into its parts and judging which points of a scene belong to the same object and which do not.
Here, we refer to the assignment of each location in a scene to exactly one object as \textit{segmentation} or \textit{partition}.

It remains an open question how humans are able to achieve segmentation.
While it is clear that cues available in the real world like binocular disparity \citep{Tsao2022} and motion \citep{Peters2021,Yao2020} are important for segmentation, here we limit our scope to the information that can be found in static 2D images.

In computer vision, the task of segmentation is often performed by first detecting the presence and overall spatial position of known objects \citep{Chen2019,Hafiz2020}.
Early algorithms to perform this object detection task were based on image features or contours \citep{Schlecht2011, Zou2023}, whereas for the past decade the focus has been on deep learning techniques \citep{Hafiz2020,Sharma2022,Zou2023}.
Currently the majority of deep neural networks perform instance segmentation in two stages, relying either on top-down (object bounding boxes) or on bottom-up (pixel-level) information in the first stage [\citealp{Gu2022}, e.\,g. \citealp{Arnab2016,He2017}].
While these techniques have increased accuracy immensely, it is also more difficult to understand how these models arrive at a specific segmentation, and how much better segmentation performance \textit{could} get, given certain constraints.

Ideal observer models offer a way to derive how an observer would perform when using all available information in an optimal way.
These models already have a long history in vision research [e.\,g. \citealp{deVries1943,Rose1948}] and have been used to model various visual tasks \citep{Burge2020, Eckstein2011, Elder2002, Geisler2009, Geisler2011, Gold2008, Hoppe2019, Najemnik2005, Neri2010, Peterson2012, Prince2007, Straub2022}.
Typically however, these tasks use simple stimuli \citep{Burge2020}, and the models are not \textit{image-computable} in the sense that they take pixels as inputs.
While there has been progress in developing image-computable ideal observer models \citep{Burge2020}, this does not yet cover segmentation.

The advantage of simple stimuli is that they can be defined by a small number of parameters, while the generation process for natural scenes is highly complex and unknown.
However, we would like to be able to test and extend models of human visual processing to stimuli that are more like what humans see in the world.
While deep learning approaches to generating naturalistic images (Generative Adversarial Networks and similar) produce visually-impressive results \citep{Creswell2018,Fruend2019}, due to their complexity they rely on many parameters for which it can be complicated to understand their impact on the generated image.
Therefore, we argue that developing stimulus definitions that rely on only a limited number of parameters while still capturing important structure in the world can be a fruitful direction to pursue.

The stimuli we will consider in this work can be generated with only a few parameters, while mimicking some important properties found in natural scenes.
A \textit{dead leaves} model is a probabilistic model which generates scenes by placing objects with a depth ordering such that (partial) occlusions can emerge, similarly to natural scenes.
They are so named due to their visual similarity to dead leaves on the ground in a forest (Figure~\ref{img:dead_leaves}).

\begin{figure}[ht]\centering
    \includegraphics[width=0.5\textwidth]{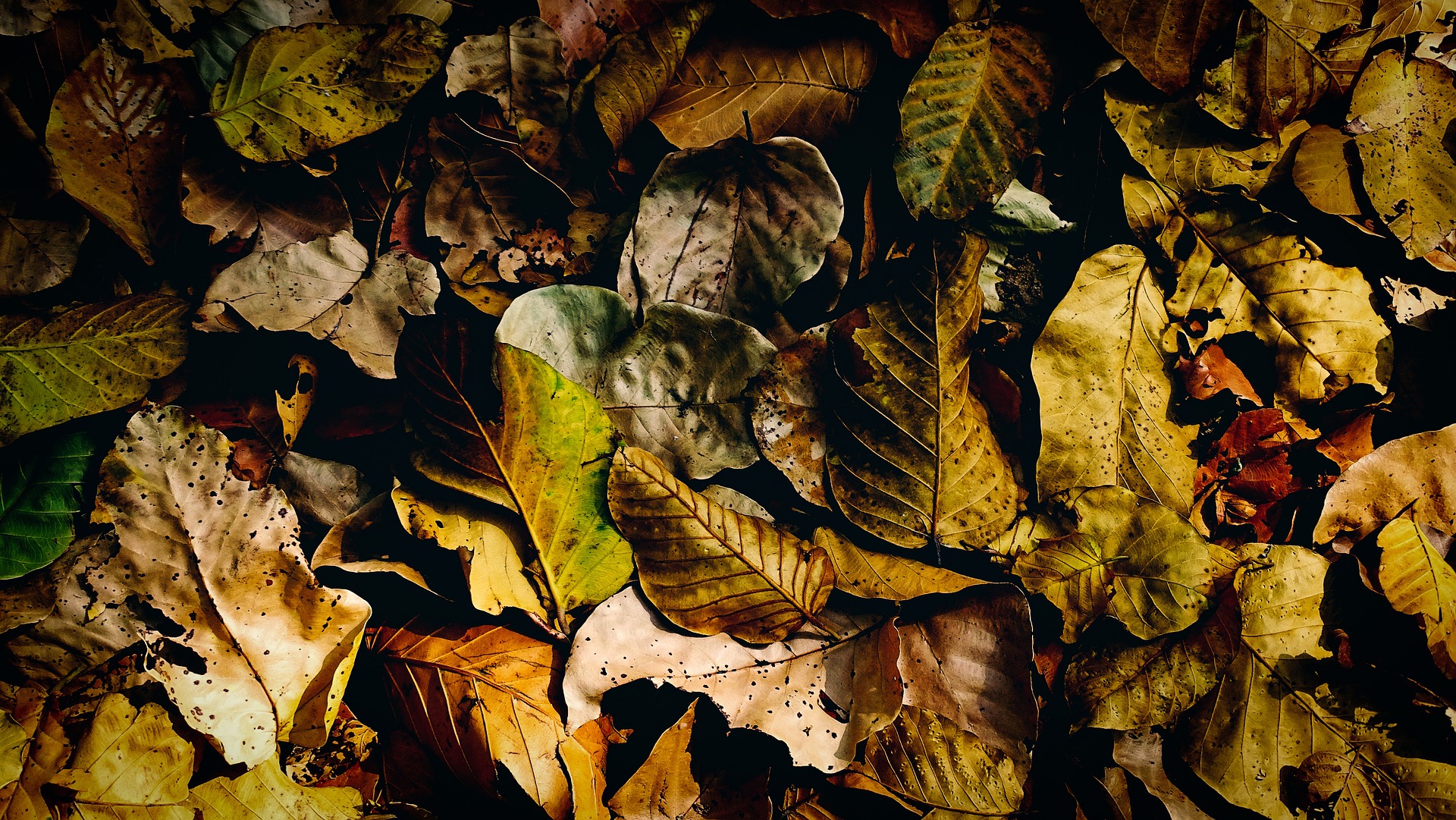}
    \caption{Dead leaves by pixabay user 7631258 (\url{https://pixabay.com/photos/dry-leaves-fallen-dry-dead-leaves-4364822/}, CC BY).}\label{img:dead_leaves}
\end{figure}

Images generated with dead leaves models can produce similar scene statistics to natural images \citep{Alvarez1999,Gousseau2007,Lee2001,Madhusudana2022}.
In particular, the scale invariance of images (because objects can take on all possible angular sizes, images should be self-similar at different scales) is given in natural images and dead leaves images alike due to the statistical independence of objects \citep{Ruderman1997,Ruderman1994}.
The occlusion of objects results in sharp edges and non-accidental features like T-junctions, which give relevant information about segmentation and figure ground organisation \citep{Walther2014,Wilder2018}.
The statistical similarity might help to extend theories derived from studies in dead leaves images to natural scenes, based on the efficient coding hypothesis \citep{Attneave1954, Barlow1961, Olshausen1996a,Olshausen1996b}, and also because segmentation can rely on statistical information, i.\,e. when considering texture as cue \citep{Rosenholtz2014, Ariely2001, Landy2014, Whitney2014}.
For these reasons, dead leaves can be useful stimuli for psychophysical experiments \citep{Maiello2017, Taylor2015, Wallis2012}.
Additionally, more recent work in the area of computer vision has used images generated through dead leaves models as synthetic data for training neural networks, which showed promising results in comparison to other synthetic data sets \citep{Achddou2021,Baradad2021,Achddou2025}.
Finally, \citet{Pitkow2010} has previously shown how to derive the probability of a segmentation for a subset of pixels in a dead leaves model and of the pixel's resulting luminance values based on a Gaussian texture model.

In this paper, we reformulate the work of \citet{Pitkow2010} and show how this formulation can be used to derive an ideal observer for segmentation in dead leaves images.
First, we introduce the theory of dead leaves models, pointing out similarities and differences to natural scenes.
Based on this theory and inspired by scene statistics of natural images, we present a generation algorithm for dead leaves images.
Second, and more practically for applications in visual perception, machine learning and neuroscience, we present a step-by-step solution to calculate a Bayesian ideal observer model for segmentation of dead leaves images, and note their limitations.
We present several examples of these computations.

\section{Dead leaves model} \label{sec: dlm}

A dead leaves model can be thought of as a model to layer objects in a visual scene.
As for any model, dead leaves models can be constructed in multiple ways.
In particular, there are different mathematical constructions which in practice can result in more or less similar model versions.
\citet{Matheron1968} has been credited to first describe a dead leaves model.
However, the version we will construct is a combination of the model descriptions given by \citet{Goussea2003}, \citet{Bordenave2006}, \citet{Lee2001}, \citet{Alvarez1999} and \citet{Madhusudana2022}.
For the sake of completeness we will give formal definitions throughout the construction.

\subsection{The model} \label{sec: the model}

The basic idea to generate a dead leaves image starts with creating a visual environment composed of objects in space.
We can then describe how this environment projects into an observers' eyes, and we are interested to understand how perception of the environment depends on the arrangement of different objects and their surface properties.
Subsequently, we will consider these three steps in a more detailed and formal way.

\subsubsection{Environment} \label{sec: environment}

First, we want to generate objects located somewhere in space.
We model object shapes as values of a random connected closed set.
A random connected closed set can be thought of as a random variable that takes connected closed sets as values instead of scalars.
The position of an object can be modelled through a homogenous Poisson point process \citep[as done in][]{Goussea2003,Lee2001}, which allows a definition on the whole space, or as a uniform distribution on a bounded subset of the space \citep[as done in][]{Alvarez1999}.
While allowing positions in the whole space might be more realistic, since we will only be considering a bounded area we will use the easier setup with a uniform position distribution.
Formal definitions and detailed descriptions about random closed sets are given in appendix~\ref{app:preliminaries}.

One problem that can occur when following this method is that objects can overlap or intersect, i.e. points are allocated to multiple objects.
Since this can not happen in the real world and occlusion would not be well-defined in this case, we want to avoid an intersection of objects.
To fix this problem we will not sample locations from a 3-dimensional space, but only from the plane.
The order in depth will then be determined by the sample order from front to back, similar to the construction of \citet{Goussea2003} and \citet{Lee2001}.
For three dimensional objects this does not necessarily prevent intersection of objects.
Here, we are foremost interested in the visible parts of an object, which can also be represented by a planar object.
To prevent object intersections we therefore will only consider random sets in $\R^2$.
Taken these constraints together the following construction of the environment arises.

\begin{defi}[Environment]\label{def:environment}
    Let $I=\N$ be an index set and let $B$ be a bounded area in $\R^2$ (typically simply connected).
    We define \emph{random positions} as random variables $(P_i)_{i\in I}$ which are independently uniformly distributed on $B$.
    The index $i$ defines the distance to the plane.
    Let $(X_i)_{i\in I}$ be independent and identically distributed (i.\,i.\,d.) random connected closed sets of the plane, independent of $P_i$.
    We call $X_i$ a \emph{random shape} and the random connected closed set
    \begin{align*}
        L_i= P_i + X_i
    \end{align*}
    then is a \emph{random leaf}.
    We call the set of leaves $E=\{L_i\}_{i\in I}$ the \emph{random environment}.
\end{defi}

In other words, to generate one sample of a visual environment we sample a series of random two-dimensional objects with shapes $(X_i)_{i\in I}$ which we uniformly distribute in the three dimensional space at positions $(P_i)_{i\in I}$ (defined with horizontal, and vertical coordinates).\footnote{
    There are options to extend this generation process to 3D objects.
    When sampling shapes and positions in three dimensions we could for example resample every time we get a sample which includes points that already belong to an object.
    It is, however, unclear how this would effect the statistics of the scene.
    Alternatively, we could consider the projection of a three dimensional shape to the plane and then locating these in space to generate the environment.
    This would add a step to the generation process for which the consequences are also not clear.
}

A simple example for an environment could look as follows.

\begin{exone}\label{ex1: environment}
    Let $B = [-2.5,4.5]^2$.
    We consider the first $i=1,\dots,3$ samples from a random environment.
    A series of locations sampled from $P_i$ could be $\{(1.7,1.7),(0.3,0.6),(2.3,0.6)\}$. These locations in $B$ can be visualized as:

    \begin{center}
        \begin{tikzpicture}[scale = 0.4]
            \foreach \x in {-2,...,4}{
                    \draw[tud-0a] (-2.5,{\x}) node[anchor=east] {\small $\x$};
                    \draw[tud-0a] ({\x},-2.5) node[anchor=north] {\small $\x$};
                }
            \draw[tud-0a, very thin] (-2.5,-2.5) grid (4.5,4.5);
            \draw[tud-0c, very thin] (0,-2.5) -- (0,4.5);
            \draw[tud-0c, very thin] (-2.5,0) -- (4.5,0);

            \foreach \x in {(1.7,1.7),(0.3,0.6),(2.3,0.6)}{
                    \node[point] at \x {};
                }
            \draw[tud-0c, very thin] (-2.5,-2.5) rectangle (4.5,4.5);

        \end{tikzpicture}
    \end{center}

    Additionally, consider the following random set samples (for simplicity we only sample circles):
    \begin{align*}
        \mathbf{x}_1 & = \{(x,y) \in \R^2 \mid \norm{(x,y)} \leq 1.1\} \\
        \mathbf{x}_2 & = \{(x,y) \in \R^2 \mid \norm{(x,y)} \leq 1.5\} \\
        \mathbf{x}_3 & = \{(x,y) \in \R^2 \mid \norm{(x,y)} \leq 1.8\} \\
    \end{align*}

    Together with the positions we sampled the environment is comprised of three leaves:

    \begin{center}
        \begin{tikzpicture}[scale = 0.4]
            \draw[tud-0a, very thin, dashed] (-2.5,-2.5) -- (17.5,1.5);
            \foreach \x in {0,1,2}{
                    \draw[tud-0c, very thin, fill=white] ({-2.5+\x*10},{-2.5+\x*2}) rectangle +(7,7);
                }

            \draw[fill=tud-0b, draw=tud-0c] (1.7,1.7) circle (1.1);
            \draw[fill=tud-0b, draw=tud-0c] ({0.3+10},{0.6+2}) circle (1.5);
            \draw[fill=tud-0b, draw=tud-0c] ({2.3+20},{0.6+4}) circle (1.8);

            \foreach \x in {0,1,2}{
                    \draw[tud-0a, very thin] ({-2.5+\x*10},{-2.5+\x*2}) grid +(7,7);
                    \draw[tud-0c, very thin] ({\x*10},{-2.5+\x*2}) -- +(0,7);
                    \draw[tud-0c, very thin] ({-2.5+\x*10},{\x*2}) -- +(7,0);
                }

            \node[point] at (1.7,1.7) {};
            \node[point] at ({0.3+10},{0.6+2}) {};
            \node[point] at ({2.3+20},{0.6+4}) {};

            \foreach \x in {-2,...,4}{
                    \draw[tud-0a] (-2.5,{\x}) node[anchor=east] {\small $\x$};
                    \draw[tud-0a] ({\x},-2.5) node[anchor=north] {\small $\x$};
                }

            \foreach \i in {0,1,2}{
                    \draw[tud-0a] ({5.5+10*\i},{-3+2*\i}) node {\small $i=\i$};
                }

            \draw[tud-0a, very thin, dashed] (-2.5,4.5) -- (17.5,8.5);
            \draw[tud-0a, very thin, dashed] (4.5,-2.5) -- (24.5,1.5);
            \draw[tud-0a, very thin, dashed] (4.5,4.5) -- (24.5,8.5);
        \end{tikzpicture}
    \end{center}
\end{exone}

The region in space over which positions are distributed ($B$) can always be chosen in a way that it covers the area visible to an observer.
In this manner we can for example generate the environment depicted in Figure~\ref{img:dead leaves model 3d}.

\begin{figure}[ht]\centering
    \includegraphics[width=0.3\textwidth]{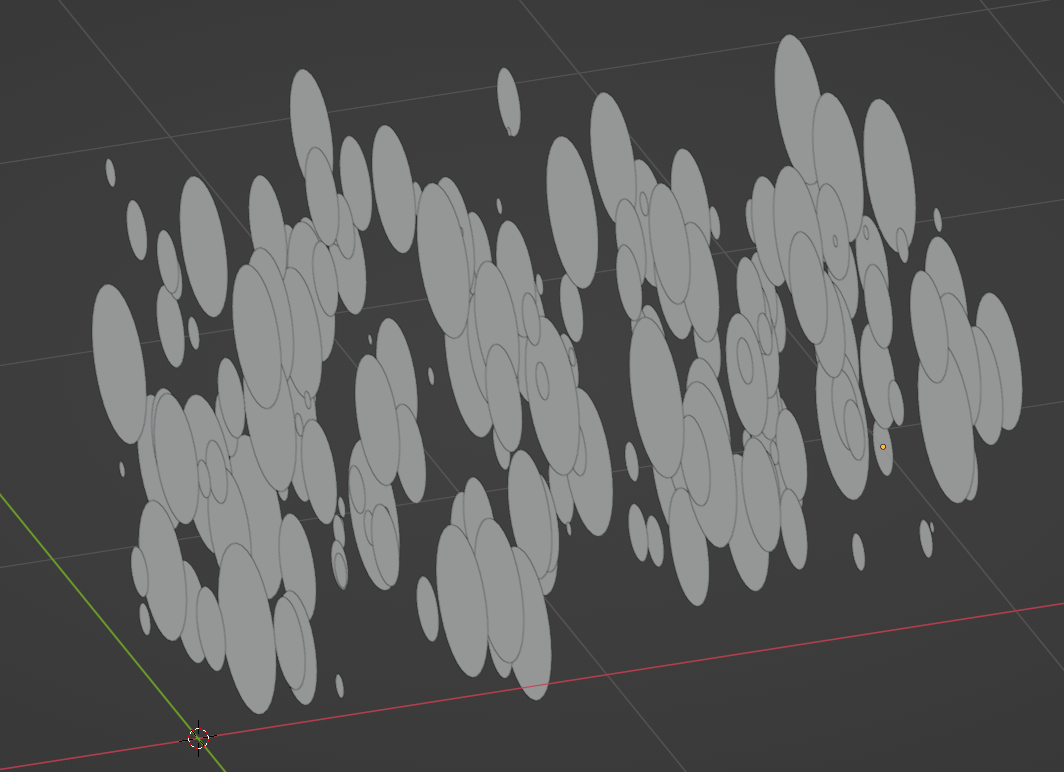}
    \caption{Example of a random environment using random circles as objects.}\label{img:dead leaves model 3d}
\end{figure}

The next step is to determine which parts of the environment are visible to an observer.

\subsubsection{Projection} \label{sec: projection}

The visible part of the environment is what remains when projecting the environment onto the eye with occlusion of the leaves.
Since transparency would complicate the occlusion of objects during this projection, we limit ourselves to opaque objects.
A model construction including transparency was proposed by \citet{Mumford2001}.
Realistically, the projection of the environment is a orthographic projection onto a two-dimensional manifold as implemented by \citet{Mumford2001} (Figure~\ref{img: dlm projection perspective}).
We will, however, follow the more common route with an orthographic projection of the environment onto the plane (Figure~\ref{img: dlm projection orthographic}) \citep[compare to][]{Goussea2003,Lee2001,Alvarez1999}.
Since this facilitates the computation.

\begin{figure}[ht]\centering
    \adjustbox{minipage=1.3em,raise=15mm}{\subcaption{}\label{img: dlm projection orthographic}}
    \begin{subfigure}[t]{0.3\textwidth}
        \includegraphics[width=\linewidth]{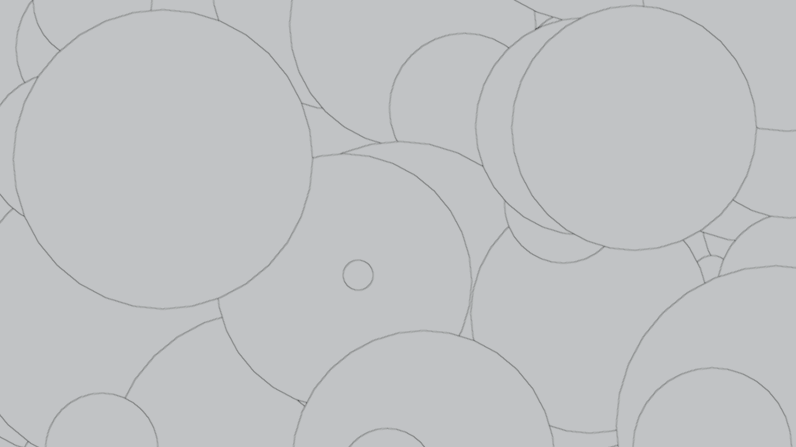}
    \end{subfigure}
    \adjustbox{minipage=1.3em,raise=15mm}{\subcaption{}\label{img: dlm projection perspective}}
    \begin{subfigure}[t]{0.3\textwidth}
        \includegraphics[width=\linewidth]{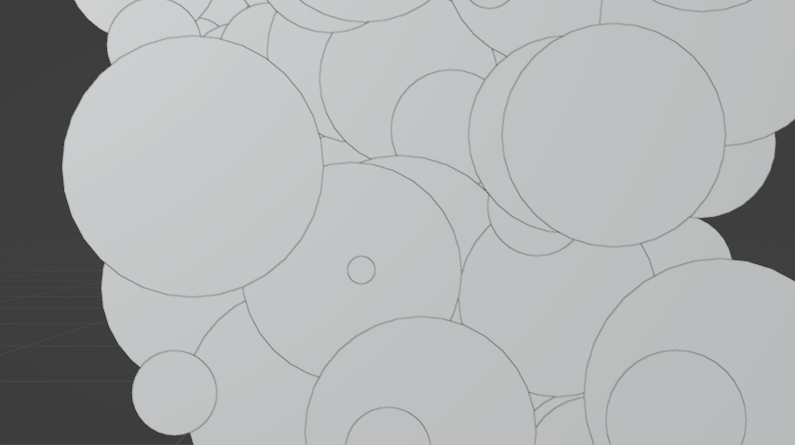}
    \end{subfigure}
    \caption{Projection of the environment from Figure~\ref{img:dead leaves model 3d} to a plane (Orthographic projection, \subref{img: dlm projection orthographic}) and a two-dimensional manifold (Projection with perspective, \subref{img: dlm projection perspective}).}\label{img:dead leaves model projection}
\end{figure}

To execute the projection we first need to identify the visible parts that will be projected onto the plane.
Under the assumption that an observer is located at depth zero and the index $i$ of a leaf indicates its depth from the observer we can determine which parts of the environment are visible to the observer by removing all parts of a leaf that are occluded by other leaves with smaller depth.

\begin{defi}[Visible parts of leaves]\label{def:visible parts}
    Let the \emph{visible area} of the environment be given by a compact set  $A \subset B \subset \R^2$.
    The \emph{visible part} of a random leaf $L_i$ is then given by
    \begin{align*}
        V_{i} = \left(L_i \setminus \left(\bigcup_{j < i} L_j\right)\right) \cap A
    \end{align*}
    i.\,e. the elements of $L_i$ which are not occluded by another leaf and fall into the visible area.
    The complement $L_i\setminus V_{i}$ is then the \emph{occluded part} of a leaf.
    A leaf $L_i$ with $V_{i}=\emptyset$ is called \emph{invisible}.
\end{defi}

\begin{rem}
    Since all leaves are sampled from random closed sets the boundary between two leaves always belongs to the occluding leaf.
\end{rem}

By definition of the environment the series of random positions $P_i$ only includes locations in $B$.
Consequently, the image of this projection mapping is also bounded and we can consider the area that is covered by the leaves.

\begin{defi}[Random Dead leaves partition]\label{def:partition}
    Let $E=\{L_i\}_{i\in I}$ be a random environment with $B$ and $I$ as in definition~\ref{def:environment}, and let $V_{i}\subset \R^2$ be the visible leaf parts of the environment.
    Given a compact visible area $A\subset\R^2$ we call the family of random leaves $$M_A=\{V_{i}\}_{i\in I}$$ a \emph{random dead leaves partition} of $A$ if
    \begin{align*}
        \P_{(V_i)_{i\in I}}\left( \left(\v_i\right)_{i\in I} \mid \bigcup_{i\in I} \v_{i} = A \right) = 1
    \end{align*}
    i.\,e. the orthographic projection of the visible leaf parts covers $A$ almost surely.
    By construction, $M_A$ is a family of random sets and hence a random variable taking values in
    \begin{align*}
        \M_A = \{\m_A \mid \m_A\text{ partition of }A\},
    \end{align*}
    where $\M_A$ is the set of all possible partitions of $A$ and can be thought of as the set of possible outcomes of the random dead leaves partition $M_A$.
\end{defi}

By construction, if $B$ is large enough the projection of the leaves can tile any bounded subset of $\R^2$ \cite[Lemma~2]{Bordenave2006} yielding a dead leaves partition of the subset.
In particular, if we sample a large enough number of leaves we can fill any finite visible area.
We can therefore abort the sampling process once all points in our visible area are covered by leaves.

By definition visible parts of different leaves in the model can not overlap, i.\,e. $V_{i}\cap V_{j}=\emptyset$ for $i\neq j$.
Therefore, by construction every dead leaves partition is a partition or segmentation of $A$.
Since we are only concerned with visible leaves we can renumber the leaves in a model such that they are numbered according to depth without skipping indices with $V_1$ being the top most leaf.

\begin{rem*}[Discretization]
    When considering visible areas in $\R^2$ the set $\M_A$ can be uncountable.
    To allow computations on $\M_A$ we can reduce the dead leaves partition, by considering the partition of a finite subset $a\subset A$.
    This can be thought of as taking a number of measurements from the continuos environment as done through photoreceptor(s) (cells) when taking a photograph or viewing the scene.
    As a consequence we consider a finite set $a\subset A$, e.\,g. $a = A \cap \mathbb{Z}^2$, and the partition $M_a$.
    Then, there is only a finite number of points to be segmented by the dead leaves and $M_a$ can have at most $\vert a\vert$ visible leaves, i.\,e. each point belongs to a different leaf.
    With this construction the set $\M_a$ is also finite although possibly very large.
    For a given number of pixels $n$, \emph{Bell's number}, defined as
    \begin{align*}
        B_{n+1} = \sum_{k=0}^n \binom{n}{k} B_k, \qquad B_0=1,
    \end{align*}
    gives the number of all possible partitions of a set and can function as an approximation for the size of $\M_a$. For example an image with a width and height of 10 pixels has $n=100$ pixels in total.
    Its Bell number $B_{100}$ is then roughly $4.7 \cdot 10^{115}$.
    Thus, even for relatively small images there are already a large number of possible dead leaves partitions.
\end{rem*}

Whether we consider the discrete or continuous case based on a dead leaves partition we can always define the membership function as follows.

\begin{defi}[Membership function]\label{def:membership function}
    We define the \emph{membership function} of a dead leaves partition $\m_A=\{\v_{i}\}_{i\in I}$ as the mapping from points in $A$ to the leaf they are an element of
    \begin{align*}
        \mu \colon A \to I, x\mapsto i, \quad \text{s.\,t. } x\in \v_i.
    \end{align*}
\end{defi}

By definition the image of the membership function $\operatorname{Im}(\mu)$ is the index set of visible leaves in the model.
Continuing with our previous example we get the following process.

\begin{exone} \label{ex1: projection}
    Let $A=[0,2]^2$ be a visible area with pixels $a=\{0,1,2\}^2$, which with $9$ pixels already has a Bell number of $B_9 = 21,147$.
    Following definition~\ref{def:visible parts} of visible parts through orthographic projection and excluding parts of objects that fall outside of $a$ we get
    \begin{align*}
        \v_1 & = \{(1,2),(2,1),(1,1),(2,2)\}, \\
        \v_2 & = \{(0,0),(0,1),(1,0),(0,2)\}, \\
        \v_3 & = \{(2,0)\},
    \end{align*}
    with $\m_a=\{\v_1,\v_2,\v_3\}$ being a dead leaves partition of $a$.
    The membership function $\mu$ then has the following image

    \begin{center}
        \raisebox{-0.5\height}{
            \begin{tikzpicture}[scale = 0.5]
                \begin{scope}
                    \clip (0,0) rectangle (2,2);
                    \draw[fill=tud-0b, draw=tud-0c] (2.3,0.6) circle (1.8);
                    \draw[fill=tud-0b, draw=tud-0c] (0.3,0.6) circle (1.5);
                    \draw[fill=tud-0b, draw=tud-0c] (1.7,1.7) circle (1.1);
                \end{scope}
                \foreach \x in {0,1,2}{
                        \draw[tud-0a] (-1,{\x}) node {$\x$};
                        \draw[tud-0a] ({\x},-1) node {$\x$};
                    }
                \draw[tud-0a, very thin] (0,0) grid (2,2);
            \end{tikzpicture}}
        \quad $\overset{\mu(a)}{\longrightarrow}$ \quad
        \raisebox{-0.5\height}{
            \begin{tikzpicture}[scale = 0.5]
                \foreach \x in {0,1,2}{
                        \draw[tud-0a] (-1,{\x}) node {$\x$};
                        \draw[tud-0a] ({\x},-1) node {$\x$};
                    }

                \draw (0,0) node {$2$};
                \draw (0,1) node {$2$};
                \draw (1,0) node {$2$};
                \draw (1,1) node {$1$};
                \draw (2,0) node {$3$};
                \draw (2,1) node {$1$};
                \draw (2,2) node {$1$};
                \draw (0,2) node {$2$};
                \draw (1,2) node {$1$};
            \end{tikzpicture}}
    \end{center}
\end{exone}

The final step is to add color and texture to the leaves.

\subsubsection{Color \& texture} \label{sec: color texture}

Objects in the world can be made from different materials resulting in varying color, texture and opacity of its surface.
As noted before we will only consider opaque objects.
Each point in our model should therefore appear to have some kind of color which depends on the base color and texture of the object it belongs to.
In the literature various ways have been used to model the surface properties of leaves.
The most simple as employed by \citet{Goussea2003} and \citet{Lee2001} is to disregard texture and just randomly select a color for each leaf (Figure~\ref{img:dlm random color}).
\citet{Madhusudana2022} made this approach slightly more realistic by sampling colors from a color histogram of natural images (Figure~\ref{img:dlm sample color}).
Approaches to add texture get more complex.
\citet{Pitkow2010} generated texture by adding Gaussian noise to the color of every pixel of the image (Figure~\ref{img:dlm gaussian noise}).
More realistic textures were used by \citet{Madhusudana2022} who randomly chose a texture from the Brodatz texture database \cite{Brodatz1999} for each leaf and added it to the base color with alpha blending (Figure~\ref{img:dlm Brodatz texture}).
In recent work \citet{Achddou2025} generated color and textures by combining sinusoidal fields of different frequency between random colors and warping the resulting pattern, as well as blending it into different noise patterns.

For computational reasons, here we sample color and additive pixel-wise texture from a known distribution to generate what we will call a dead leaves image.

\begin{defi}[Dead leaves image]\label{def:image}
    For a given dead leaves partition $\m_A$ of the visible area $A$, let $(C_i)_{i\in I}$ be i.\,i.\,d. random vectors in some color space and $(T_x)_{x\in A}$ i.\,i.\,d. random vectors in some texture space of the same dimension.
    The \emph{random dead leaves image} $S$ is the coloration according to the membership function i.\,e.
    \begin{align*}
        S_x = C_{\mu(x)} + T_x
    \end{align*}
    where $C_i$ is the leaf's base color and $T_x$ generates some pixel-based texture.
\end{defi}

Hence, with this definition we generate a dead leaves image~$\mathbf{s}$ by sampling a color~$\mathbf{c_i}$ for each leaf in a given dead leaves partition and varying the specific pixel values by adding random noise~$\mathbf{t}_x$.

\begin{rem*}
    The color space could be continuous, for example like visible light in nanometers.
    For the generation as an image it is more realistic to have a discrete color space like 8-bit RGB.
    Since in this setting the color space is bounded and we add random texture, the texture distribution needs to be chosen such that reaching the bounds of our color space when adding texture is not very likely.
    If it happens we will cap the result at the bounds.
    Depending on the chosen color model this will result in slight biases towards specific colors.
\end{rem*}

\begin{figure}[ht]\centering
    \begin{subfigure}[t]{0.24\textwidth}\centering
        \caption{}\label{img:dlm random color}
        \includegraphics[width=\linewidth]{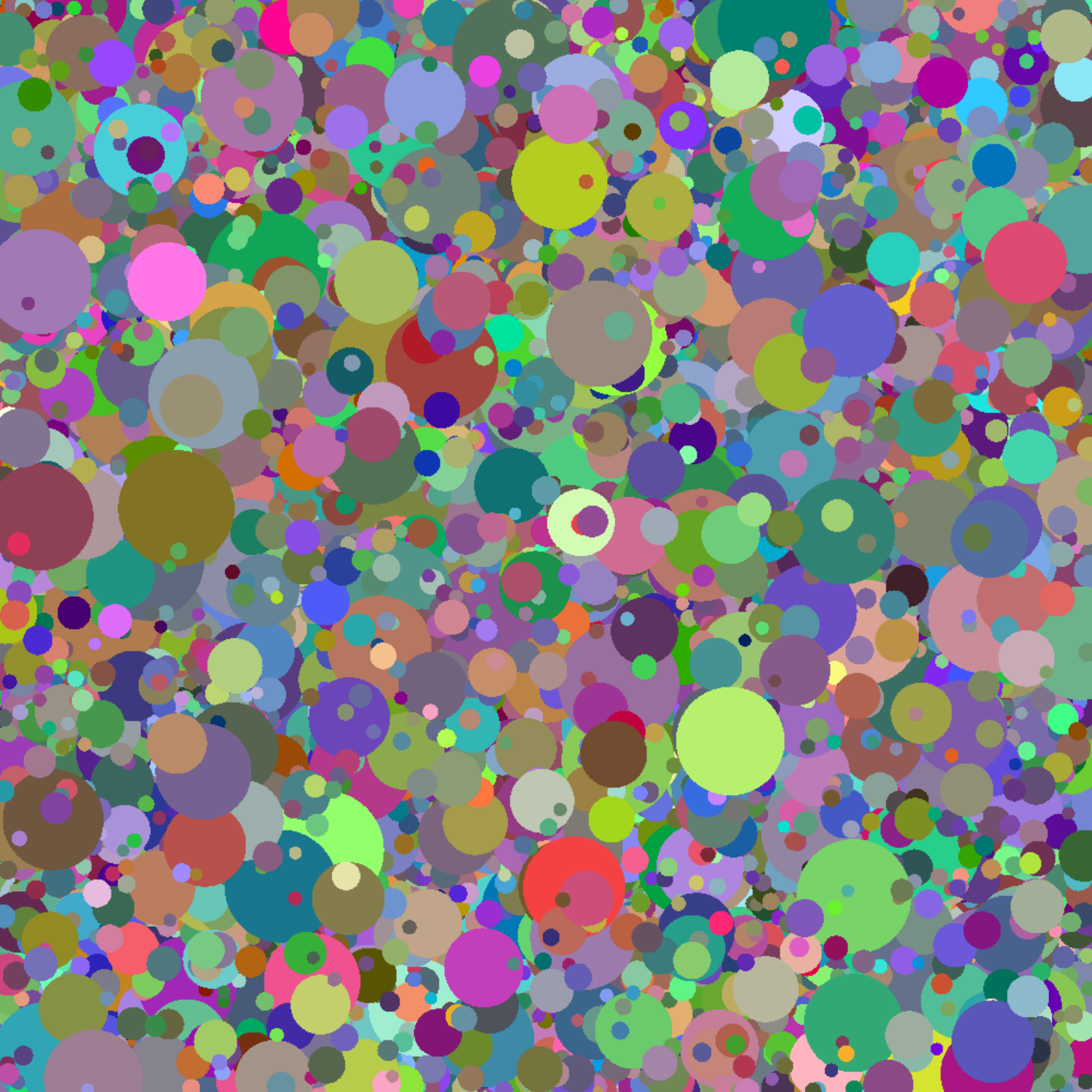}
    \end{subfigure}
    \hfill
    \begin{subfigure}[t]{0.24\textwidth}\centering
        \caption{}\label{img:dlm gaussian noise}
        \includegraphics[width=\linewidth]{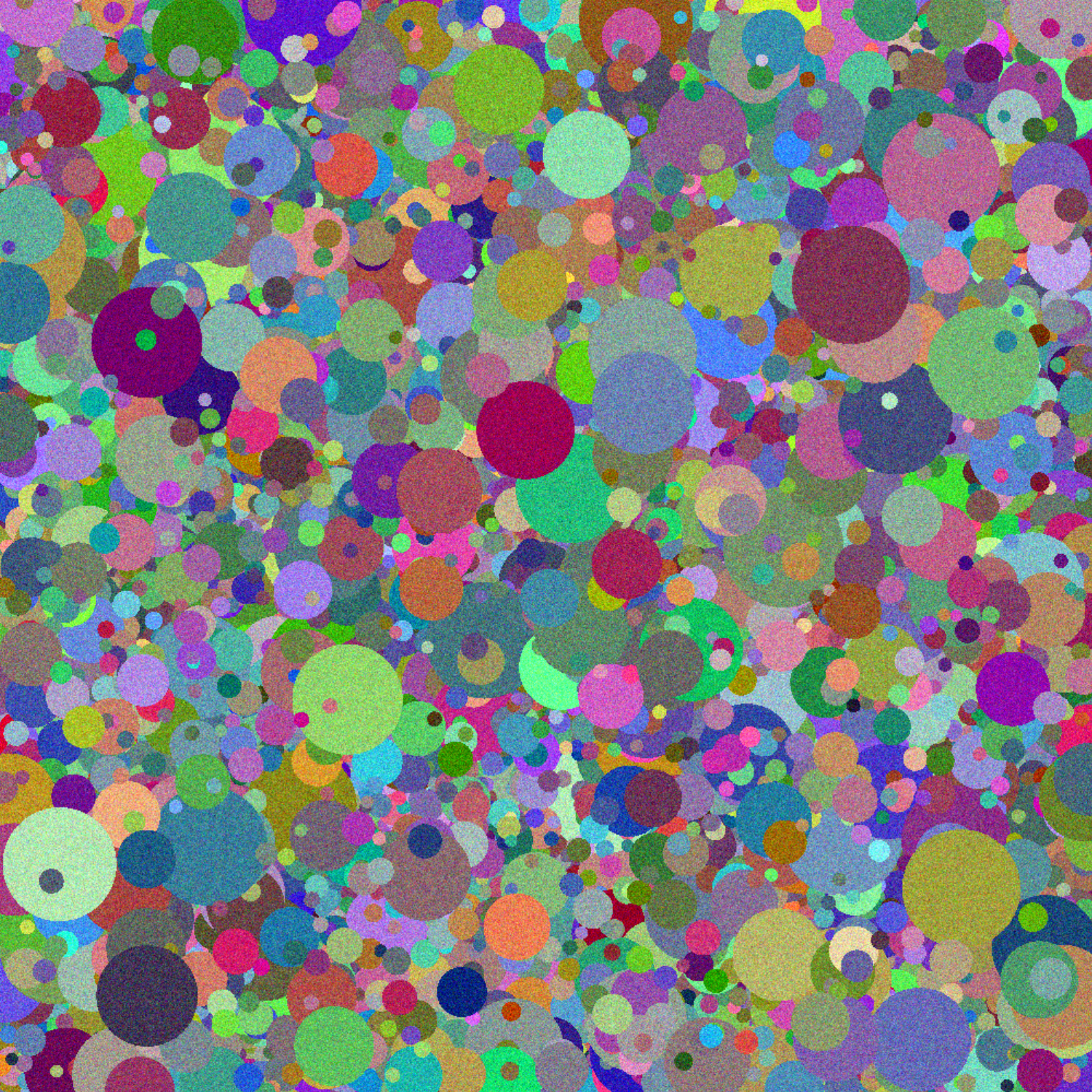}
    \end{subfigure}
    \hfill
    \begin{subfigure}[t]{0.24\textwidth}\centering
        \caption{}\label{img:dlm sample color}
        \includegraphics[width=\linewidth]{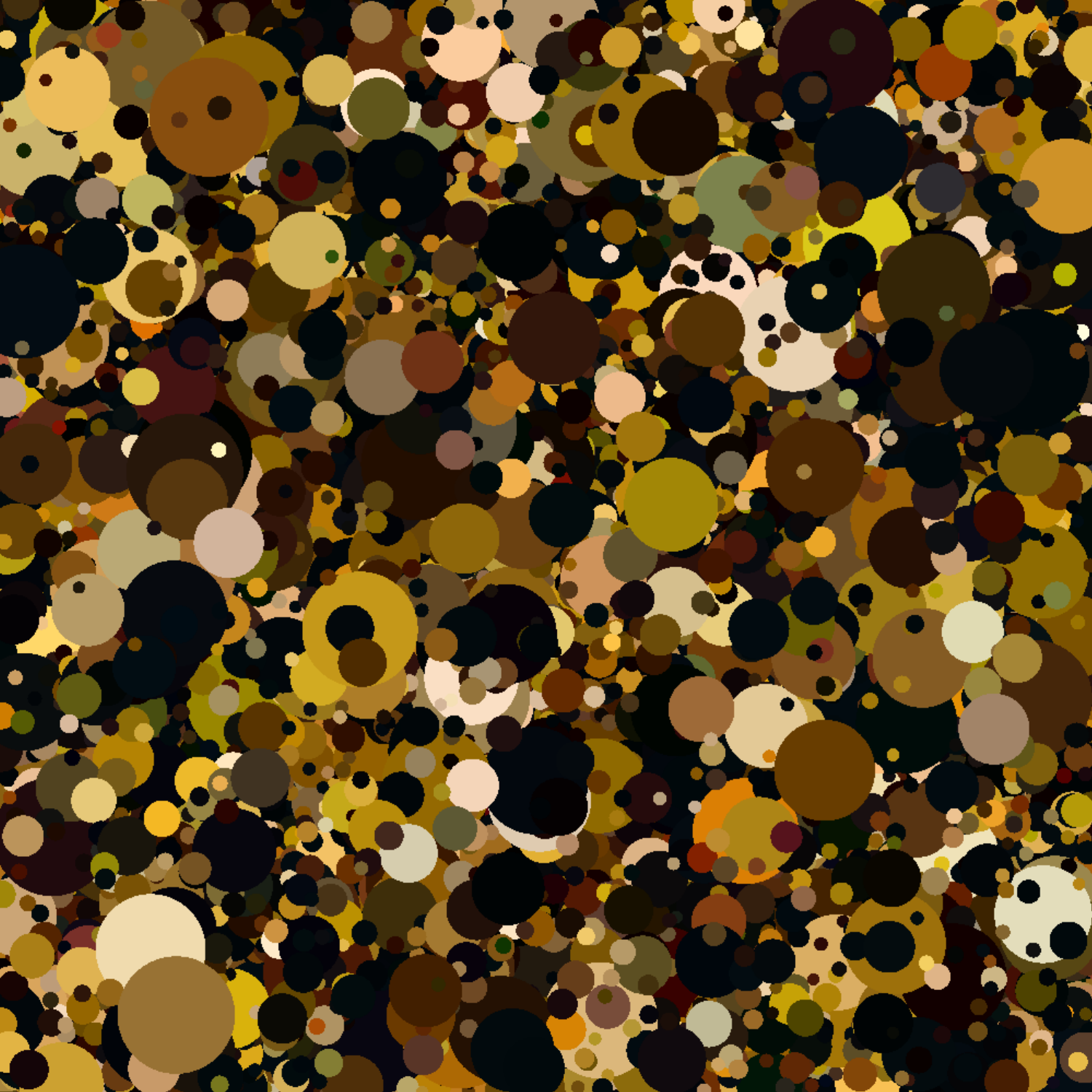}
    \end{subfigure}
    \hfill
    \begin{subfigure}[t]{0.24\textwidth}\centering
        \caption{}\label{img:dlm Brodatz texture}
        \includegraphics[width=\linewidth]{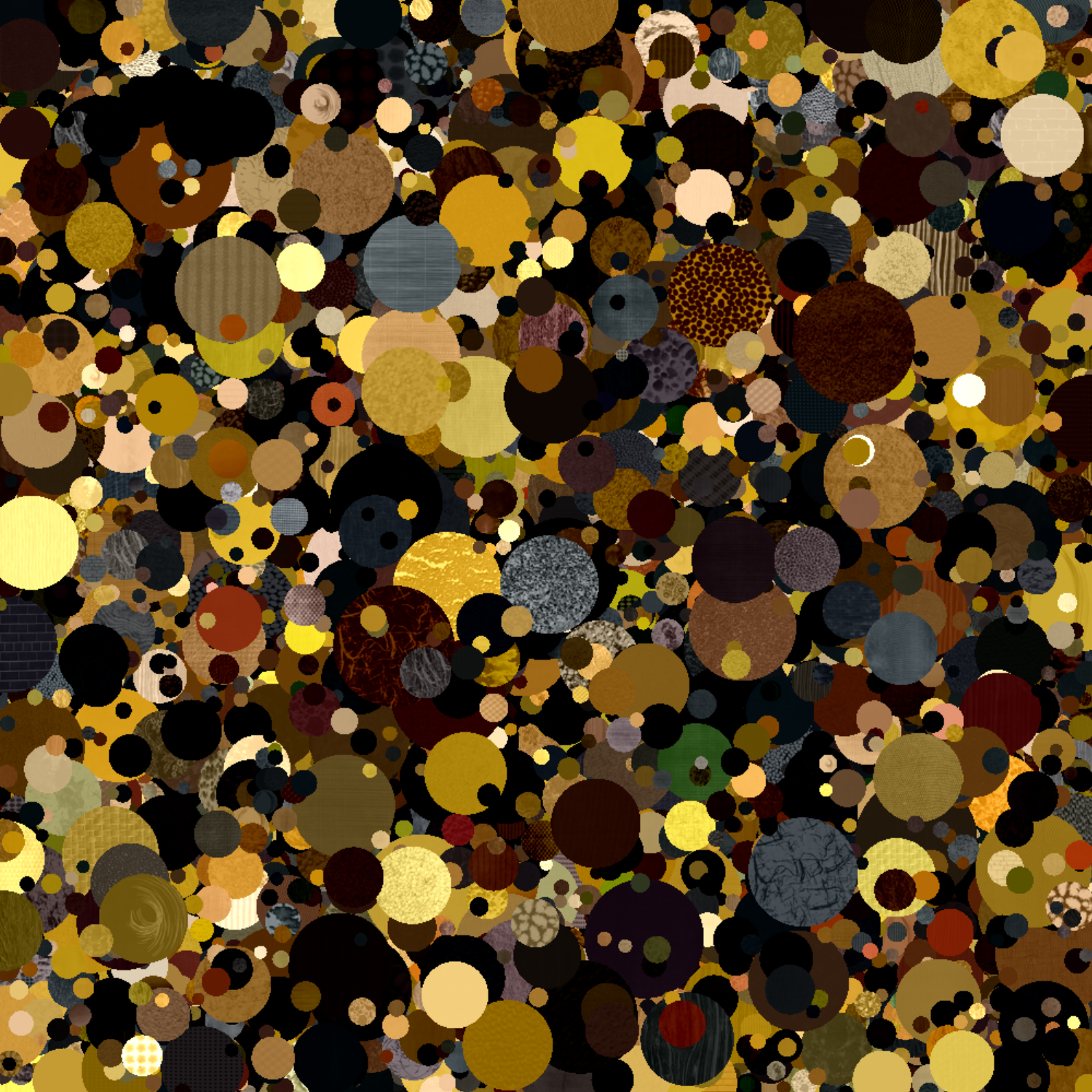}
    \end{subfigure}
    \caption{Dead leaves scene with
        (\subref{img:dlm random color}) random RGB color and
        (\subref{img:dlm gaussian noise}) Gaussian texture or with
        (\subref{img:dlm sample color}) color sampled from the histogram of Figure~\ref{img:dead_leaves} and
        (\subref{img:dlm Brodatz texture}) random Brodatz textures~\citep{Brodatz1999} blended onto leaves.}\label{fig: color texture}
\end{figure}

\begin{exone} \label{ex1: color texture}
    Continuing our previous three-leaf example and sampling HSV colors from $[0,1]^3$ we might get the following colors:
    \definecolor{c1}{rgb}{0.55,0.215,0.6}
    \definecolor{c2}{rgb}{0.26,0.34,0.58}
    \definecolor{c3}{rgb}{0.27,0.76,0.72}
    \begin{align*}
        \mathbf{c}_1 & = \textcolor{c1}{\blacksquare}, &
        \mathbf{c}_2 & = \textcolor{c2}{\blacksquare}, &
        \mathbf{c}_3 & = \textcolor{c3}{\blacksquare}.
    \end{align*}
    Adding these colors yields the following image (left image) with the discrete points colored according to their leaf membership (right image):
    \begin{center}
        \begin{tikzpicture}[scale = 0.5]
            \begin{scope}
                \clip (0,0) rectangle (2,2);
                \draw[fill=c3, draw=c3] (2.3,0.6) circle (1.8);
                \draw[fill=c2, draw=c2] (0.3,0.6) circle (1.5);
                \draw[fill=c1, draw=c1] (1.7,1.7) circle (1.1);
            \end{scope}
            \foreach \x in {0,1,2}{
                    \draw[tud-0a] (-1,{\x}) node {$\x$};
                    \draw[tud-0a] ({\x},-1) node {$\x$};
                }
            \draw[tud-0a, very thin] (0,0) grid (2,2);
        \end{tikzpicture}
        \qquad
        \begin{tikzpicture}[scale = 0.5]
            \foreach \x in {0,1,2}{
                    \draw[tud-0a] (-1,{\x}) node {$\x$};
                    \draw[tud-0a] ({\x},-1) node {$\x$};
                }
            \draw[tud-0a, very thin] (0,0) grid (2,2);
            \foreach \x in {(0,0),(1,0),(0,1),(0,2)}{
                    \node[point, fill=c2, draw=c2] at \x {};
                }

            \foreach \x in {(1,1),(1,2),(2,1),(2,2)}{
                    \node[point, fill=c1, draw=c1] at \x {};
                }

            \foreach \x in {(2,0)}{
                    \node[point, fill=c3, draw=c3] at \x {};
                }
        \end{tikzpicture}
    \end{center}
    If we fill a pixel with the corresponding color (for better visualization; left image) and add Gaussian texture to each of the points (right image) we get the following images:
    \begin{center}
        \begin{tikzpicture}[scale = 0.5]
            \foreach \x in {(0,0),(1,0),(0,1),(0,2)}{
                    \fill[c2] \x rectangle +(1,1);
                }

            \foreach \x in {(1,1),(1,2),(2,1),(2,2)}{
                    \fill[c1] \x rectangle +(1,1);
                }

            \foreach \x in {(2,0)}{
                    \fill[c3] \x rectangle +(1,1);
                }

            \foreach \x in {0,1,2}{
                    \draw[tud-0a] (-0.5,{\x+0.5}) node {$\x$};
                    \draw[tud-0a] ({\x+0.5},-0.5) node {$\x$};
                }
        \end{tikzpicture}
        \qquad
        \definecolor{t1}{rgb}{0.2577, 0.3319, 0.5822}
        \definecolor{t2}{rgb}{0.2571, 0.3444, 0.5866}
        \definecolor{t3}{rgb}{0.2714, 0.7642, 0.7442}
        \definecolor{t4}{rgb}{0.2688, 0.3927, 0.5983}
        \definecolor{t5}{rgb}{0.5611, 0.2157, 0.6222}
        \definecolor{t6}{rgb}{0.5631, 0.2172, 0.5969}
        \definecolor{t7}{rgb}{0.2536, 0.3439, 0.5845}
        \definecolor{t8}{rgb}{0.5559, 0.2208, 0.6352}
        \definecolor{t9}{rgb}{0.5406, 0.2098, 0.6082}
        \begin{tikzpicture}[scale = 0.5]
            \fill[t1] (0,0) rectangle +(1,1);
            \fill[t2] (1,0) rectangle +(1,1);
            \fill[t3] (2,0) rectangle +(1,1);
            \fill[t4] (0,1) rectangle +(1,1);
            \fill[t5] (1,1) rectangle +(1,1);
            \fill[t6] (2,1) rectangle +(1,1);
            \fill[t7] (0,2) rectangle +(1,1);
            \fill[t8] (1,2) rectangle +(1,1);
            \fill[t9] (2,2) rectangle +(1,1);
            \foreach \x in {0,1,2}{
                    \draw[tud-0a] (-0.5,{\x+0.5}) node {$\x$};
                    \draw[tud-0a] ({\x+0.5},-0.5) node {$\x$};
                }
        \end{tikzpicture}
    \end{center}
\end{exone}

\subsubsection{Model}

In total with this model we have multiple uncertainties: the object shape, its position, color and texture.
We can combine these to describe the full randomness of the process.
\begin{defi}[Dead leaves model]\label{def:model}
    Let $\mathcal{U}(B)$ be the uniform distribution over $B$ and $\mathcal{X}$ be a distribution of random shapes in $\R^2$ (as in definition~\ref{def:environment}).
    Let additionally $\mathcal{C}$ and $\mathcal{T}$ be distributions of color and texture.
    We then call the family of distributions
    \begin{align*}
        \M = \{\mathcal{U},\mathcal{X},\mathcal{C},\mathcal{T}\}
    \end{align*}
    a \emph{dead leaves model}.
\end{defi}

As mentioned in the real world the distributions are highly complex and unknown.
Therefore, we need to use some sort of simplified approximation.
A common measure for the quality of such an approximation are scene statistics.
The closer the statistics of the generated scene are to natural scene statistics the better.

\subsection{Scene statistics} \label{sec: scenes statistics}

The statistics of a scene such as for example its contrast distribution largely depend on the size of objects and how they occlude each other, which depends on the distribution of object position.
\citet{Lee2001} studied statistical properties of dead leaves images.
They found that dead leaves images share statistical properties with natural images when two conditions are fulfilled:
first, the leaves have uniformly distributed positions and second, they are circles with power law distributed radii ($f(r)\propto r^{-3}$).
\citet{Goussea2003} additionally give a theoretical reasoning for this choice of the radius distribution.
In particular, this power law distribution preserves the scale invariance of statistics as in natural scenes \citep{Lee2001,Goussea2003}.

The distribution of color in natural images can vary a lot depending on the image \citep{Webster1997}.
For example the seasons influences the average color of a scene as well as the overall distribution \citep{Webster2007}.
Therefore, there is no one simple distribution to generate a dead leaves image with colors similar to natural scenes.

Finally regarding texture, \citet{Madhusudana2022} and \citet{Achddou2025} found that scene statistics can be replicated even closer with dead leaves models, if the leaves are textured and not only a solid color.
Considering these results we arrive at the subsequent implementation.

\subsection{Implementation} \label{sec: dlm implementation}

Algorithm~\ref{alg:dlm} describes the generation of a dead leaves image in two steps.
First the dead leaves partition is generated and then color and texture is added to the resulting leaves.
Following the argumentation on scene statistics each leaf has the shape of a circle and radii are sampled from a distribution with
\begin{align*}
    f_X(r) \propto r^{-3}.
\end{align*}
In order to obtain visually interesting images minimal $r_{\min}$ and maximal $r_{\max}$ values for the samples of the radii are included\footnote{Without radius bounds we would obtain images containing only one leaf or images where each pixel is one individual leaf~\citep{Lee2001}}.
To obtain a probability density function, i.\,e. $\int_{-\infty}^{\infty} f_X(r) \mathbf{d}r = 1$, we also add a normalizing factor as done by \citet{Pitkow2010}.
We will refer to the resulting distribution $\mathcal{P}(r_{\min}, r_{\max})$ with density function
\begin{align}\label{eq:power law distribution}
    f_X(r) = \begin{cases}
                 \frac{2}{r_{\min}^{-2}-r_{\max}^{-2}}\cdot r^{-3}, & r_{\min} \leq r \leq r_{\max} \\
                 0,                                                 & \text{else,}
             \end{cases}
\end{align}
as \emph{Power law distribution} (Line~\ref{alg:dlm radius}).

The leaves' positions $\p$ are sampled from a uniform distribution on the square image with edge length $s$ and padding $r_{\max}$ in both directions (Line~\ref{alg:dlm location}).
The padding allows for leaves reaching into the image while having a centre outside of the image.
With the previous notation this means $A = [0,s]^2$ and $B = [-r_{\max}, s + r_{\max}]^2$.
The density function for the position is constant and then given by
\begin{align}
    f_P(p) = \frac{1}{\lambda(B)} = \frac{1}{(s+2\cdot r_{\max})^2}.
\end{align}

All points that are not already part of a leaf and fall into the generated leaf are labelled with the leaf's number (Line~\ref{alg:dlm membership function}).
New leaves are generated iteratively until the leaves cover the full image (Line~\ref{alg:dlm while}).

For the implementation of color and texture there are many options to choose from.
With choosing the RGB color model we would obtain a large variety in hues.
Using a HSV/HSL model alternatively would result in a the decoupling of color and luminance in the sampling.
We refrain from choosing specific color spaces and distributions here.
Therefore, colors are sampled from some distribution $\mathcal{C}$ with parameters $\theta_c$ (Line~\ref{alg:dlm color}).
Subsequently texture from some texture distribution $\mathcal{T}$ with parameters $\theta_t$ (Line~\ref{alg:dlm texture}) is added to each color channel on each pixel.

\begin{algorithm}\caption{Dead leaves image generation algorithm}\label{alg:dlm}
    \KwIn{$(r_{\min},r_{\max},s,\theta_c,\theta_t)$}
    $\operatorname{model} \leftarrow \operatorname{zeros}(s,s)$\;
    $i \leftarrow 1$\;
    \While{
    $\operatorname{any}(\operatorname{model} == 0)$ \label{alg:dlm while}
    }{
    $\mathbf{r} \sim \mathcal{P}(r_{\min}, r_{\max})$\; \label{alg:dlm radius}
    $\p \sim \mathcal{U}(0 - r_{\max},s + r_{\max},2)$\; \label{alg:dlm location}
    \If{
    $\operatorname{any}\left(\operatorname{model} \cap \operatorname{leaf}_{\mathbf{r},\p} == 0\right)$
    }{
    $\operatorname{model}\left[\operatorname{leaf}_{\mathbf{r},\p} \cap (\operatorname{model} == 0)\right] \leftarrow i$\; \label{alg:dlm membership function}
    $i \leftarrow i+1$\;
    }
    }
    $\operatorname{image} \leftarrow \operatorname{zeros}(s,s,3)$\;
    $\mathbf{c} \sim \mathcal{C}(\theta_c, i)$\; \label{alg:dlm color}
    $\mathbf{t} \sim \mathcal{T}(\theta_t, (s,s))$\; \label{alg:dlm texture}
    \For{
        $x,y=1,\dots,s$
    }{
        $\operatorname{image}_{x,y} \leftarrow \mathbf{c}[\operatorname{model}_{x,y}] + \mathbf{t}_{x,y}$\;
    }
    \KwOut{$\operatorname{model}, \operatorname{image}$}
\end{algorithm}

Using this algorithm to randomly generate dead leaves images will give a variety of results as depicted in Figure~\ref{img:random dlm}.
A different dead leaves generation algorithm based on similar ideas was proposed by \citet{Achddou2021}.

\begin{figure}[ht]\centering
    \begin{subfigure}{0.24\textwidth}
        \includegraphics[width=\linewidth]{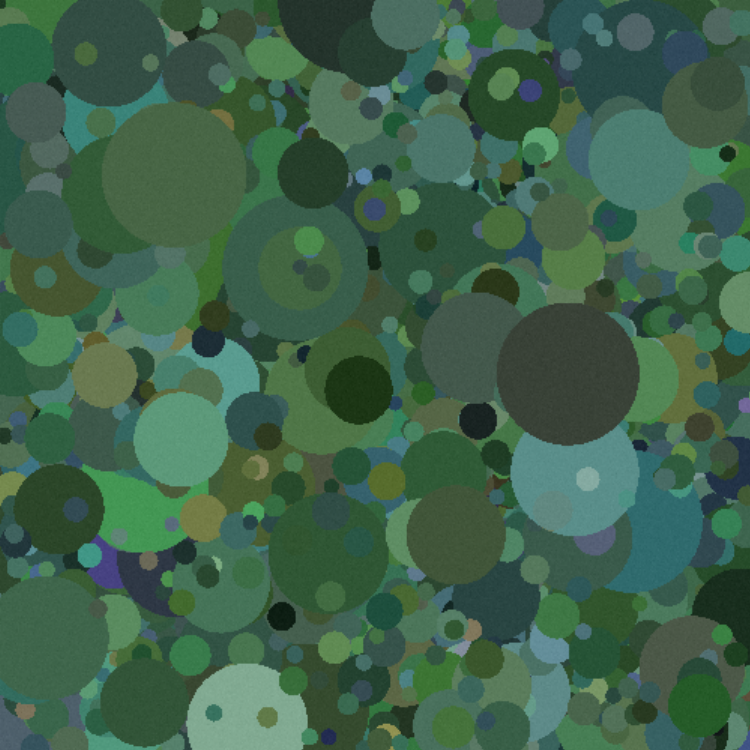}
    \end{subfigure}
    \hfill
    \begin{subfigure}{0.24\textwidth}
        \includegraphics[width=\linewidth]{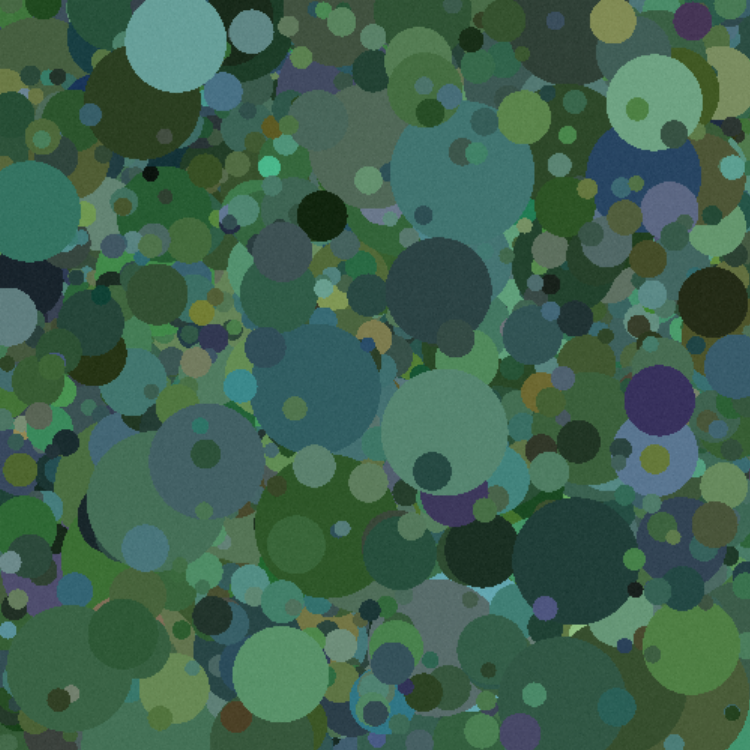}
    \end{subfigure}
    \hfill
    \begin{subfigure}{0.24\textwidth}
        \includegraphics[width=\linewidth]{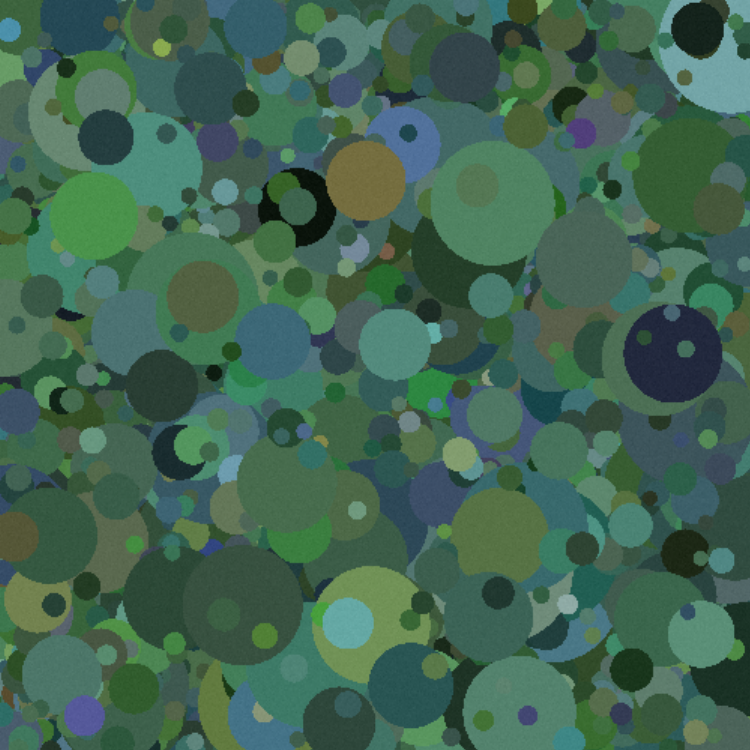}
    \end{subfigure}
    \hfill
    \begin{subfigure}{0.24\textwidth}
        \includegraphics[width=\linewidth]{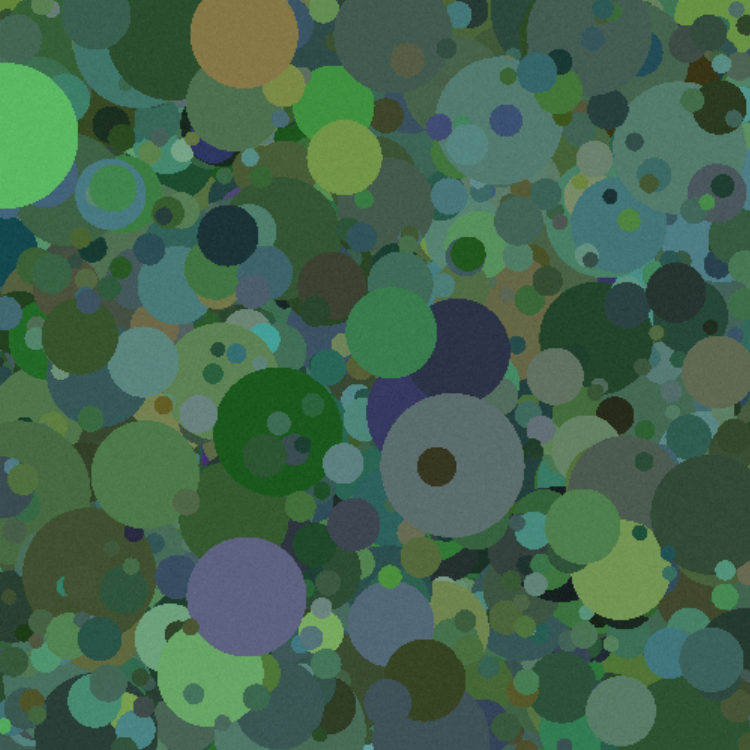}
    \end{subfigure}
    \caption{Random dead leaves image generated with algorithm~\ref{alg:dlm} with normally distributed colors and mean-zero Gaussian texture in HSV ranging from $0$ to $1$ ($s=500$\,px, $r_{\min}=5$\,px, $r_{\max}=50$\,px, $\mu_c=(0.5,0.5,0.5)$, $\sigma_c=(0.1,0.1,0.1)$, $\sigma_t=(0.05,0.05,0.05)$).}\label{img:random dlm}
\end{figure}

With the generation process of dead leaves images stated here we are now interested in the segmentation a human would assume when viewing such an image.
We will model this choice with a Bayesian ideal observer.

\section{Ideal observer} \label{sec: ideal observer}

When we consider a dead leaves image our goal now is to find the most probable dead leaves partition given our observation.
Following Bayes' Theorem the probability of a dead leaves partition $\m$ given some image $\mathbf{s}$ is
\begin{align}\label{eq:Bayes}
    \P_{M\mid S}\left(\m\mid \mathbf{s}\right) & = \frac{\P_{S\mid M}\left(\mathbf{s}\mid \m\right)\P_{M}\left(\m\right)}{\P_S\left(\mathbf{s}\right)}.
\end{align}
Since in a continuous distribution every single point has probability zero, the idea of discretization comes in handy at this point.
Let $a\subset A$ be a finite subset of the visible area, e.\,g. a grid of pixels, and let $\m_a = \{\v_{a,i}\}_{i\in[n]}$ be a partition of $a$.
We can now consider the set of dead leaves partitions on $A$ for which the reduction from $A$ to $a$ yields the partition $\m_a$:
\begin{align*}
    \mathbf{M}_a =
    \left\{ \begin{aligned}
                 & \m_A = \{\v_{A,k}\}_{k\in[m]} \text{ partition s.\,t.}                                                           \\
                 & \exists\text{ strictly monotone series } (k_i)_{i\in[n]}, k_i\in[m], \v_{A,k_i} \cap~a =\v_{a,i}~\forall i\in[n]
            \end{aligned}
    \right\} \subset \M_A.
\end{align*}
The probability of sampling $\m_a$ on $a$ from a random dead leaves partition $M_A$ is given through $\P(\m_a) = \P\left[M_A \in \mathbf{M}_a\right]$.
When we want to ignore the order of leafs in $\m_a$ we drop the condition that the sequence $k_i$ needs to be monotone.

We can restrict a partition $\m_a$ to any subset of $a$.
When doing so we drop any resulting empty sets from the partition, in particular $\m_{a\setminus \v_{a,i}} = \m_a \setminus \v_{a,i}$ is then a partition of $a\setminus \v_{a,i}$.

The most probable partition $\m_a$ on $a$ is then the partition which has the maximal posterior probability.
\begin{align}\label{eq:MAP}
    \begin{split}
        \m_a^{\ast}
         & \coloneq \argmax_{\m_a \in \M_a} \P\left(\m_a \mid \mathbf{s}\right)
        = \argmax_{\m_a \in \M_a}\frac{\P\left(\mathbf{s}\mid \m_a\right)\P(\m_a)}{\P(\mathbf{s})}
        = \argmax_{\m_a \in \M_a}\P\left(\mathbf{s}\mid \m_a\right)\P(\m_a)
    \end{split}
\end{align}

To derive the most probable partition for a fixed image we only require knowledge of the likelihood of an image ($\P(\mathbf{s}\mid \mathbf{M}_a)$) and the prior probability of possible partitions ($\P(\mathbf{M}_a)$) since the probability of the specific image ($\P(\mathbf{s})$) is independent of $\m_a$ for any given image $\mathbf{s}$.

In the subsequent sections we derive how these components can be computed based on the modelling introduced before.

\subsection{Prior} \label{sec: prior}
The prior probability of a partition is the probability of the partition's memberships occurring based on the generative process of the dead leaves partition.

All propositions (Theorems, Lemmas, etc.) in this section are reformulations of the work done by \cite{Pitkow2010}.
The proofs for Theorems~\ref{thm:prior decomposition}, and \ref{thm:leaf prob} as well as for Proposition~\ref{prop:constructive computation} are based on and extend the reasoning of \cite{Pitkow2010}.

\subsubsection{Decomposition of prior probability}

Since each partition is comprised of independent leaves, a key component to calculate the partition probability is the leaf probability, i.\,e. the probability of a random leaf being equal to a specific leaf or set of pixels.

\begin{thm}[Decomposition of prior probability] \label{thm:prior decomposition}
    Let $a$ be a finite set of visible points in $\R^2$ and $\m_a = \{\v_{a,i}\}_{i\in[n]}$ an ordered partition of $a$.
    Then the prior probability of $\m_a$ based on the previous dead leaves partition generation is given by
    \begin{align}
        \P(\m_a) = \prod_{i=1}^{n}\frac{\P(L_{a_{\setminus i-1},i}=\v_{a,i})}{\P(L_{a_{\setminus i-1},i}\neq \emptyset)}
    \end{align}
    where $a_{\setminus i} \coloneq a \setminus \v_{a,1},\dots,\v_{a,i}$ and $L_{a_{\setminus i-1},i}$ is a random leaf on $A$ reduced to points in $a_{\setminus i-1}$.
    If we ignore the order of the partition $\m_a$ the probability increases to
    \begin{align}
        \P(\m_a) = \sum_{j=1}^{n!} \prod_{i=1}^{n}\frac{\P(L_{a_{\setminus \pi_j(i-1)},\pi_j(i)}=\v_{a,\pi_j(i)})}{\P(L_{a_{\setminus \pi_j(i-1)},\pi_j(i)}\neq \emptyset)},
    \end{align}
    where $\{\pi_j\}_{j\in [n!]}$ is the list of all permutations of $\m_a$ and $a_{\setminus \pi_j(i-1)}\coloneq a \setminus \v_{a,\pi_j(1)},\dots,\v_{a,\pi_j(i)}$, i.\,e. the set of pixels remaining after removing the top $i-1$ leaves in the partition ordered according to $\pi_j$.
\end{thm}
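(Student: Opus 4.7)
The plan is to exploit the iid structure of the underlying leaf sequence $(L_j)_{j \in I}$, from which the ordered partition $\m_a$ on $a$ is obtained by filtering out leaves whose visible part does not intersect $a$ and then renumbering. I would proceed by induction on the number $n$ of visible leaves, producing one factor of the product at each step as the conditional probability that the $i$-th visible leaf on $a$ contributes exactly $\v_{a,i}$, given the first $i-1$ visible leaves.

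For the base case $i=1$, let $j^\ast = \min\{j \in I : L_j \cap a \neq \emptyset\}$; this index is finite almost surely by definition~\ref{def:partition}, since $M_A$ covers $A$ a.s. Because the $(L_j)$ are iid, $L_{j^\ast} \cap a$ obeys the standard first-success formula
\[
\P(L_{j^\ast} \cap a = \v_{a,1}) = \sum_{k \geq 1} \P(L_1 \cap a = \emptyset)^{k-1}\, \P(L_1 \cap a = \v_{a,1}) = \frac{\P(L_{a,1} = \v_{a,1})}{\P(L_{a,1} \neq \emptyset)},
\]
which matches the $i=1$ factor of the product.

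For the inductive step I would condition on the event $E_{i-1}$ that the first $i-1$ visible leaves on $a$, in depth order, are $\v_{a,1}, \dots, \v_{a,i-1}$. If $J_{i-1}$ denotes the index in $I$ of the $(i-1)$-st visible leaf, then the shifted sequence $(L_{J_{i-1}+k})_{k \geq 1}$ is still iid with the original leaf distribution and independent of $E_{i-1}$, by the strong-Markov-type property for iid sequences filtered by a stopping time. Furthermore, any subsequent leaf becomes visible on $a$ exactly when its reduction to the still-uncovered pixels $a_{\setminus i-1}$ is nonempty, and its visible part on $a$ then equals that reduction. Applying the base-case argument to this fresh iid subsequence yields the $i$-th factor $\P(L_{a_{\setminus i-1},i} = \v_{a,i}) / \P(L_{a_{\setminus i-1},i} \neq \emptyset)$; multiplying over $i$ gives the ordered formula.

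For the unordered statement, I would note that the dead leaves construction canonically produces an ordered partition (by depth), so the event ``the unordered partition on $a$ equals $\{\v_{a,1}, \dots, \v_{a,n}\}$'' is a disjoint union over the $n!$ possible depth orderings; summing the ordered formula over all permutations $\pi_j$ gives the claim (orderings that are geometrically inconsistent contribute zero since the corresponding factor $\P(L_{a_{\setminus \pi_j(i-1)},\pi_j(i)} = \v_{a,\pi_j(i)})$ vanishes). The main obstacle I anticipate is a clean formalization of the ``fresh iid subsequence'' step: although intuitive, it requires verifying that $J_{i-1}$ is a stopping time with respect to the natural filtration of $(L_j)$ and invoking the strong Markov property for iid sequences. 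Once this is isolated, ideally as a short preliminary lemma, the remainder of the argument reduces to bookkeeping of the conditional probabilities.
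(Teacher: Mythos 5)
Your proposal is correct and follows essentially the same route as the paper: the same layer-by-layer decomposition into factors $\P(L_{a_{\setminus i-1},i}=\v_{a,i})/\P(L_{a_{\setminus i-1},i}\neq\emptyset)$, justified by the i.i.d.\ restart property of the leaf sequence, with the unordered case handled by summing over permutations. The only presentational difference is that you obtain each factor by explicitly summing the first-success geometric series over the invisible leaves (and invoke the strong Markov property at the stopping time $J_{i-1}$), whereas the paper reaches the same ratio by conditioning on a single leaf via the law of total probability and solving the resulting linear identity $\P(\m_a)=\P(L_{a,1}=\v_{a,1})\P(\m_{a\setminus\v_{a,1}})+\P(L_{a,1}=\emptyset)\P(\m_a)$ — the closed form of your series.
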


The idea to prove this claim is to separate the prior probability using the law of total probability and going iteratively through the stack of leaves.

\begin{proof}
    We start by considering the set of pixels $\v_{a,1}\in\m_{a}$.
    By construction any random leaf on $a$ is an unoccluded random leaf.
    Let $L_1$ be such a leaf without occlusion which is distributed according to the pervious construction, i.e. with uniform position and Power law distributed radius.
    We then distinguish three cases:
    \begin{enumerate}[label = Case \arabic*:, leftmargin = 1.6cm]
        \item The leaf hits none of the selected pixels: \[L_{a,1} \coloneq L_1 \cap a = \emptyset,\]
        \item The leaf and the given partition match on the selected pixels: \[L_{a,1} = \v_{a,1}.\]
        \item The leaf and the given partition do not match: \[L_{a,1} \neq \v_{a,1} \text{ and } L_{a,1} \neq \emptyset.\]
    \end{enumerate}

    In this last case the partition $\m_a$ already becomes impossible, i.e. $\P(\m_a\mid L_{a,1} \neq \emptyset \text{ and } L_{a,1} \neq \v_{a,1}) = 0$.
    Hence, by the law of total probability we can separate the prior probability into the two other cases:

    \begin{align}\label{eq:top leaf prob}
        \begin{split}
            \P(\m_a) = & \P(\m_a \mid L_{a,1} = \v_{a,1})\P(L_{a,1} = \v_{a,1}) \\ &+ \P(\m_a \mid L_{a,1} =\emptyset)\P(L_{a,1} =\emptyset).
        \end{split}
    \end{align}
    By construction, the infinite series $\{L_i\}_{i\in \N}$ consists of independent and identically distributed leaves.
    Consequently, the leaves $\{L_i\}_{i\geq 2}$ have the same joint distribution as $\{L_i\}_{i\geq 1}$.
    It follows that the probability of generating $\m_a$ with leaves $\{L_i\}_{i\geq 1}$ is equal to the probability of generating $\m_a$ with leaves $\{L_i\}_{i\geq 2}$.
    Conditioning on any event that depends only on $L_1$ and that ensures $L_1$ does not intersect with $a$, has the same effect as switching from generating $\m_a$ with $\{L_i\}_{i\geq 1}$ to generating $\m_a$ with $\{L_i\}_{i\geq 2}$ and thus gives the same probability.
    Hence,
    \begin{align*}
        \P(\m_a) = \P(\m_a \mid L_1 \cap a = \emptyset) = \P(\m_a \mid L_{a,1} = \emptyset).
    \end{align*}
    We can apply the same reasoning to the smaller pixel set $a\setminus \v_{a,1}$.
    The condition $L_{a,1} = \v_{a,1}$ only depends on $L_1$ and ensures that $L_{a,1}$ does not intersect with $a\setminus \v_{a,1}$.
    Therefore we get
    \begin{align*}
        \P(\m_{a \setminus \v_{a,1}}) = \P(\m_{a \setminus \v_{a,1}} \mid L_{a,1} = \v_{a,1}) = \P(\m_a \mid L_{a,1} = \v_{a,1}).
    \end{align*}
    Solving equation~\ref{eq:top leaf prob} for $\P(\m_a)$ then comes down to
    \begin{align}
        \P(\m_a)
        = \frac{\P(L_{a,1}=\v_{a,1})}{1-\P(L_{a,1}=\emptyset)}\P(\m_{a\setminus \v_{a,1}})
        = \frac{\P(L_{a,1}=\v_{a,1})}{\P(L_{a,1}\neq\emptyset)}\P(\m_{a\setminus \v_{a,1}}).
        \label{eq:top leaf prop reduced}
    \end{align}

    Since, $\m_{a\setminus \v_{a,1}}$ is the dead leaves partition reduced to a now smaller set of points, we can apply the same logic as before.
    With the set getting smaller and smaller in each iteration we can continue this loop through the full partition of $a$ till $a\setminus \v_{a,1},\dots,\v_{a,n} = \emptyset$.
    We will write $a\setminus \v_{a,1},\dots,\v_{a,i}$ as $a_{\setminus i}$ for easier notation where $a_{\setminus 0} =a$.
    For each iteration $i\in[n]$ we get
    \begin{align}\label{eq:sub model prob}
        \P(\m_{a_{\setminus i-1}}) = \frac{\P(L_{a_{\setminus i-1}, i}=\v_{a,i})}{\P(L_{a_{\setminus i-1}, i} \neq \emptyset)}\P(\m_{a_{\setminus i}}).
    \end{align}
    Concatenating the subpartition probabilities (Eq.~\ref{eq:sub model prob}) and using $\P(\m_{a_{\setminus n}})=\P(\m_\emptyset)=1$ yields
    \begin{align}\label{eq:model prob}
        \P(\m_a) = \prod_{i=1}^n \frac{\P(L_{a_{\setminus i-1}, i}=\v_{a,i})}{\P(L_{a_{\setminus i-1}, i} \neq \emptyset)}.
    \end{align}

    The above equation assumes that the sets in $\m_a$ are ordered by depth, such that the first set $\v_{a,1}$ has no occlusion.
    If the sets in $\m_a$ are not ordered by depth we need to adjust equation~\ref{eq:top leaf prob} to allow for any possible ordering, i.\,e. permutation $\{\pi_j\}_{j\in [n!]}$, such that
    \begin{align*}
        \P(\m_a) = \sum_{j=1}^{n!} \prod_{i=1}^{n}\frac{\P(L_{a_{\setminus \pi_j(i-1)},\pi_j(i)}=\v_{a,\pi_j(i)})}{\P(L_{a_{\setminus \pi_j(i-1)},\pi_j(i)}\neq \emptyset)}.
    \end{align*}
\end{proof}

\begin{rem}
    The probability for a random leaf to be non-empty can be derived by summing the probabilities of hitting a given pixel set for all possible subsets of $a$:
    \begin{align*}
        \P(L_a \neq \emptyset) = \sum_{\v \in \mathcal{P}(a) \setminus \{\emptyset\}} \P(L_a = \v).
    \end{align*}
    In order to compute the prior probability of a dead leaves partition we therefore need to compute the leaf probability for all subsets of $a$ and its layers.
    Alternatively, we can compute the probability through
    \begin{align*}
        \P(L_a \neq \emptyset) = 1 - \P(L_a = \emptyset)
    \end{align*}
    We can think of the term $\P(L_a \neq \emptyset)$ as a normalization factor since it removes the dependence of the probability on the density of $a$ in $A$.
\end{rem}

To get a better intuition of this result, we will have a look at our example from the construction.
Keep in mind that for this computation we only know the partition of $a$ which resulted from the construction in example~\ref{ex1: projection}.

\begin{exone}
    Recall our previous example with nine pixels, $a=[0,1,2]^2$, and three leaves
    \begin{align*}
        \v_1 & = \{(1,2),(2,1),(1,1),(2,2)\}, \\
        \v_2 & = \{(0,0),(0,1),(1,0),(0,2)\}, \\
        \v_3 & = \{(2,0)\},
    \end{align*}
    This model is visualized in Figure~\ref{fig:exone m_a}.
    Since we know the order of the leaves we can use the above Theorem to get
    \begin{align*}
        \P(\m_a) = \frac{\P(L_{a,1} = \v_{a,1})}{\P(L_{a,1}\neq \emptyset)} \cdot \frac{\P(L_{a_{\setminus 1}, 2}=\v_{a,2})}{\P(L_{a_{\setminus 1},2}\neq \emptyset)} \cdot \frac{\P(L_{a_{\setminus 2}, 3}=\v_{a,3})}{\P(L_{a_{\setminus 2},3}\neq \emptyset)}.
    \end{align*}
    As the above remark states the probabilities in the denominators are given through summing the probabilities for all possible subsets of $a$.
    Since this is a very large number we will focus on the subsets that are leaves in our model.
    The first leaf probability $\P(L_{a,1}=\v_{a,1})$ is the probability of having a leaf which includes all pixels in $\v_{a,1}$ while excluding all remaining pixels of $a$ (Figure~\ref{fig:exone a v1}).
    We then go to the next level by removing the pixels of $\v_{a,1}$.
    The next leaf probability then is $\P(L_{a_{\setminus 1},2}=\v_{a,2})$, i.\,e. the probability of having a leaf which includes all pixels in $\v_{a,2}$ and excludes all remaining pixels of $a\setminus \v_{a,1}$ (Figure~\ref{fig:exone a_1 v2}).
    The last layer then only has the pixels of $\v_{a,3}$ left, $a_{\setminus 2} = \v_{a,3}$.
    So the last leaf probability $\P(L_{a_{\setminus 2},3} = \v_{a,3})$ is only the probability of having a leaf which includes all remaining pixels, since we do not have any pixels we are not allowed to hit (Figure~\ref{fig:exone a_12 v3}).
    If we did not know the order of the leaves we would have to perform this computation for all possible orders of leaves which results in a lot more cases to cover (Figure~\ref{fig:exone a v1}--\ref{fig:exone a_32 v1}).
\end{exone}

\begin{figure*}[ht]\centering
    \begin{subfigure}{2.3cm}\centering
        \caption{}\label{fig:exone m_a}
        \definecolor{c1}{rgb}{0.55,0.215,0.6}
        \definecolor{c2}{rgb}{0.26,0.34,0.58}
        \definecolor{c3}{rgb}{0.27,0.76,0.72}
        \begin{tikzpicture}[scale=0.5]
            \foreach \x in {(1,1),(1,2),(2,1),(2,2)}{
                    \node[point, fill=c1, draw=c1] at \x {};
                }
            \foreach \x in {(0,0),(1,0),(0,1),(0,2)}{
                    \node[point, fill=c2, draw=c2] at \x {};
                }
            \foreach \x in {(2,0)}{
                    \node[point, fill=c3, draw=c3] at \x {};
                }
        \end{tikzpicture}
    \end{subfigure}\vspace{2mm}

    \begin{tabularx}{\textwidth}{lXXXXXX}
        \toprule
        $1$st level                         & \multicolumn{2}{c}{
            \begin{subfigure}{2.3cm}\centering
                \caption{}\label{fig:exone a v1}
                \begin{tikzpicture}[scale=0.5]
                    \foreach \x in {(1,1),(1,2),(2,1),(2,2)}{
                            \node[included point] at \x {};
                        }

                    \foreach \x in {(0,0),(1,0),(0,1),(0,2)}{
                            \node[excluded point] at \x {};
                        }

                    \foreach \x in {(2,0)}{
                            \node[excluded point] at \x {};
                        }
                \end{tikzpicture}
            \end{subfigure}
        }                                   &
        \multicolumn{2}{c}{
            \begin{subfigure}{2.3cm}\centering
                \caption{}\label{fig:exone a v2}
                \begin{tikzpicture}[scale=0.5]
                    \foreach \x in {(1,1),(1,2),(2,1),(2,2)}{
                            \node[excluded point] at \x {};
                        }

                    \foreach \x in {(0,0),(1,0),(0,1),(0,2)}{
                            \node[excluded point] at \x {};
                        }

                    \foreach \x in {(2,0)}{
                            \node[included point] at \x {};
                        }
                \end{tikzpicture}
            \end{subfigure}
        }                                   &
        \multicolumn{2}{c}{
            \begin{subfigure}{2.3cm}\centering
                \caption{}\label{fig:exone a v3}
                \begin{tikzpicture}[scale=0.5]
                    \foreach \x in {(1,1),(1,2),(2,1),(2,2)}{
                            \node[excluded point] at \x {};
                        }

                    \foreach \x in {(0,0),(1,0),(0,1),(0,2)}{
                            \node[included point] at \x {};
                        }

                    \foreach \x in {(2,0)}{
                            \node[excluded point] at \x {};
                        }
                \end{tikzpicture}
            \end{subfigure}
        }
        \\ \midrule
        $2$nd level                         &
        \begin{subfigure}{2.3cm}\centering
            \caption{}\label{fig:exone a_1 v2}
            \begin{tikzpicture}[scale=0.5]
                \foreach \x in {(1,1),(1,2),(2,1),(2,2)}{
                        \node[irrelevant point] at \x {};
                    }

                \foreach \x in {(0,0),(1,0),(0,1),(0,2)}{
                        \node[included point] at \x {};
                    }

                \foreach \x in {(2,0)}{
                        \node[excluded point] at \x {};
                    }
            \end{tikzpicture}
        \end{subfigure}
                                            &
        \begin{subfigure}{2.3cm}\centering
            \caption{}\label{fig:exone a_1 v3}
            \begin{tikzpicture}[scale=0.5]
                \foreach \x in {(1,1),(1,2),(2,1),(2,2)}{
                        \node[irrelevant point] at \x {};
                    }

                \foreach \x in {(0,0),(1,0),(0,1),(0,2)}{
                        \node[excluded point] at \x {};
                    }

                \foreach \x in {(2,0)}{
                        \node[included point] at \x {};
                    }
            \end{tikzpicture}
        \end{subfigure}
                                            &
        \begin{subfigure}{2.3cm}\centering
            \caption{}\label{fig:exone a_2 v1}
            \begin{tikzpicture}[scale=0.5]
                \foreach \x in {(1,1),(1,2),(2,1),(2,2)}{
                        \node[included point] at \x {};
                    }

                \foreach \x in {(0,0),(1,0),(0,1),(0,2)}{
                        \node[excluded point] at \x {};
                    }

                \foreach \x in {(2,0)}{
                        \node[irrelevant point] at \x {};
                    }
            \end{tikzpicture}
        \end{subfigure}
                                            &
        \begin{subfigure}{2.3cm}\centering
            \caption{}\label{fig:exone a_2 v3}
            \begin{tikzpicture}[scale=0.5]
                \foreach \x in {(1,1),(1,2),(2,1),(2,2)}{
                        \node[excluded point] at \x {};
                    }

                \foreach \x in {(0,0),(1,0),(0,1),(0,2)}{
                        \node[included point] at \x {};
                    }

                \foreach \x in {(2,0)}{
                        \node[irrelevant point] at \x {};
                    }
            \end{tikzpicture}
        \end{subfigure}
                                            &
        \begin{subfigure}{2.3cm}\centering
            \caption{}\label{fig:exone a_3 v1}
            \begin{tikzpicture}[scale=0.5]
                \foreach \x in {(1,1),(1,2),(2,1),(2,2)}{
                        \node[included point] at \x {};
                    }

                \foreach \x in {(0,0),(1,0),(0,1),(0,2)}{
                        \node[irrelevant point] at \x {};
                    }

                \foreach \x in {(2,0)}{
                        \node[excluded point] at \x {};
                    }
            \end{tikzpicture}
        \end{subfigure}
                                            &
        \begin{subfigure}{2.3cm}\centering
            \caption{}\label{fig:exone a_3 v2}
            \begin{tikzpicture}[scale=0.5]
                \foreach \x in {(1,1),(1,2),(2,1),(2,2)}{
                        \node[excluded point] at \x {};
                    }

                \foreach \x in {(0,0),(1,0),(0,1),(0,2)}{
                        \node[irrelevant point] at \x {};
                    }

                \foreach \x in {(2,0)}{
                        \node[included point] at \x {};
                    }
            \end{tikzpicture}
        \end{subfigure}                         \\
        \midrule
        $3$rd level                         &
        \begin{subfigure}{2.3cm}\centering
            \caption{}\label{fig:exone a_12 v3}
            \begin{tikzpicture}[scale=0.5]
                \foreach \x in {(1,1),(1,2),(2,1),(2,2)}{
                        \node[irrelevant point] at \x {};
                    }

                \foreach \x in {(0,0),(1,0),(0,1),(0,2)}{
                        \node[irrelevant point] at \x {};
                    }

                \foreach \x in {(2,0)}{
                        \node[included point] at \x {};
                    }
            \end{tikzpicture}
        \end{subfigure} &
        \begin{subfigure}{2.3cm}\centering
            \caption{}\label{fig:exone a_13 v2}
            \begin{tikzpicture}[scale=0.5]
                \foreach \x in {(1,1),(1,2),(2,1),(2,2)}{
                        \node[irrelevant point] at \x {};
                    }

                \foreach \x in {(0,0),(1,0),(0,1),(0,2)}{
                        \node[included point] at \x {};
                    }

                \foreach \x in {(2,0)}{
                        \node[irrelevant point] at \x {};
                    }
            \end{tikzpicture}
        \end{subfigure} &
        \begin{subfigure}{2.3cm}\centering
            \caption{}\label{fig:exone a_21 v3}
            \begin{tikzpicture}[scale=0.5]
                \foreach \x in {(1,1),(1,2),(2,1),(2,2)}{
                        \node[irrelevant point] at \x {};
                    }

                \foreach \x in {(0,0),(1,0),(0,1),(0,2)}{
                        \node[included point] at \x {};
                    }

                \foreach \x in {(2,0)}{
                        \node[irrelevant point] at \x {};
                    }
            \end{tikzpicture}
        \end{subfigure} &
        \begin{subfigure}{2.3cm}\centering
            \caption{}\label{fig:exone a_23 v1}
            \begin{tikzpicture}[scale=0.5]
                \foreach \x in {(1,1),(1,2),(2,1),(2,2)}{
                        \node[included point] at \x {};
                    }

                \foreach \x in {(0,0),(1,0),(0,1),(0,2)}{
                        \node[irrelevant point] at \x {};
                    }

                \foreach \x in {(2,0)}{
                        \node[irrelevant point] at \x {};
                    }
            \end{tikzpicture}
        \end{subfigure} &
        \begin{subfigure}{2.3cm}\centering
            \caption{}\label{fig:exone a_31 v2}
            \begin{tikzpicture}[scale=0.5]
                \foreach \x in {(1,1),(1,2),(2,1),(2,2)}{
                        \node[irrelevant point] at \x {};
                    }

                \foreach \x in {(0,0),(1,0),(0,1),(0,2)}{
                        \node[irrelevant point] at \x {};
                    }

                \foreach \x in {(2,0)}{
                        \node[included point] at \x {};
                    }
            \end{tikzpicture}
        \end{subfigure} &
        \begin{subfigure}{2.3cm}\centering
            \caption{}\label{fig:exone a_32 v1}
            \begin{tikzpicture}[scale=0.5]
                \foreach \x in {(1,1),(1,2),(2,1),(2,2)}{
                        \node[included point] at \x {};
                    }

                \foreach \x in {(0,0),(1,0),(0,1),(0,2)}{
                        \node[irrelevant point] at \x {};
                    }

                \foreach \x in {(2,0)}{
                        \node[irrelevant point] at \x {};
                    }
            \end{tikzpicture}
        \end{subfigure}
    \end{tabularx}
    \caption{Example for prior probability decomposition.
    (\subref{fig:exone m_a}) Dead leaves model on $a = [0,1,2]^2$ from the previous model.
    The leaf membership of each pixel is indicated through color.
    (\subref{fig:exone a v1}) - (\subref{fig:exone a_32 v1}) For each iteration we have two reference points: the over all set of pixels and the leaf we are trying to match.
    With each iteration we move one level down through the model ($1$st to $3$rd level) and only consider the remaining pixels.
    For each layer pixels are marked green if they should be part of the target leaf and red if not.
    Grey pixels were already allocated in a previous level and are there for not relevant.
    }
\end{figure*}

Following the notation of previous work \citep{Matheron1968,Goussea2003,Pitkow2010} we can denote the leaf or inclusion probabilities
\footnote{Originally \citet{Matheron1968} used $Q(K)$ to denote the probability that a compact set $K$ is part of some leaf. \citet{Goussea2003} extended his computation of this probability to a series of compact sets. Hence, the term \emph{inclusion probability} was used.}
as
\begin{align*}
    Q_{a,i}\colon \mathcal{P}(a) \to [0,1], \v \mapsto \P(L_{a,i} = \v).
\end{align*}

Using this notation equation~\ref{eq:top leaf prop reduced} matches the result of \citet[Eq.~14]{Pitkow2010}:
\begin{align*}
    \P(\m_a) = \frac{Q_{a,1}(\v_{a,1})}{1-Q_{a,1}(\emptyset)} \P(\m_{a_{\setminus 1}}).
\end{align*}
Equation~\ref{eq:model prob} can be written as
\begin{align*}
    \P(\m_a) = \prod_{i=1}^n\frac{Q_{a_{\setminus i-1},i}(\v_{a,i})}{1-Q_{a_{\setminus i-1},i}(\emptyset)}.
\end{align*}

\subsubsection{Leaf probabilities}

To derive the necessary probabilities of each leaf we can use the law of total probability to decompose the probability into integrals over the radius and the position of the leaf (this equation is in particular true for $\v=\emptyset$):

\begin{align}\label{eq:leaf prob}
    \P(L_a = \v) = \int_{-\infty}^{\infty} f_X(r) \int_{\R^2} f_P(p)\P(L_a(r,p) = \v\mid r,p)\,\mathbf{d}p\mathbf{d}r.
\end{align}

Since a random leaf with fixed radius and position is not random any more, the probability $\P(L_a(r,p)=\v \mid r,p)$ only takes values in $\{0,1\}$ depending on $r$, $p$, $a$, and $\v$.
A key point in computing this integral will be to consider the area in $\R^2$ where a leaf can be placed such that $L_a(r,p) = \v$, i.\,e. where $\P(L_a(r,p)=\v \mid r,p)=1$.
Therefore, we will first spend some time on showing how this area and especially, its boundary is constructed.

\begin{rem}
    The analytical solution of these integrals is possible for the scenario setup here and we will derive it in the following pages.
    However, we can also approximate the leaf probability by employing discretizations of the probability distributions and summing instead of taking the integral.
    This can be done only for the positions (half discrete), e.g. with a grid approximation, or for both positions and radius (full discrete).
    The probability is then computed by counting for each radius which positions of the grid are possible ($\P(L_a(r,p)=\v \mid r,p)$) and computing the weighted sums.
    An advantage of such an approximation is that we can compute it for any leaf shape as long as we can apply a mask to the grid of positions.
    In particular, this is possible for shapes for which we cannot derive a closed analytical solution to the integral.
\end{rem}

\paragraph{Area of possible positions}

\begin{lem}[Area of possible positions]\label{lem:area of possible positions}
    Let $\v\in \mathcal{P}(a)$ be a set of pixels. The area of possible leaf positions for $r \in \R$ is then given by
    \begin{align*}
        \p_{r,\v} & = \{p\in\R^2 \mid L_a(r,p)=\v\}
        = \bigcap_{x\in\v} L(r,x) \setminus \bigcup_{x\in a\setminus\v} L(r,x)                                                         \\
                  & = \{p \in \R^2 \mid \norm{p - x}\leq r~\forall x\in\v  \text{ and } \norm{p - x} > r~\forall x\in a\setminus \v\}.
    \end{align*}
    Additionally the area of impossible leaf positions for an empty leaf is
    \begin{align*}
        \bar{\p}_{r,\emptyset} = \{p\in\R^2\mid L_a(r,p)\neq \emptyset\} = \bigcup_{x\in a}L(r,x) = \{p\in\R^2 \mid \norm{p-x}\leq r~\forall x\in a\}.
    \end{align*}
    In particular, for all $r \leq r_{\max}$ and $\v\neq \emptyset$ we have $\p_{r,\v} \subset B$ and $\bar{\p}_{r,\emptyset} \subset B$.
\end{lem}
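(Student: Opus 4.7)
The plan is to unpack the definition $L_a(r,p) = L(r,p)\cap a$ and exploit the rotational symmetry of the disk to translate a membership statement about pixels into a geometric constraint on the leaf position $p$. Since leaves are disks of radius $r$, the key observation is that for any pixel $x\in a$ we have $x\in L(r,p)\iff \|x-p\|\leq r\iff p\in L(r,x)$; that is, membership of $x$ in the leaf can be rewritten as membership of $p$ in a disk centred at $x$ with the same radius.

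For the first identity, the event $L_a(r,p)=\v$ is exactly the conjunction ``every pixel of $\v$ lies in $L(r,p)$ and no pixel of $a\setminus\v$ does''. Rephrasing each condition via the symmetry above yields ``$p\in L(r,x)$ for all $x\in\v$'' together with ``$p\notin L(r,x)$ for all $x\in a\setminus\v$'', which is precisely membership in $\bigcap_{x\in\v}L(r,x)\setminus\bigcup_{x\in a\setminus\v}L(r,x)$. Writing each $L(r,x)$ as a closed ball of radius $r$ about $x$ then gives the metric form, with the asymmetric boundary convention ($\leq$ on the included side, $>$ on the excluded side) inherited from the random closed set construction (cf.\ the remark after Definition~\ref{def:visible parts}). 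For $\bar{\p}_{r,\emptyset}$ the argument is the same but simpler: $L_a(r,p)\neq\emptyset$ iff some $x\in a$ lies in $L(r,p)$, iff $p\in L(r,x)$ for some $x\in a$, iff $p\in\bigcup_{x\in a}L(r,x)$.

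For the containment in $B$, I would pick any $x_0\in\v$ (which exists since $\v\neq\emptyset$) or any $x_0\in a$, respectively. Because $a\subset A=[0,s]^2$, both coordinates of $x_0$ lie in $[0,s]$. Any $p$ in $\p_{r,\v}$ or $\bar{\p}_{r,\emptyset}$ then satisfies $\|p-x_0\|\leq r\leq r_{\max}$, so each coordinate of $p$ is within $r_{\max}$ of a number in $[0,s]$, hence $p\in[-r_{\max},s+r_{\max}]^2=B$.

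I do not expect a genuine obstacle here: the proof is a set-theoretic unpacking followed by a coordinate-wise distance estimate. The only point requiring care is the asymmetric boundary convention, which must be tracked consistently when expanding the definition of $L(r,p)$ so that a pixel lying exactly on the circle is treated as included, never excluded.
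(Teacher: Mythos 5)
Your proposal is correct and follows essentially the same route as the paper's proof: the reflection equivalence $x\in L(r,p)\iff p\in L(r,x)$, the set-theoretic unpacking of $L_a(r,p)=\v$ into inclusion and exclusion constraints, and the padding argument for $\p_{r,\v}\subset B$. The only nitpick is that for $\bar{\p}_{r,\emptyset}$ the witness pixel $x_0$ with $\norm{p-x_0}\leq r$ is existentially quantified (it depends on $p$), not arbitrary, but this does not affect the containment conclusion.
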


\begin{proof}
    For a given pixel $x\in a$ and fixed radius $r$ the pixel lies within the leaf $L_a(r,p)$ if the distance between $x$ and $p$ is smaller then $r$ (Figure~\ref{fig: leaf position single point})\footnote{
        For a arbitrary leaf shape the pixel lies within the leaf if the position of the leaf is within the point reflection of the the leaf at $x$. While for circles this reflexion equals the original shape, for non circular shapes the corresponding set of positions is a point reflexion of the shape across the position $p$.
    }:
    \begin{align*}
        x \in L_a(r,p) \iff \norm{p-x} \leq r \iff p \in L(r,x).
    \end{align*}
    In particular, the area $L(r,x)$ is now not limited to points of $a$, but is a simply connected subset of $\R^2$.
    A set of pixels $\v\in \mathcal{P}(a)$ is part of a leaf $L_a(r,p)$ if the position $p$ is within the intersection of possible positions for each single pixel (Figure~\ref{fig: leaf position multiple points}), in other words if $\norm{p-x}\leq r$ for all $x\in \v$.
    The set of pixels equals the leaf if the leaf furthermore does not include any pixels of $a$ which are not in $\v$ (Figure~\ref{fig: leaf position set}), i.\,e. if $\norm{p-x} > r$ for all $x \in a\setminus \v$.
    Similarly, if we want to avoid the empty set any position $p$ within a radius of $r$ around any of the points in $a$ is feasible (Figure~\ref{fig: leaf position non-empty set}), resulting in the area of impossible positions above.
    Finally, for any position $p \in \p_{r,\v}$ or $p \in \bar{\p}_{r,\emptyset}$ there is a pixel $x \in a \subset A$ such that $\norm{x-p} \leq r \leq r_{\max}$. Since $B$ has a padding of $r_{\max}$ around $A$ it follows directly that $p \in B$.
\end{proof}

% boundary for possible positions of a leaf
\begin{figure*}[ht]\centering
    \begin{minipage}{0.4\textwidth}
        \begin{subfigure}[t]{\linewidth}\centering
            \caption{}
%            \begin{tikzpicture}[scale=0.75]
%                \foreach \x in {0.707, -0.707}{
%                        \foreach \y in {0.707, -0.707}{
%                                \node[leaf] at (\x,\y) {};
%                            }
%                    }
%
%                \node[leaf, label=right:{$L(r,p_1)$}] at (1,0) {};
%                \node[leaf, label=below:{$L(r,p_2)$}] at (0,-1) {};
%                \node[leaf, label=left:{$L(r,p_3)$}] at (-1,0) {};
%                \node[leaf, label=above:{$L(r,p_4)$}] at (0,1) {};
%
%                \node[point, label=right:{$p_1$}] at (1,0) {};
%                \node[point, label=below:{$p_2$}] at (0,-1) {};
%                \node[point, label=left:{$p_3$}] at (-1,0) {};
%                \node[point, label=above:{$p_4$}] at (0,1) {};
%
%                \draw[possible positions, dashed] (0,0) circle (1);
%                \node[point, label=right:{$x$}] at (0,0) {};
%
%                \coordinate (legend) at (-1, -3.7);
%                \node[possible positions, dashed] (patch) at (legend) {};
%                \node[anchor=west, right of=patch, node distance = 1.7cm] {$\{p \mid x \in L(r,p)\}$};
%            \end{tikzpicture}
			\includegraphics[width=\linewidth]{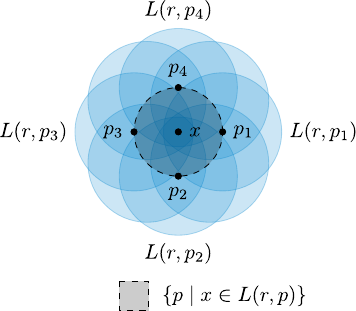}
            \label{fig: leaf position single point}
        \end{subfigure}
    \end{minipage}
    \hfill
    \begin{minipage}{0.53\textwidth}
        \begin{subfigure}[t]{\linewidth}
            \caption{}
			\includegraphics[width=\linewidth]{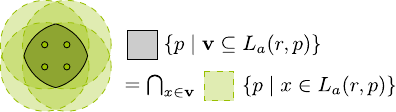}
            \label{fig: leaf position multiple points}
        \end{subfigure}
        \begin{subfigure}[t]{\linewidth}
            \caption{}\label{fig: leaf position set}
			\includegraphics[width=\linewidth]{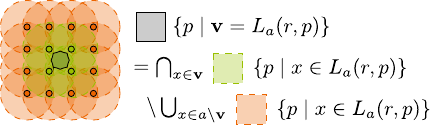}
        \end{subfigure}
        \begin{subfigure}[t]{\linewidth}
            \caption{}\label{fig: leaf position non-empty set}
			\includegraphics[width=\linewidth]{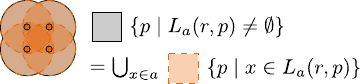}
        \end{subfigure}
    \end{minipage}
    \caption{Boundary for possible positions of a leaf (gray).
        (\subref{fig: leaf position single point}) For including one single point $x$ each of the shown leaves (blue) of radius $r$ would suffice. The gray area shows all possible positions for a leaf with radius $r$ that would include $x$.
        (\subref{fig: leaf position multiple points}) For including multiple points within one leaf the leaf has to be positioned such that its centre is with in the intersection (gray) of each of the possible regions for every single point (green).
        (\subref{fig: leaf position set}) If we want to include some points (green) and exclude others (orange) the position of the leaf has to be within the possible area for the included points (green) while avoiding the areas that would reach the points to be excluded (orange).
        (\subref{fig: leaf position non-empty set}) If the leaf can include any of the selected pixel, i.\,e. be non-empty on $a$, possible positions (gray) are those which are close enough to any of the points in $a$ (orange).
    }
\end{figure*}

In order to compute the leaf probability we want to derive the size of the area of possible positions through a contour integral.
Therefore, we need a parametrization of the boundary.

\begin{lem}[Parametrization of boundary]\label{lem:boundary parametrization}
    Let $\v \in \mathcal{P}(a)$ be a non-empty set of pixels and $\gamma_{r,\v}$ the curve describing the boundary of $\p_{r,\v}$.
    Then $\gamma_{r,\v}$ is piecewise smooth and has a finite number $K$ of singular points s.\,t. it can be divided into smooth subcurves $\gamma_{r,\v}^k\colon [t_k,t_k'] \to \R^2$ which are disjoint up to a finite number of points and for which the union of images is the boundary of $\p_{r,\v}$.
    Additionally, for each smooth subcurve $\gamma_{r,\v}^k$ there is a pixel $x_{i_k} \in a$ such that
    \begin{align*}
        \gamma_{r,\v}^k(t)
        = x_{i_k} + r \cdot \normal{t}
    \end{align*}
    and the outward pointing unit normal vector of each subcurve is given by
    \begin{align*}
        \vec{n}(\gamma_{r,\v}^k(t)) = c_{\v}(x_{i_k})\cdot \normal{t} \quad\text{with}\quad
        c_{\v}(x_{i_k}) = \begin{cases}
                              1,  & \text{if } x_{i_k} \in \v,            \\
                              -1, & \text{if } x_{i_k} \in a\setminus \v.
                          \end{cases}
    \end{align*}
    The same claim holds for the boundary $\gamma_{r,\emptyset}$ of $\bar{\p}_{r,\emptyset}$ with $c_{\emptyset}(x_{i_k}) = 1$.
\end{lem}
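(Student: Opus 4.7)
The plan is to use the representation of $\p_{r,\v}$ and $\bar{\p}_{r,\emptyset}$ from Lemma~\ref{lem:area of possible positions} as Boolean combinations of closed disks $L(r,x)$ of a fixed radius $r$ around the finitely many pixels $x\in a$. The boundary of any such combination must lie on the finite union of circles $\bigcup_{x\in a}\partial L(r,x)$, and the proof reduces to (i) localising the boundary on a single circle almost everywhere, (ii) counting singularities as pairwise circle intersections, and (iii) reading off the outward normal according to whether the active constraint comes from an included or excluded pixel.

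First I would argue that every point $p\in\partial \p_{r,\v}$ satisfies $\|p-x_i\|=r$ for at least one pixel $x_i\in a$. Otherwise, all defining inequalities would be strict at $p$, hence strict on a neighbourhood of $p$, so either the whole neighbourhood belongs to $\p_{r,\v}$ or none of it does, contradicting $p\in\partial \p_{r,\v}$. If $p$ lies on exactly one such circle $C_{x_i}:=\partial L(r,x_i)$, continuity of the remaining (now strict) constraints implies that on a sufficiently small neighbourhood of $p$ the boundary of $\p_{r,\v}$ coincides with $C_{x_i}$, which is a smooth arc. The singular points are therefore contained in $\bigcup_{i\neq j} C_{x_i}\cap C_{x_j}$; this is a finite set, since $a$ is finite and two distinct circles of equal radius meet in at most two points.

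Splitting the boundary at these finitely many singularities yields the claimed smooth subcurves $\gamma_{r,\v}^k$, each lying on exactly one circle $C_{x_{i_k}}$ and admitting the parametrisation
\begin{align*}
    \gamma_{r,\v}^k(t) = x_{i_k} + r\normal{t}, \quad t\in[t_k,t_k'],
\end{align*}
for a suitable angular interval. For the outward unit normal I would distinguish two cases. If $x_{i_k}\in\v$, then near the arc $\p_{r,\v}$ is locally contained in $L(r,x_{i_k})$, and leaving $\p_{r,\v}$ across $C_{x_{i_k}}$ means increasing the distance to $x_{i_k}$, so the outward direction at $\gamma_{r,\v}^k(t)$ is $+\normal{t}$, giving $c_{\v}(x_{i_k})=+1$. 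If instead $x_{i_k}\in a\setminus\v$, then $\p_{r,\v}$ is locally contained in the complement of $L(r,x_{i_k})$, and leaving across the arc means decreasing that distance, so the outward direction is $-\normal{t}$ and $c_{\v}(x_{i_k})=-1$.

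The argument for $\bar{\p}_{r,\emptyset}=\bigcup_{x\in a}L(r,x)$ is analogous: its boundary is contained in the same finite union of circles, singularities are again pairwise circle intersections, and on each smooth arc $C_{x_{i_k}}$ bounds a disk that lies \emph{inside} $\bar{\p}_{r,\emptyset}$, so leaving across the arc always means increasing the distance to $x_{i_k}$, yielding $c_{\emptyset}(x_{i_k})=+1$. The main obstacle I anticipate is the bookkeeping at singular points, namely verifying that the extracted arcs genuinely cover all of $\partial \p_{r,\v}$ and are pairwise disjoint up to the finite singular set; this however follows from the local circle-coincidence established in the first step, so everything beyond that reduces to unrolling the definitions.
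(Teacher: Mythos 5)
Your proposal is correct and follows essentially the same route as the paper: the boundary is located on the finite union of circles $\partial L(r,x)$, singular points are identified with pairwise circle intersections, each arc is parametrized by angle about its reference pixel, and the sign of the outward normal is read off from whether that pixel is included in $\v$ or excluded. Your version is in fact somewhat more careful than the paper's, which simply asserts piecewise smoothness from the Boolean-combination structure, whereas you justify the local coincidence of the boundary with a single circle away from intersection points.
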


\begin{proof}
    Since the set $\p_{r,\v}$ is generated through union, intersection and subtraction of smooth objects (Lemma~\ref{lem:area of possible positions}), i.\,e. circles, it has a piecewise smooth boundary.
    In particular, its boundary is a subset of the union of the individual circle boundaries.
    By definition, it can therefore be described by a set of smooth curves $\gamma_{r,\v}^k$ which go from one intersection (singular) point to another one (not necessarily ordered).
    Since each subcurve is then part of a circle boundary (Figure~\ref{fig:parametrization}), each subcurve can be written as:
    \begin{align*}
        \gamma_{r,\v}^k(t) & = x_{i_k} + r \cdot \normal{t}.
    \end{align*}
    The full boundary is then
    \begin{align*}
        \partial\p_{r,\v} = \bigcup_{k=1}^K\gamma_{r,\v}^k\left([t_k,t_k']\right)
    \end{align*}
    for some series $\{x_{i_k}\}_{k\in[K]}$ with $x_{i_k} \in a$ for all $k\in [K]$.
    In particular, a given pixel $x_i\in a$ can be the reference point for multiple subcurves.

    Finally, at each point of the boundary there are two unit normal vectors given by
    \begin{align*}
        \pm (\cos(t),\sin(t)).
    \end{align*}
    Since, the positive vector always points away from the respective reference point and the negative vector always points towards it, the outward pointing unit vector is given by $(\cos(t),\sin(t))$ if $x_{i_k}\in \v$ and $(-\cos(t),-\sin(t))$ if $x_{i_k} \in a\setminus \v$.
    For the boundary of the area of impossible leaf positions for an empty leaf the normal vector as to always point away from the reference point, hence its sign is always positive.
\end{proof}

% parametrization of boundary
\begin{figure*}[ht]\centering
    \begin{subfigure}[t]{0.45\textwidth}\centering
        \caption{}\label{subfig:polar parametrization}
		\includegraphics[width=\linewidth]{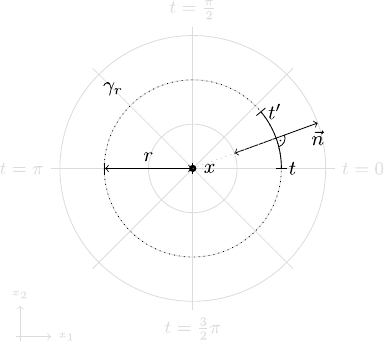}
    \end{subfigure}
    \hfill
    \begin{subfigure}[t]{0.45\textwidth}\centering
        \caption{}\label{subfig:boundary subcurves}
%        \begin{tikzpicture}[scale=1.7]
%            \foreach \x in {(0,0),(1,0),(0,1)} {
%                    \draw[included positions] \x circle (1);
%                    \node[included point] at \x {};
%                }
%
%            \draw[excluded positions] (1,1) circle (1);
%            \node[excluded point] at (1,1) {};
%
%            \draw (0,1) node[anchor=south east] at (0,1) {$x_1$};
%            \draw (1,1) node[anchor=south west] at (1,1) {$x_2$};
%            \draw (0,0) node[anchor=north east] at (0,0) {$x_3$};
%            \draw (1,0) node[anchor=north west] at (1,0) {$x_4$};
%
%            \draw[dotted, thick] (0,0) -- (1,0) -- ({1+cos(150)},{sin(150)});
%            \draw (0,0) arc[start angle=180, end angle=150, radius=1];
%
%            \draw[dotted, thick] (0,0) -- (0,1) -- ({cos(300)},{1+sin(300)});
%            \draw (0,0) arc[start angle=270, end angle=300, radius=1];
%
%            \draw[dotted, thick] ({1+cos(150)},{sin(150)}) -- (1,1) -- ({cos(300)},{1+sin(300)});
%            \draw ({1+cos(150)},{sin(150)}) arc[start angle=210, end angle=240, radius=1];
%
%            \fill[possible positions] (0,0) arc[start angle=180, end angle=150, radius=1] arc[start angle=210, end angle=240, radius=1] arc[start angle=300, end angle=270, radius=1];
%        \end{tikzpicture}
		\includegraphics[width=0.8\linewidth]{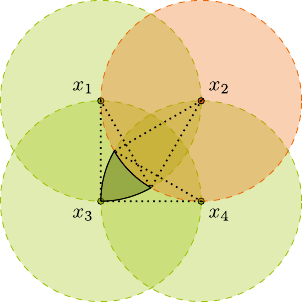}
    \end{subfigure}
    \caption{Parametrization of the boundary of $\p_{r,\v}$ as curve.
        (\subref{subfig:polar parametrization}) Each smooth boundary piece can by parametrized through radius $r$ and an interval of rotation angles $[t,t']$ w.\,r.\,t. some point $x$.
        (\subref{subfig:boundary subcurves}) The full boundary is comprised of multiple arcs of the same radius $r$ with different rotation centres. In this example the relevant points for the boundary parametrization are $x_1$, $x_2$ and $x_3$.
    }
    \label{fig:parametrization}
\end{figure*}

Through this parametrization we can additionally derive the positions of the intersection points in the boundary.

\begin{lem}[Singular points in boundary]\label{lem:singular points}
    Let $\v \in \mathcal{P}(a)$ be a set of pixels and $\gamma_{r,\v}$ the curve describing the boundary of possible positions $\p_{r,\v}$. Then, each singular point in $\gamma_{r,\v}$ is given by the intersection of two circles of radius $r$ around some pixels $x_i$ and $x_j$ in $a$. In particular, for each intersection point there are $x_i$ and $x_j$ in $a$ such that
    \begin{align*}
        x_{ij\pm}
        = x_i + r \cdot \normal{t_{ij\pm}}
        = x_j + r \cdot \normal{t_{ji\mp}}
        = x_{ji\mp}
    \end{align*}
    where $t_{ij\pm} = \alpha_{ij} \pm \beta_{ij}(r)$ with
    \begin{align*}
        \alpha_{ij}   & = s_{ij}\cdot\cos^{-1}\left(\frac{x_{1j} - x_{1i}}{\norm{x_i - x_j}}\right), &
        \beta_{ij}(r) & = \cos^{-1}\left(\frac{\norm{x_i-x_j}}{2r}\right).
    \end{align*}
    where $s_{ij} =\sgn(x_{2j}-x_{2i})+(1-\sgn(x_{2j}-x_{2i})^2)\cdot\sgn(x_{1j}-x_{1i})$
    and $t_{ji\pm} = \alpha_{ji} \pm \beta_{ji}(r)$ with
    \begin{align*}
        \alpha_{ji} & = \alpha_{ij}-\pi & \beta_{ji}(r) = \beta_{ij}(r).
    \end{align*}
\end{lem}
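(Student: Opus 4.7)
The plan is to leverage Lemma~\ref{lem:boundary parametrization}, which tells us that each smooth subcurve of $\gamma_{r,\v}$ is an arc of a circle of radius $r$ centered at some pixel $x_{i_k} \in a$. A singular point is, by definition, a point where two such smooth subcurves meet non-smoothly; since every subcurve lies on such a circle, each singular point must therefore lie on the intersection of two distinct circles of radius $r$ centered at pixels $x_i, x_j \in a$. The main task is then to derive explicit polar coordinates for these intersection points as seen from both centers and to verify that the two parametrizations describe the same point.

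First I would fix two pixels $x_i, x_j$ with $\norm{x_i - x_j} \le 2r$ (otherwise the circles do not intersect) and analyze the isosceles triangle with vertices $x_i$, $x_j$ and an intersection point $P$, whose two equal legs have length $r$ and whose base has length $\norm{x_i - x_j}$. Elementary trigonometry gives the half-angle at $x_i$ between the direction of $x_j$ and the direction of $P$ as $\cos^{-1}(\norm{x_i - x_j}/(2r)) = \beta_{ij}(r)$. The two intersection points are symmetric with respect to the line through $x_i$ and $x_j$ and correspond exactly to the two signs $\pm\beta_{ij}(r)$.

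Next I would derive the polar angle $\alpha_{ij}$ of the vector $x_j - x_i$ with respect to the positive $x$-axis. Because $\cos^{-1}$ only returns values in $[0,\pi]$, the sign correction $s_{ij}$ is needed to place the angle in the correct half-plane. A short case analysis on the sign of $x_{2j} - x_{2i}$ -- with the degenerate case $x_{2j} = x_{2i}$ handled by the second summand of $s_{ij}$, which then reduces to $\sgn(x_{1j} - x_{1i})$ -- shows that $s_{ij} \cos^{-1}((x_{1j} - x_{1i})/\norm{x_i - x_j})$ recovers exactly the polar angle of $x_j - x_i$ modulo $2\pi$. Combining the two steps yields $t_{ij\pm} = \alpha_{ij} \pm \beta_{ij}(r)$, and hence the claimed formula for $x_{ij\pm}$.

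Finally, I need to check the identity $x_{ij\pm} = x_{ji\mp}$, i.e., that the two parametrizations centered at $x_i$ and $x_j$ describe the same pair of intersection points. Here $\beta_{ji}(r) = \beta_{ij}(r)$ follows from the symmetry of the isosceles triangle in $i$ and $j$, while $\alpha_{ji} = \alpha_{ij} - \pi$ (mod $2\pi$) because $x_i - x_j$ is antiparallel to $x_j - x_i$. A direct substitution using $\cos(\theta - \pi) = -\cos\theta$ and $\sin(\theta - \pi) = -\sin\theta$ together with $P - x_j = (x_i - x_j) + (P - x_i)$ then closes the argument. The only real obstacle is the bookkeeping around $s_{ij}$, in particular verifying that the degenerate case $x_{2j} = x_{2i}$ produces a consistent value of $\alpha_{ij}$; this is a finite case distinction rather than a conceptual difficulty.
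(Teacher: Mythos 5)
Your proposal is correct and follows essentially the same route as the paper: decompose the angle to an intersection point into the direction angle $\alpha_{ij}$ of $x_j-x_i$ (with the sign correction $s_{ij}$ extending $\cos^{-1}$ beyond $[0,\pi]$) plus the symmetric offset $\pm\beta_{ij}(r)$ from the isosceles triangle, then verify $x_{ij\pm}=x_{ji\mp}$ via $\alpha_{ji}=\alpha_{ij}-\pi$ and $\beta_{ji}=\beta_{ij}$. Your opening step -- explicitly invoking Lemma~\ref{lem:boundary parametrization} to justify that every singular point is a circle--circle intersection -- is a small but welcome addition that the paper's proof leaves implicit.
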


With this representation each intersection point can be given in two ways depending on the reference circle center and through this parametrization we can track the intersection points, i.\,e. possible singularities, for a given set of pixels $\v$ and changing radius $r$.

\begin{proof}
    Let $x_i$ and $x_j$ be the centre points of two circles with radius $r$, such that $\norm{x_i - x_j} \leq 2r$ (otherwise the circles do not intersect).
    The intersection points are then determined through
    \begin{align}\label{eq:intersection points}
        x_i + r \cdot \normal{t_{ij}} = x_j + r \cdot \normal{t_{ji}}.
    \end{align}
    We can separate the angle $t_{ij}$ into the angle of the vector connecting the two circle centers $\alpha_{ij}$ and a radius-dependent, symmetric offset $\beta_{ij}(r)$ (Figure~\ref{fig:boundary}).

    % parametrization of singular points in boundary
    \begin{figure*}[ht]
        \begin{subfigure}[t]{0.45\textwidth}\centering
            \caption{}\label{subfig:intersection angles 1}
			\includegraphics[width=\linewidth]{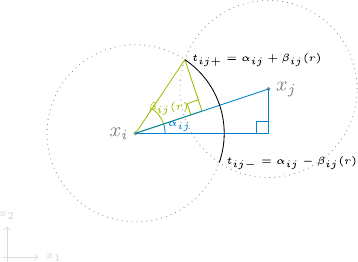}
        \end{subfigure}
        \hfill
        \begin{subfigure}[t]{0.45\textwidth}\centering
            \caption{}\label{subfig:intersection angles 2}
			\includegraphics[width=\linewidth]{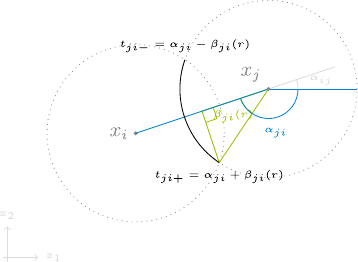}
        \end{subfigure}
        \caption{Visualization of singular points in boundary.
            (\subref{subfig:intersection angles 1}) Boundary and intersection points w.\,r.\,t. $x_i$.
            (\subref{subfig:intersection angles 2}) Boundary and intersection points w.\,r.\,t. $x_j$.
        }
        \label{fig:boundary}
    \end{figure*}

    By definition of the arccosine for $x_{2j} > x_{2i}$ (Figure~\ref{subfig:intersection angles 1}) the angle $\alpha_{ij}$ is given by
    \begin{align*}
        \cos^{-1}\left(\frac{x_{1j}-x_{1i}}{\norm{x_i-x_j}}\right).
    \end{align*}
    We extent the range of arccosine to the full circle $(-\pi,\pi)$ by multiplying with the sign $\sgn(x_{2j}-x_{2i})$ which is negative for $x_{2j} < x_{2i}$.
    Since this sign is zero for $x_{2j} = x_{2i}$, we set the sign to $\sgn(x_{1j}-x_{1i})$ for these cases, resulting in the combined sign $s_{ij}$.
    The signed angle $\alpha_{ij}$ then is shifted by a half rotation when swapping circle centers (Figure~\ref{subfig:intersection angles 2}).

    The offset angle $\beta_{ij}(r)$ similarly emerges from the isosceles triangle between $x_i$, $x_j$, and the intersection point as the arccosine of the hypotenuse with length $r$ and the adjacent with length $\frac{1}{2}\norm{x_i-x_j}$ (Figure~\ref{subfig:intersection angles 1}).
    In order to reach the same intersection point when swapping the two circle centers we need to invert the direction (sign) of this offset (Figure~\ref{subfig:intersection angles 2}) such that
    \begin{align*}
        t_{ij\pm} = \alpha_{ij} \pm \beta_{ij}(r) = \alpha_{ji} - \pi \mp \beta_{ji}(r) = t_{ji\mp} - \pi.
    \end{align*}
\end{proof}

The angle $\alpha_{ij}$ can alternatively be described through the 2-argument arcus tangent
\begin{align*}
    \arctan2\colon \R^2_{\neq (0,0)} \to (-\pi,\pi],
    (x_1, x_2)                                 \mapsto & \tan^{-1}\left(\frac{x_2}{x_1}\right)\sgn(x_1)^2                                  \\
                                                       & +\frac{\pi}{2}\left(-\sgn(x_2)^2+\sgn(x_2)+1\right)\cdot\left(1-\sgn(x_1)\right).
\end{align*}
In this instance the angle $\alpha_{ij}$ is given by $\arctan2(x_i,x_j)$.

We now have a representation of the boundary segments and its endpoints, which is piecewise smooth w.\,r.\,t. the radius and when integrating over the radius the only critical points we have are when a new segment is added to the boundary or a segment vanishes.

\begin{rem*}
    If the set of pixels $\v$ only contains a single point $x_i$ and $\norm{x_i-x_j}\geq 2r$ for all $x_j\in a\setminus \v$ then the area of possible positions is simply the circle of radius $r$ around $x_i$.
    In this case $K=1$ and the contour endpoints are $t_1=0$ and $t_1'=2\pi$.
\end{rem*}

Since for a fixed set of points it depends on the radius when the boundary segments change, we will now consider what happens when the radius changes.

\paragraph{Changing radii}

We have already established that the singular points in the boundary are the intersections between two circles.
What remains is to identify which of the intersections are relevant to the boundary and the boundary of which circle is the relevant one.
We will start with identifying the relevant intersection points.

\begin{lem}[Relevant intersection points]\label{lem:relevant ip}
    Let $\v\in \mathcal{P}(a)$ be a set of pixels and $\gamma_{r,\v}$ be the curve describing the boundary of possible positions $\p_{r,\v}$.
    For circle center points $x_i$ and $x_j\in a$  the intersection point $x_{ij\pm}$ is a point in $\gamma_{r,\v}$ if and only if it is in the closure of $\p_{r,\v}$, i.\,e.
    \begin{align*}
        \norm{x_{ij\pm}-x}\leq r ~\forall x\in \v \text{ and } \norm{x_{ij\pm}-x}\geq r ~\forall x\in a\setminus\v.
    \end{align*}
    Additionally, the intersection point $x_{ij\pm}$ is a point in the boundary of the impossible leaf positions of an empty leaf if and only if
    \begin{align*}
        \norm{x_{ij\pm} -x} \geq r ~\forall x \in a.
    \end{align*}
\end{lem}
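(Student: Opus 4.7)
The plan is to exploit the explicit set-theoretic description from Lemma~\ref{lem:area of possible positions}, namely $\p_{r,\v} = \bigcap_{x\in\v} L(r,x)\setminus\bigcup_{x\in a\setminus\v} L(r,x)$, which says $p\in\p_{r,\v}$ iff $\norm{p-x}\leq r$ for all $x\in\v$ and $\norm{p-x}>r$ for all $x\in a\setminus\v$. The closure of this set is obtained by relaxing the strict inequalities to $\geq r$, yielding exactly the condition stated in the lemma. Since by Lemma~\ref{lem:boundary parametrization} the curve $\gamma_{r,\v}$ parametrizes $\partial\p_{r,\v}\subset\overline{\p_{r,\v}}$, one direction is immediate: any singular point $x_{ij\pm}$ lying on $\gamma_{r,\v}$ must satisfy the closure inequalities.

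For the converse, I would take an intersection point $x_{ij\pm}$ (which by construction lies on circles of radius $r$ around both $x_i$ and $x_j$) that meets the stated closure conditions, and show it lies in $\partial\p_{r,\v}$ by exhibiting points of $\p_{r,\v}$ and points outside $\p_{r,\v}$ in every neighborhood. Locally around $x_{ij\pm}$, any constraint $\norm{\cdot-x}\lessgtr r$ with strict inequality at $x_{ij\pm}$ remains satisfied on a small ball by continuity, so only the circles actually passing through $x_{ij\pm}$ are active. Using the outward-normal description from Lemma~\ref{lem:boundary parametrization} (with the signs $c_\v(x_{i})$, $c_\v(x_{j})$), I can inspect the four local sectors cut by the two tangent lines at $x_{ij\pm}$ and identify the one that lies in $\p_{r,\v}$; this shows $x_{ij\pm}$ is simultaneously a limit of interior and exterior points, hence a boundary point. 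The empty-leaf case is analogous: $\bar{\p}_{r,\emptyset}=\bigcup_{x\in a}L(r,x)$ is a union of closed disks whose boundary is composed of circular arcs, and a circle-intersection point $x_{ij\pm}$ belongs to this boundary precisely when it is not absorbed into the interior of any other disk, i.e. when $\norm{x_{ij\pm}-x}\geq r$ for every $x\in a$.

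The main obstacle will be the degenerate configurations. When $\norm{x_i-x_j}=2r$ the two circles are tangent, the two intersection points collapse to one, and the candidate local sector may be empty on one side, so one must check that the stated inequalities correctly distinguish tangencies that remain on the boundary from those that do not. A similar care is needed when a third pixel $x_k\in a$ satisfies $\norm{x_{ij\pm}-x_k}=r$, so that three or more constraint circles coincide at $x_{ij\pm}$; here the sector analysis has to be refined to verify that the inclusions into $\v$ versus $a\setminus\v$ still leave a nonempty wedge of $\p_{r,\v}$ near $x_{ij\pm}$, and otherwise correctly excludes the point. Once these coincidences are dealt with, the equivalence falls out of the local geometry.
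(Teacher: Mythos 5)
Your forward direction coincides with the paper's, but for the converse you take a substantially heavier route. The paper's argument is two lines: since the closure is the disjoint union of the interior and the boundary, it suffices to rule out $x_{ij\pm}\in\p^{\circ}_{r,\v}$; interior membership forces every defining constraint to be strict, in particular $\norm{x_{ij\pm}-x_i}\neq r$, which contradicts the fact that $x_{ij\pm}$ lies on the radius-$r$ circle about $x_i$. No tangent lines, outward normals, or sector inspection are needed, and none of the degenerate configurations you list have to be treated separately. What your more constructive approach buys is an honest verification that $x_{ij\pm}$ really is a limit of points of $\p_{r,\v}$ --- something the paper never establishes, because it silently identifies ``lies in the closure'' with ``satisfies the relaxed inequalities,'' and these are not the same thing at exactly the degenerate radii you flag: at a critical radius $r^{\ast}_{ijk}$ three constraint circles can meet at $x_{ij\pm}$ so that the local wedge of $\p_{r,\v}$ is empty, in which case the point satisfies the inequalities yet is not in the topological closure and not on $\gamma_{r,\v}$. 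Your refined analysis would correctly exclude such a point, but then the ``if and only if,'' read literally against the topological boundary, fails there. The resolution is not a finer case analysis but the observation that the lemma is only ever invoked for $r$ in the open intervals between the critical radii of Lemma~\ref{lem:constant number of subcurves}, where no tangency occurs and no third circle passes through $x_{ij\pm}$; on those intervals your wedge is automatically nonempty and your argument closes. In short, your strategy is sound and more scrupulous than the paper's, but if you only need the statement as it is actually used, the interior/boundary dichotomy delivers it immediately.
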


\begin{proof}
    Let $x_{ij\pm}$ be an intersection point.
    If $x_{ij\pm}$ is a singular point in $\gamma_{\v,r}$ then it in particular is part of the closure of $\p_{\v,r}$.
    It remains to show that any intersection point is the closure of $\p_{r,\v}$ is a point of $\gamma_{\v,r}$.
    Let $x_{ij\pm}$ be such a point.
    Assume $x_{ij\pm}$ is not part of the boundary of $\p_{\v,r}$, i.\,e. $x_{ij\pm} \in \p^\circ_{\v,r}$.
    Then $\norm{x_{ij\pm} - x}<r$ for all $x\in \v$ and $\norm{x_{ij\pm}-x}>r$ for all $x\in a\setminus \v$.
    In particular, $\norm{x_{ij\pm} - x_i}\neq r$ which contradicts the definition of $x_{ij\pm}$ and consequently, the intersection point $x_{ij\pm}$ has to lie on the boundary of $\p_{\v,r}$.
    Analogously, for an empty leaf and $\bar{\p}_{r,\emptyset} = \{p\mid L_a(r,p)\neq \emptyset\}$ we have
    \begin{align*}
        x_{ij\pm} \in \partial\bar{\p}_{r,\emptyset} \iff x_{ij\pm} \in \partial \bigcup_{x\in a} L(r,x) \iff \norm{x_{ij\pm}-x}\geq r \,\forall x\in a.
    \end{align*}
\end{proof}

This result geometrically implies that points in $\v$ are only relevant to the boundary if they lie at extreme positions, i.\,e. they are on the boundary of the convex hull of $\v$ (Figure~\ref{subfig:irrelevant positions}).
Consequently, the maximal number of relevant intersection points for included positions is reached when all points of $\v$ lie on a circle of radius smaller than $r$ (Figure~\ref{subfig:worst positions}).
These intersection points can then only become irrelevant if an excluded position is close to said point.
In particular, no intersection point can lie in the interior of the area of possible positions.

\begin{figure*}[ht]\centering
    \begin{subfigure}[t]{0.49\textwidth}\centering
        \caption{}\label{subfig:irrelevant positions}
			\includegraphics[width=\linewidth]{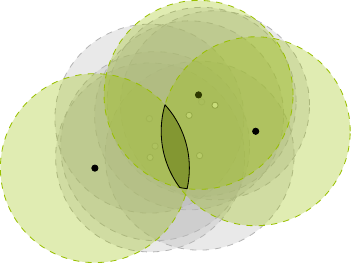}
    \end{subfigure}
    \hfill
    \begin{subfigure}[t]{0.49\textwidth}\centering
        \caption{}\label{subfig:worst positions}
		\includegraphics[width=0.8\linewidth]{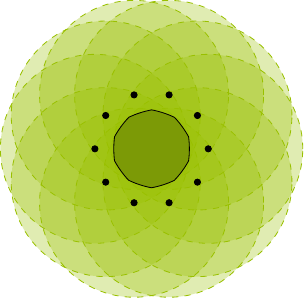}
    \end{subfigure}
    \caption{Set of included points for ten random pixels (\subref{subfig:irrelevant positions}) and ten pixels on a circle (\subref{subfig:worst positions}) to be included.
        (\subref{subfig:irrelevant positions}) The set of included points only depends on the pixels that are extreme values in $\v$.
        (\subref{subfig:worst positions}) Due to the circular arrangement every pixel contributes to the boundary of the intersection.
    }
\end{figure*}

Lemma~\ref{lem:relevant ip} allows us to define a mapping describing whether an intersection point $x_{ij\pm}$ for is a point in $\gamma_{r,\v}$ through a boolean:
\begin{align*}
    \delta_{r,\v}: \mathcal{P}(a) \times a^2 & \to \{0,1\},                                                                                                                                                                          \\
    \delta_{r,\v}(i,j,\pm)                   & = \begin{cases}
                                                     1, & \text{if } \forall x\in \v\colon \norm{x_{ij\pm}-x} \leq r \text{ and } \forall x\in a\setminus\v \colon \norm{x_{ij\pm}-x} \geq r \\
                                                     0, & \text{else (in particular if there is no intersection point).}
                                                 \end{cases}
\end{align*}
with $0$ indicating that the intersection does not lie on $\gamma_{r,\v}$ and $1$ indicating the intersection is a point in the boundary.

We can then also reconstruct for which sets of pixels $\v$ an intersection point is a boundary point of $\gamma_{r,\v}$ as described by \citet{Pitkow2010}.
The intersection point $x_{ij\pm}$ contributes to the area of possible positions for four pixel sets given through
\begin{align*}
    \v = \left\{x_k \in a \mid \bar{\delta}_r(k;i,j,\pm) = 1\right\}
\end{align*}
where
\begin{align*}
    \bar{\delta}_r(k;i,j,\pm)
    =\begin{cases}
         0               & \text{for }\norm{x_{ij\pm} - x_k} \geq r, k\neq i,j \\
         1               & \text{for }\norm{x_{ij\pm} - x_k} < r, k\neq i,j    \\
         0 \text{ or } 1 & \text{for } k=i,j.
     \end{cases}
\end{align*}
Note that with this criterion an intersection point might lie on the boundary without being a singular point, which would result in breaking down the integral into more segments than necessary, but would still yield the same result.

Since we derived how the boundary of the set of possible positions is comprised for a fixed radius we can consider how this boundary changes dependent on the radius.

\begin{lem}[Constant number of subcurves]
    \label{lem:constant number of subcurves}
    Let $\v\in \mathcal{P}(a)$ be a set of pixels and $\p_{r,\v}$ the set of possible positions given $r$.
    Then the range of radii $[r_{\min},r_{\max}]$ can be partitioned into subintervalls $[r_l,r_{l+1})$ with $r_1=r_{\min}$ and $r_L=r_{\max}$ such that the number of smooth subcurves $K_l$ in $\gamma_{r,\v}$ is constant for all $r\in [r_l,r_{l+1})$.
    These critical radii are given by
    \begin{align*}
        r_{ij}^{\ast}
         & = \frac{\norm{x_i-x_j}}{2}                                                                                                                                     &
        r_{ijk}^{\ast}
         & = \frac{\norm{x_i-x_j}\cdot\norm{x_j-x_k}\cdot\norm{x_i-x_k}}  {4\sqrt{s_{ijk}(s_{ijk} - \norm{x_i-x_j})(s_{ijk} - \norm{x_j-x_k})(s_{ijk} - \norm{x_i-x_k})}}   \\
    \end{align*}
    for all pairwise different $x_i$, $x_j$, and $x_k \in a$.
\end{lem}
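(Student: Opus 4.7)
The plan is to reduce the claim to a discrete bookkeeping problem on pair-intersection points. Since, by Lemmas~\ref{lem:boundary parametrization} and~\ref{lem:singular points}, every smooth subcurve of $\gamma_{r,\v}$ is an arc on a circle of radius $r$ around some $x_i\in a$ whose endpoints are pair-intersections $x_{ij\pm}$, the number of subcurves $K(r)$ is a combinatorial function of the set of \emph{existing and relevant} pair-intersections, where relevance is the condition given in Lemma~\ref{lem:relevant ip}. Concretely, cutting each pixel circle at its relevant intersection points and retaining only those arcs that lie in $\overline{\p_{r,\v}}$ yields exactly $K(r)$ arcs. Hence it suffices to show that this discrete data is locally constant in $r$ outside of a finite set of critical radii, and to identify these critical radii with the claimed $r_{ij}^*$ and $r_{ijk}^*$.

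For this, I would observe that as $r$ varies continuously, each intersection point $x_{ij\pm}(r)$ depends continuously on $r$ wherever it exists. Its existence requires $\norm{x_i - x_j}\leq 2r$, so it appears or disappears precisely when $\norm{x_i - x_j} = 2r$, i.e.\ at $r = r_{ij}^*$. Its relevance, by Lemma~\ref{lem:relevant ip}, is governed by finitely many inequalities $\norm{x_{ij\pm}(r) - x_k}\leq r$ (for $x_k\in\v$) and $\norm{x_{ij\pm}(r) - x_k}\geq r$ (for $x_k\in a\setminus\v$), whose truth values can by continuity change only at radii where one of them becomes an equality $\norm{x_{ij\pm}(r) - x_k} = r$. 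At such an $r$, the three pixels $x_i, x_j, x_k$ all lie at distance $r$ from $x_{ij\pm}$, so $x_{ij\pm}$ is the circumcentre of triangle $x_i x_j x_k$ with circumradius $r$. By Heron's formula together with the classical identity circumradius $=(\text{product of side lengths})/(4\cdot\text{area})$, this circumradius equals exactly $r_{ijk}^*$. Collecting the finitely many $r_{ij}^*$ and $r_{ijk}^*$, together with $r_{\min}$ and $r_{\max}$, yields the required finite partition of $[r_{\min}, r_{\max}]$, on each of whose half-open subintervals the combinatorial data — and hence $K(r) = K_l$ — is constant.

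The main obstacle I anticipate is a careful treatment of degenerate configurations: collinear triples $x_i, x_j, x_k$ for which Heron's formula vanishes and $r_{ijk}^*$ is formally infinite; several critical events coinciding at the same radius; and intersection points that only momentarily touch the boundary at a critical radius without changing which arcs are retained on either side. None of these affects the statement, but each requires some consistency in how relevance and the induced arc structure are tracked across the critical radii themselves — the half-open intervals $[r_l, r_{l+1})$ in the statement are chosen precisely to absorb this ambiguity.
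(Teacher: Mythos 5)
Your proposal is correct and follows essentially the same route as the paper's own proof: the critical events are the creation of a pair-intersection at $r_{ij}^{\ast}=\norm{x_i-x_j}/2$ and the crossing $\norm{x_{ij\pm}-x_k}=r$, which forces $x_{ij\pm}$ to be the circumcentre of $x_i,x_j,x_k$ and identifies $r_{ijk}^{\ast}$ as the circumradius. Your explicit appeal to continuity of $x_{ij\pm}(r)$ and your remarks on degenerate configurations are slightly more careful than the paper's argument, but they do not constitute a different approach.
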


\begin{proof}
    From Lemma~\ref{lem:relevant ip} it follows directly that the number of singular points (and therefore the number of subcurves $K$) for a given set $\v$ only depends on the radius.
    In particular, the singular points only change when a new intersection point is created or when $\norm{x_{ij\pm}-x_k}=r$ for $x_i$, $x_j$, $x_k \in a$ with $i\neq j\neq k$.
    New intersection points are generated every time the radius surpasses half the distance between two points in $a$ (Figure~\ref{subfig:kissing point}) resulting in the critical radii $r_{ij}^{\ast}$ above.
    Since $\norm{x_{ij\pm} - x_i}=\norm{x_{ij\pm}-x_j}=r$ always holds by definition, if additionally $\norm{x_{ij\pm}-x_k}=r$ then all three points lie on a circle of radius $r$ centered around $x_{ij\pm}$, i.\,e. the circum circle of the triangle with vertices $x_i$, $x_j$, and $x_k$.
    Hence, the corresponding radius can be determined by computing the radius $r_{ijk}^{\ast}$ of the circum circle.
    Consequently, the number of subcurves for any given $\p_{\v,r}$ is constant for each interval of adjacent critical radii.
    In other words, let $\{r_l^{\ast}\}_{l\in[L]}$ be the ordered sequence of critical radii then $K_l$ is constant for all $r\in [r_l^{\ast},r_{l+1}^{\ast})$ for $l=1,\dots,L-1$.
\end{proof}

Geometrically, this implies a change in the shape of the area of possible positions in the following cases:
\begin{enumerate*}[label= (\arabic*)]
    \item When two circles start intersecting each other (Figure~\ref{subfig:kissing point}),
    \item When an intersection point moves out of (Figure~\ref{subfig:intersection moving out}) or into (Figure~\ref{subfig:intersection moving in}) another circle.
\end{enumerate*}
In particular, the area of possible positions only changes its shape until the radius has surpassed all critical radii for the set of pixels.

\begin{figure*}[ht]\centering
    \begin{subfigure}[t]{\textwidth}\centering
        \caption{}\label{subfig:kissing point}
%        \begin{tikzpicture}[scale = 0.6]
%            \def\offset{7}
%            \foreach \m in {0,1,2}{
%                    \foreach \x in {(\m*\offset,0),({2.5 + \m*\offset},0)}{
%                            \draw[irrelevant positions] \x circle ({1 + 0.25*\m});
%                            \node[point] at \x {};
%                        }}
%        \end{tikzpicture}
		\includegraphics[width=0.8\linewidth]{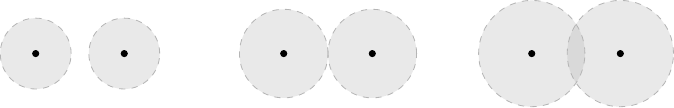}
    \end{subfigure}
    \begin{subfigure}[t]{\textwidth}\centering
        \caption{}\label{subfig:intersection moving out}
%        \begin{tikzpicture}[scale=0.6]
%            \draw[dotted, tud-2b] (0.9,0) circle (0.9);
%            \node[point, fill=tud-2b, draw=tud-2b, minimum size = 1pt] at (0.9,0) {};
%
%            \draw[dotted, tud-10b] (0.9,-0.56) circle (1.06);
%            \node[point, fill=tud-10b, draw=tud-10b, minimum size = 1pt] at (0.9,-0.56) {};
%
%            \def\offset{7}
%            \foreach \m in {0,1,2}{
%                    \foreach \x in {(\m*\offset,0),(1.8+\m*\offset,0),(0.9+\m*\offset,0.5)}{
%                            \draw[irrelevant positions] \x circle ({0.96 + 0.1*\m});
%                            \node[point] at \x {};
%                        }
%                    \draw[dotted, tud-2b] (0.9+\m*\offset,0) circle (0.9);
%                    \node[point, fill=tud-2b, draw=tud-2b, minimum size = 1pt] at (0.9+\m*\offset,0) {};
%
%                    \draw[dotted, tud-10b] (0.9+\m*\offset,-0.56) circle (1.06);
%                    \node[point, fill=tud-10b, draw=tud-10b, minimum size = 1pt] at (0.9+\m*\offset,-0.56) {};
%                }
%        \end{tikzpicture}
		\includegraphics[width=0.8\linewidth]{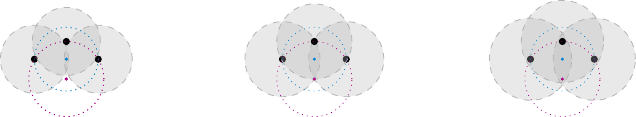}
    \end{subfigure}
    \begin{subfigure}[t]{\textwidth}\centering
        \caption{}\label{subfig:intersection moving in}
%        \begin{tikzpicture}[scale = 0.6]
%            \def\offset{7}
%            \foreach \m in {0,1,2}{
%                    \foreach \x in {(0+\m*\offset,0),(1.8+\m*\offset,0),(0.9+\m*\offset,1.6)}{
%                            \draw[irrelevant positions] \x circle (0.96 + 0.094*\m);
%                            \node[point] at \x {};
%                        }
%                    \draw[dotted, tud-2b] (0.9+\m*\offset,0) circle (0.9);
%                    \node[point, fill=tud-2b, draw=tud-2b, minimum size = 1pt] at (0.9+\m*\offset,0) {};
%
%                    \draw[dotted, tud-10b] (0.9+\m*\offset,0.5468) circle (1.054);
%                    \node[point, fill=tud-10b, draw=tud-10b, minimum size = 1pt] at (0.9+\m*\offset,0.5468) {};
%                }
%        \end{tikzpicture}
		\includegraphics[width=0.8\linewidth]{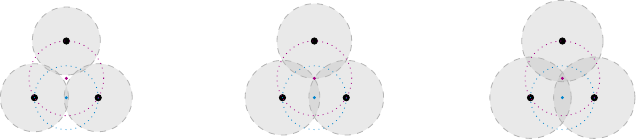}
    \end{subfigure}
    \caption{Change of set overlap with increasing radius.
        The circum circle of two points is shown in \textcolor{tud-2b}{blue}.
        The circum circle of three points is shown in \textcolor{tud-10b}{purple}.
        (\subref{subfig:kissing point}) Creation of new intersection points as the radius increases.
        (\subref{subfig:intersection moving out}) If a circle center is within the circumcircle of two other points and $r$ is smaller than the circumcircle radius of all three points then the intersection of all three circles is the same as the intersection of the two other points.
        (\subref{subfig:intersection moving in}) If a circle center is outside the circumcircle of two other points and $r$ is smaller than the circumcircle radius of all three points, the intersection of the three circles is empty.
    }
\end{figure*}

With these preliminaries we can compute the actual probability of leaves:

\begin{thm}[Leaf probability]
    \label{thm:leaf prob}
    Let $\v\in \mathcal{P}(a)$ be a non-empty set of pixels.
    The leaf probability $\P(L_a=\v)$ can be written as
    \begin{align*}
        \P(L_a=\v)
        = \frac{1}{\abs{B}}\sum_{l=1}^L \left(\int_{r_l}^{r_{l+1}} f_X(r) \sum_{k \in K_l} \left(\frac{1}{2} \int_{t_{k}}^{t_{k}'} \skp{\gamma_{r,\v}^k(t),\vec{n}(\gamma_{r,\v}^k(t))}\norm{\dot{\gamma}_{r,\v}^k(t)}\mathbf{d}t\right)\mathbf{d}r\right)
    \end{align*}
    where $\gamma_{r,\v}$ is the curve describing the piecewise smooth boundary of $\p_{r,\v}$, $\gamma_{r,\v}^k$ are its smooth subcurves and $\vec{n}$ is its unit normal vector.
    The function $f_{X}(r)$ is the density function of the radius distribution.

    The leaf probability of a non-empty leaf is given by
    \begin{align*}
        \P(L_a\neq\emptyset)
        = \frac{1}{\abs{B}}\sum_{l=1}^L\left(\int_{r_l}^{r_{l+1}}f_X(r)\sum_{k\in K_l}\left(\frac{1}{2}\int_{t_k}^{t_k'}\skp{\gamma_{r,\emptyset}^k(t),\vec{n}(\gamma_{r,\emptyset}^k(t))}\norm{\dot{\gamma}_{r,\emptyset}^k(t)}\mathbf{d}t\right)\mathbf{d}r\right)
    \end{align*}
    where $\gamma_{r,\emptyset}$ is the piecewise smooth boundary of $\bar{\p}_{r,\emptyset}$.
\end{thm}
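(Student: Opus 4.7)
The plan is to start from equation~\ref{eq:leaf prob} and reduce the double integral to a single radius integral wrapped around a contour integral of the boundary of $\p_{r,\v}$. First, I would observe that for fixed $r$ and $p$ the set $L_a(r,p)$ is deterministic, so $\P(L_a(r,p)=\v \mid r,p) = \mathbf{1}_{\p_{r,\v}}(p)$ by definition of $\p_{r,\v}$ in Lemma~\ref{lem:area of possible positions}. Since $f_P$ is the constant $1/\abs{B}$ on $B$ and vanishes outside, and since Lemma~\ref{lem:area of possible positions} gives $\p_{r,\v}\subset B$ for all $r\leq r_{\max}$ and $\v\neq\emptyset$, the inner integral collapses to $\abs{\p_{r,\v}}/\abs{B}$, where $\abs{\cdot}$ denotes Lebesgue measure on $\R^2$.

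Next, I would convert $\abs{\p_{r,\v}}$ to a contour integral via the divergence theorem applied to the vector field $F(x)=\tfrac{1}{2}x$, which has $\div F \equiv 1$. This gives
\begin{align*}
    \abs{\p_{r,\v}} = \int_{\p_{r,\v}} \div F\,\mathbf{d}A = \frac{1}{2}\oint_{\partial \p_{r,\v}} \skp{x,\vec{n}(x)}\,\mathbf{d}s.
\end{align*}
Lemma~\ref{lem:boundary parametrization} then tells me that $\partial\p_{r,\v}$ decomposes into finitely many smooth subcurves $\gamma_{r,\v}^k\colon[t_k,t_k']\to\R^2$ which are pairwise disjoint up to finitely many singular points and whose outward unit normals are exactly those specified in the lemma. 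Splitting the contour integral along this decomposition and expressing each line integral in the parametrized form $\skp{\gamma_{r,\v}^k(t),\vec{n}(\gamma_{r,\v}^k(t))}\norm{\dot{\gamma}_{r,\v}^k(t)}\,\mathbf{d}t$ produces the inner double sum of the claimed formula.

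For the outer integral over $r$, the density $f_X$ is supported on $[r_{\min},r_{\max}]$, so the integral over $\R$ reduces to one over $[r_{\min},r_{\max}]$. By Lemma~\ref{lem:constant number of subcurves} this interval partitions into subintervals $[r_l,r_{l+1})$ on which the number of smooth subcurves is a constant $K_l$ and on which each $\gamma_{r,\v}^k$ depends smoothly on $r$. Splitting the radius integral accordingly yields the stated sum over $l$ and combines with the previous steps to give the claimed expression.

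The main obstacle I expect is a careful justification that the divergence theorem applies to $\p_{r,\v}$: the set can have multiple connected components and its boundary consists of circular arcs meeting at corner points, so I would need to argue that $\p_{r,\v}$ is a bounded open set with rectifiable (in fact piecewise $C^1$) boundary, which follows from Lemma~\ref{lem:area of possible positions} together with the piecewise smoothness in Lemma~\ref{lem:boundary parametrization}. A secondary technicality is to verify that the orientation convention in Lemma~\ref{lem:boundary parametrization} agrees with the outward-normal convention used by the divergence theorem, which is exactly the role of the sign $c_\v(x_{i_k})$. The formula for $\P(L_a\neq\emptyset)$ is obtained by an identical argument, replacing $\p_{r,\v}$ with $\bar{\p}_{r,\emptyset}$ and invoking the second part of Lemma~\ref{lem:area of possible positions} and the empty-leaf case of Lemma~\ref{lem:boundary parametrization}; here $c_\emptyset\equiv 1$ encodes that the outward normal always points away from the reference pixel.
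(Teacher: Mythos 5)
Your proposal is correct and follows essentially the same route as the paper's proof: reducing the double integral via the indicator of $\p_{r,\v}$ and the constant density $f_P=1/\abs{B}$, applying the divergence theorem to $\vec F(p)=\tfrac{1}{2}p$, decomposing the boundary using Lemma~\ref{lem:boundary parametrization}, and splitting the radius integral at the critical radii of Lemma~\ref{lem:constant number of subcurves}. The technical caveats you flag (piecewise-$C^1$ boundary for Gauss's theorem and the orientation convention via $c_{\v}$) are exactly the points the paper delegates to its lemmas.
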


According to this theorem we can compute the leaf probability (Equation~\ref{eq:leaf prob}) by splitting the radius integral into intervals ($[r_l,r_{l+1}]$) for which the shape of the area of (im)possible positions does not change.
Each segment of the area with smooth boundary is then given through integrating from one singular point to the next ($[t_k,t_k']$).

\begin{proof}
    We recall that by the law of total probability the leaf probability is given through
    \begin{align*}
        \P(L_a=\v)
         & = \int_{\R} f_X(r)\P(L_a(r) = \v\mid r)\mathbf{d}r                                \\
         & = \int_{\R} f_X(r)\int_{\R^2} f_P(p)\P(L_a(r,p)=\v\mid r,p)\mathbf{d}p\mathbf{d}r
    \end{align*}
    where $f_P(p)$ is the density function of the position distribution, $f_X(r)$ is the radius density function, $L_a(r)$ is a leaf of radius $r$ with random position, and $L_a(r,p)$ is a leaf with radius $r$ at $p$.
    In particular, the probability $\P(L_a(r,p)=\v\mid r,p)$ then can only take binary values $\{0,1\}$ as the leaf $L_a(r,p)$ either equals the set of pixels $\v \in \mathcal{P}(a)$ or it does not.

    Since $f_X(r)$ is by definition zero outside of $[r_{\min}, r_{\max}]$ and $f_P(p)$ is zero outside of $B$ we have
    \begin{align*}
        \P(L_a=\v) = \int_{r_{\min}}^{r_{\max}} f_X(r)\int_{B} f_P(p)\P(L_a(r,p)=\v\mid r,p)\mathbf{d}p\mathbf{d}r.
    \end{align*}
    Since the positions are uniformly distributed on $B$, the density function $f_P$ is piecewise constant with $\frac{1}{\abs{B}}$ inside of $B$ and $0$ everywhere else.
    Therefore,
    \begin{align*}
        \P(L_a=\v) = \frac{1}{\abs{B}}\int_{r_{\min}}^{r_{\max}} f_X(r)\int_{B} \P(L_a(r,p)=\v\mid r,p)\mathbf{d}p\mathbf{d}r.
    \end{align*}

    Since $\P(L_a(r,p)=\v\mid r,p)=1$ if and only if $p\in \p_{\v,r}\subset B$ (Lemma~\ref{lem:area of possible positions}) we can further write this probability as
    \begin{align*}
        \P(L_a=\v) = \frac{1}{\abs{B}}\int_{r_{\min}}^{r_{\max}} f_X(r)\int_{\p_{r,\v}}1\mathbf{d}p\mathbf{d}r.
    \end{align*}

    Utilizing that the vector field $\vec{F}\colon \R^2 \to \R^2, p \mapsto \frac{1}{2} p$  has divergence
    \begin{align*}
        \div\vec{F}
        = \frac{1}{2}\left(\frac{\partial F^1}{\partial p_1} + \frac{\partial F^2}{\partial p_2}\right)
        = 1
    \end{align*}
    we can apply Gauss's theorem \citep[section~10.4]{Steinmetz2024} to translate the position integral into a contour integral and get
    \begin{align*}
        \P(L_a=\v)
         & = \frac{1}{\abs{B}}\int_{r_{\min}}^{r_{\max}} f_X(r)\int_{\p_{r,\v}}\div\vec{F}(p)\mathbf{d}p\mathbf{d}r                                         \\
         & = \frac{1}{\abs{B}}\int_{r_{\min}}^{r_{\max}} f_X(r)\oint_{\partial \p_{r,\v}}\skp{\vec{F}(p),\vec{n}(p)}\mathbf{d}\partial \p_{r,\v}\mathbf{d}r
    \end{align*}
    where $\vec{n}$ is the unit normal vector field of the boundary.

    Following Lemma~\ref{lem:boundary parametrization} we can parametrize the boundary $\partial\p_{r,\v}$ by a closed curve $\gamma_{r,\v}\colon T \to \partial \p_{r,\v} \subset \R^2$ and apply the definition of line integrals to get
    \begin{align*}
        \P(L_a=\v)
         & = \frac{1}{\abs{B}}\int_{r_{\min}}^{r_{\max}} f_X(r) \oint_{\gamma_{r,\v}} \skp{\vec{F},\vec{n}}\mathbf{d}s\mathbf{d}r                                                          \\
         & = \frac{1}{\abs{B}}\int_{r_{\min}}^{r_{\max}} f_X(r) \oint_{T} \skp{\vec{F}(\gamma_{r,\v}(t)),\vec{n}(\gamma_{r,\v}(t))}\norm{\dot{\gamma}_{r,\v}(t)} \mathbf{d}t\mathbf{d}r    \\
         & = \frac{1}{\abs{B}}\int_{r_{\min}}^{r_{\max}} f_X(r) \oint_T \frac{1}{2} \skp{\gamma_{r,\v}(t),\vec{n}(\gamma_{r,\v}(t))} \norm{\dot{\gamma}_{r,\v}(t)} \mathbf{d}t\mathbf{d}r.
    \end{align*}

    Let $\{[r_l,r_{l+1})\}_{l\in[L-1]}$ be the sequence of subintervals of $[r_{\min},r_{\max})$ such that the number of smooth subcurves $K_l$ in $\gamma_{r,\v}$ is constant (Lemma~\ref{lem:constant number of subcurves}) and let $\gamma_{r,\v}^k$ denote the $k$th smooth segment of $\gamma_{r,\v}$.
    We can then segment the integrals into sums over segments yielding
    \begin{align*}
        \P(L_a=\v) = \frac{1}{\abs{B}}\sum_{l=1}^L\int_{r_l}^{r_{l+1}} f_X(r) \sum_{k=1}^{K_l} \frac{1}{2} \int_{t_k}^{t_{k}'} \skp{\gamma^k_{r,\v}(t),\vec{n}(\gamma^k_{r,\v}(t))}\norm{\dot{\gamma}^k_{r,\v}(t)} \mathbf{d}t\mathbf{d}r.
    \end{align*}
\end{proof}

\begin{rem*}
    The above theorem can be proven in a more general form such that the derivation only requires uniform in $B$ distributed shape positions and some smooth leaf shape with a continuous size distribution.
\end{rem*}

Using the above result and the knowledge about the critical radii (Lemma~\ref{lem:constant number of subcurves}), relevant intersection points (Lemma~\ref{lem:relevant ip}), and the oriented boundary parametrization (Lemma~\ref{lem:boundary parametrization}) we can derive the following constructive configuration of the leaf probability.

\begin{prop}[Constructive computation of leaf probability]\label{prop:constructive computation}
    Let $\v \subset a$ be a non-empty set of pixels and $\{r_l\}_{l\in[L]}$ the sequence of critical radii (Lemma~\ref{lem:constant number of subcurves}).
    Let $I_r$ be the set of intersection points $x_{ij\pm}$ for $x_i$, $x_j \in a$ and circle radius $r$.
    The leaf probability $\P(L_a=\v)$ can be computed through
    \begin{align*}
          & \P(L_a = \v)                                                                                                                                                                                                \\
        = & \frac{1}{\abs{B}(r_{\min}^{-2}-r_{\max}^{-2})}\sum_{l=1}^L \left(\sum_{x_{ij\pm}\in I_r} \delta_{r_l,\v}(x_{ij\pm})\cdot c_{\v}(x_{ij\pm})\cdot \left(b(r_{l+1};x_{ij\pm}) - b(r_l;x_{ij\pm})\right)\right)
    \end{align*}
    with boundary orientation
    \begin{align*}
        c_{\v}(x_{ij\pm}) = \begin{cases} \mp1, & \text{if either } x_i\in\v \text{ or } x_j\in \v, \\ \pm1, &\text{else,} \end{cases}
    \end{align*}
    singular point boolean as defined in Lemma~\ref{lem:relevant ip}
    \begin{align*}
        \delta_{\v,r}(x_{ij\pm}) = \begin{cases}
                                       1, & \text{if }\forall x\in\v\colon \norm{x_{ij\pm}-x}\leq r \text{ and }\forall x\in a\setminus\v\colon\norm{x_{ij\pm}-x}\geq r \\
                                       0, & \text{else,}
                                   \end{cases}
    \end{align*}
    and
    \begin{align*}
        b(r;x_{ij\pm}) =
         &
        \alpha_{ij}\log(r)
        \mp  \left(\frac{1}{2}\operatorname{Cl}_2(2\beta_{ij}(r)+\pi)+\beta_{ij}(r)\log\left(\frac{\norm{x_i-x_j}}{r}\right)\right)                                                                                                 \\
         & - \frac{\norm{x_i-x_j}}{4r^2} \skp{x_i,\vec{n}\left(\alpha_{ij}-\frac{\pi}{2}\right)} \pm \frac{2\skp{x_i,\vec{n}(\alpha_{ij})}}{\norm{x_i-x_j}}\left(\beta_{ij}(r) - \frac{1}{2}\sin\left(2\beta_{ij}(r)\right)\right).
    \end{align*}
    The leaf probability of any non-empty leaf $\P(L_a\neq \emptyset)$ can be computed in the same manner using the boundary orientation $c_{\emptyset}(x_{ij\pm})=\mp 1$, and the singular point boolean
    \begin{align*}
        \delta_{\emptyset,r}(x_{ij\pm}) = \begin{cases}
                                              1, & \text{if }\forall x \in a\colon \norm{x_{ij\pm}-x}\geq r, \\
                                              0, & \text{else.}
                                          \end{cases}
    \end{align*}
\end{prop}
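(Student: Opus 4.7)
The plan is to specialize Theorem~\ref{thm:leaf prob} by plugging in the explicit boundary parametrization of Lemma~\ref{lem:boundary parametrization} together with the power-law density $f_X$ from equation~\ref{eq:power law distribution}. First I substitute $\gamma_{r,\v}^k(t) = x_{i_k} + r\cdot(\cos t,\sin t)^\top$ and $\vec{n}(\gamma_{r,\v}^k(t)) = c_{\v}(x_{i_k})\cdot(\cos t,\sin t)^\top$. Using $\norm{\dot\gamma_{r,\v}^k(t)} = r$, the inner product collapses to $c_{\v}(x_{i_k})(r+\skp{x_{i_k},(\cos t,\sin t)^\top})$, so the $t$-integrand becomes an elementary combination of $r$, $r^2$, $\cos t$, and $\sin t$ whose antiderivative is immediate. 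Combining the prefactor $\frac{1}{\abs{B}}$ from $f_P$ with the normalising constant $\frac{2}{r_{\min}^{-2}-r_{\max}^{-2}}$ of $f_X$, and absorbing the $\frac{1}{2}$ from Gauss's theorem, produces the claimed front factor $\frac{1}{\abs{B}(r_{\min}^{-2}-r_{\max}^{-2})}$.

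Next I reindex the double sum. Lemma~\ref{lem:singular points} tells me that every endpoint of a smooth subcurve of $\gamma_{r,\v}$ is an intersection point $x_{ij\pm}$, and each such point is the endpoint of exactly two arcs, one about $x_i$ (with parameter $t_{ij\pm}=\alpha_{ij}\pm\beta_{ij}(r)$) and one about $x_j$ (with parameter $t_{ji\mp}$). Evaluating the $t$-antiderivative at both ends of every subcurve and regrouping the resulting boundary terms by intersection point yields a sum indexed by $x_{ij\pm}$. The outward-normal sign $c_{\v}(x_{i_k})$ from Lemma~\ref{lem:boundary parametrization}, combined with the direction in which the arc sweeps through the endpoint, produces precisely the sign $c_{\v}(x_{ij\pm})$ stated in the proposition, with the case distinction driven by whether the centres $x_i,x_j$ lie in $\v$ or in $a\setminus\v$. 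Whether a given $x_{ij\pm}$ is actually a singular point of $\gamma_{r,\v}$ at a given radius is captured exactly by the indicator $\delta_{r,\v}$ of Lemma~\ref{lem:relevant ip}, which I insert to extend the inner sum over all pairs $(i,j,\pm)$ rather than only those realised as endpoints.

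On each interval $[r_l,r_{l+1})$ of Lemma~\ref{lem:constant number of subcurves} the resulting integrand in $r$ is a fixed closed-form function, so the radius integral can be performed explicitly. The substitution $u=\beta_{ij}(r)=\cos^{-1}(\norm{x_i-x_j}/(2r))$ converts the non-elementary contribution arising from the $r^{-1}$ factor multiplying $\beta_{ij}(r)$ into an integral of the form $\int\log(2\sin u)\,du$, which is the defining integral of the Clausen function $\operatorname{Cl}_2$; this accounts for the $\tfrac{1}{2}\operatorname{Cl}_2(2\beta_{ij}(r)+\pi)$ term in $b$. The remaining contributions are elementary and produce $\alpha_{ij}\log(r)$, the term $\beta_{ij}(r)\log(\norm{x_i-x_j}/r)$, and the trigonometric/rational pieces involving $\skp{x_i,\vec{n}(\alpha_{ij})}$ and $\skp{x_i,\vec{n}(\alpha_{ij}-\pi/2)}$. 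Collecting them gives $b(r;x_{ij\pm})$, and writing the $r$-integral on $[r_l,r_{l+1}]$ as $b(r_{l+1};x_{ij\pm})-b(r_l;x_{ij\pm})$ yields the stated formula. The non-empty-leaf case $\P(L_a\neq\emptyset)$ is proved identically: Lemmas~\ref{lem:area of possible positions}, \ref{lem:boundary parametrization} and \ref{lem:relevant ip} supply the boundary $\gamma_{r,\emptyset}$, the unconditional outward sign $c_{\emptyset}\equiv\mp 1$, and the indicator $\delta_{\emptyset,r}$, after which the above calculation proceeds unchanged.

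The main obstacle I expect is the sign and orientation bookkeeping at intersection points: each $x_{ij\pm}$ admits two parametrisations (centred at $x_i$ versus $x_j$), and the correct pairing of the normal orientation with the sweep direction must be traced through the regrouping so that $c_{\v}(x_{ij\pm})$ emerges with exactly the case distinction of the statement rather than a superficially different sign pattern. By contrast, the Clausen identification and the remaining antiderivatives are mechanical once the substitution $u=\beta_{ij}(r)$ has been made, and the piecewise structure over $[r_l,r_{l+1}]$ is already guaranteed by Lemma~\ref{lem:constant number of subcurves}.
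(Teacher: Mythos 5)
Your proposal is correct and follows essentially the same route as the paper's proof: specialize Theorem~\ref{thm:leaf prob} with the parametrization of Lemma~\ref{lem:boundary parametrization}, compute the elementary $t$-antiderivative, regroup the boundary terms by intersection point via Lemma~\ref{lem:singular points} with the indicator of Lemma~\ref{lem:relevant ip} and the combined sign $c_{\v}(x_{ij\pm})$, and then integrate in $r$ over the intervals of Lemma~\ref{lem:constant number of subcurves}. The only cosmetic difference is the non-elementary term: you reach $\operatorname{Cl}_2$ via the real substitution $u=\beta_{ij}(r)$ and the log-sine integral, whereas the paper integrates through the dilogarithm $\operatorname{Li}_2$ and extracts its imaginary part --- both yield the same $\tfrac{1}{2}\operatorname{Cl}_2(2\beta_{ij}(r)+\pi)$ term.
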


The main idea to derive this result from the previous theorem is to use the fundamental theorem of calculus and boundary parametrization from Lemma~\ref{lem:boundary parametrization} to transform the area integral into a sum which we can then further integrate.

\begin{proof}
    By Theorem~\ref{thm:leaf prob} the leaf probability is given by
    \begin{align*}
        \P(L_a =\v) = \frac{1}{\abs{B}}\sum_{l=1}^L \left(\int_{r_l}^{r_{l+1}}f_X(r)\sum_{k = 1}^{K_l}\left(\frac{1}{2}\int_{t_k}^{t_k'}\skp{\gamma_{r,\v}^k(t),\vec{n}\left(\gamma_{r,\v}^k(t)\right)}\norm{\dot{\gamma}_{r,\v}^k(t)}\mathbf{d}t\right)\mathbf{d}r\right).
    \end{align*}
    Lets first consider the area given by one boundary segment through the antiderivative
    \begin{align*}
        A_k(t;r) = \frac{1}{2} \int \skp{\gamma_{r,\v}^k(t),\vec{n}(\gamma_{r,\v}(t))}\norm{\dot{\gamma}_{r,\v}^k(t)}\mathbf{d}t
    \end{align*}
    According to Lemma~\ref{lem:boundary parametrization} for any $\gamma_{r,\v}^k$ there exists a $x_{i_k} \in a$ such that $\gamma_{r,\v}^k(t) = x_{i_k} + r \cdot \begin{pmatrix}
            \cos(t) \\ \sin(t)
        \end{pmatrix}$ with outward pointing unit normal vector
    \begin{align*}
        \vec{n}\left(\gamma_{r,\v}^k(t)\right) = c_{\v}(x_{i_k})\cdot\normal{t} \quad\text{where}\quad c_{\v}(x_{i_k})=\begin{cases}
                                                                                                                           1,  & \text{if }x_{i_k}\in \v, \\
                                                                                                                           -1, & \text{else}.
                                                                                                                       \end{cases}
    \end{align*}
    Consequently, the arc's length $\norm{\dot{\gamma}_{r,\v}^k (t)}$ is
    \begin{align*}
        \norm{\dot{\gamma}_{r,\v}^k  (t)}  = \norm{r \cdot \begin{pmatrix}-\sin(t) \\ \cos(t)\end{pmatrix}}  = \sqrt{r^2(\sin^2(t) + \cos^2(t))} = r.
    \end{align*}
    Hence, the antiderivative $A_k(t;r)$ is given by
    \begin{align*}
        A_k(t;r)
         & = \frac{1}{2} \int r \cdot \skp{x_{i_k} + r \cdot \normal{t}, c_{\v}(x_{i_k})\cdot\normal{t}} \mathbf{d}t               \\
         & = \frac{1}{2} \int c_{\v}(x_{i_k})\cdot r^2 + \skp{r\cdot x_{i_k},\vec{n}(t)}\mathbf{d}t                                \\
         & = \frac{c_{\v}(x_{i_k})}{2} \left(r^2t + \skp{r\cdot x_{i_k}, \begin{pmatrix} \sin(t) \\ -\cos(t) \end{pmatrix}}\right)
    \end{align*}

    We can then write the leaf probability as
    \begin{align*}
        \P(L_a =\v)
         & = \sum_{l=1}^L\left(\int_{r_l}^{r_{l+1}}f_X(r)\sum_{k=1}^{K_l} \left(A_k(t_k';r) - A_k(t_k;r)\right)\mathbf{d}r\right)
    \end{align*}

    Since our end points $t_k$ depend on the radius, we use Lemma~\ref{lem:singular points} to rewrite any $t_k$ as
    \begin{align*}
        t_k
        = t_{ij\pm}
        = \alpha_{ij} \pm \beta_{ij}(r)
         & = s_{ij} \cdot \cos^{-1}\left(\frac{x_{1i_k}-x_{1j_k}}{\norm{x_{i_k} - x_{j_k}}}\right) \pm \cos^{-1}\left(\frac{\norm{x_{i_k} - x_{j_k}}}{2r}\right)
    \end{align*}
    for some $x_{i_k}$, $x_{j_k} \in a$.
    Moving the sign $c_{\v}$ outside of the term and reparameterizing $A_k$ with respect to $x_{ij\pm}$ we can identify all dependencies of $r$ in $A_k$:
    \begin{align*}
        A(x_{ij\pm};r)
        = & \frac{r^2}{2}\cdot \left(\alpha_{ij} \pm \beta_{ij}(r)\right)
        + \frac{r}{2}\skp{x_i,\begin{pmatrix}
                                      \sin(\alpha_{ij} \pm \beta_{ij}(r)) \\ -\cos(\alpha_{ij} \pm \beta_{ij}(r))
                                  \end{pmatrix}}                                                                                                             \\
        = & \frac{r^2}{2}\alpha_{ij}
        \pm \frac{r^2}{2}\cos^{-1}\left(\frac{\norm{x_i-x_j}}{2r}\right)                                                                                                                                                                                         \\
          & + \frac{r}{2}\skp{x_i,\begin{pmatrix}\sin(\alpha_{ij})\cos(\beta_{ij}(r))\pm\cos(\alpha_{ij})\sin(\beta_{ij}(r))\\-\cos(\alpha_{ij})\cos(\beta_{ij}(r))\pm\sin(\alpha_{ij})\sin(\beta_{ij}(r))\end{pmatrix}} \\
        = & \frac{r^2}{2}\alpha_{ij}
        \pm \frac{r^2}{2}\cos^{-1}\left(\frac{\norm{x_i-x_j}}{2r}\right)                                                                                                                                                                                         \\
          & + \frac{r}{2} \cos(\beta_{ij}(r))\skp{x_i,\begin{pmatrix}\sin(\alpha_{ij})\\-\cos(\alpha_{ij})\end{pmatrix}}
        \pm \frac{r}{2} \sin(\beta_{ij}(r))\skp{x_i,\begin{pmatrix}\cos(\alpha_{ij})\\\sin(\alpha_{ij})\end{pmatrix}}                                                                                                                                            \\
        = & \frac{r^2}{2}\alpha_{ij}
        \pm \frac{r^2}{2}\cos^{-1}\left(\frac{\norm{x_i-x_j}}{2r}\right)                                                                                                                                                                                         \\
          & + \frac{\norm{x_i-x_j}}{4}\skp{x_i,\vec{n}\left(\alpha_{ij}-\frac{\pi}{2}\right)} \pm \frac{r}{2} \sqrt{1-\left(\frac{\norm{x_i-x_j}}{2r}\right)^2}\skp{x_i,\vec{n}(\alpha_{ij})}.                                                                   \\
    \end{align*}

    According to Lemma~\ref{lem:relevant ip} the mapping
    \begin{align*}
        \delta_{\v,r}(x_{ij\pm})=\begin{cases}
                                     1, & \text{if }\forall x\in\v\colon \norm{x_{ij\pm}-x}\leq r \text{ and } \forall x\in a\setminus\v\colon\norm{x_{ij\pm}-x}\geq r \\
                                     0, & \text{else}
                                 \end{cases}
    \end{align*}
    identifies the singular points in $\gamma_{r,\v}$.
    While the sign $c_{\v}(x_i)$ encodes the direction of the unit normal vector, the sign $c_{\v}(x_j)$ encodes whether the intersection point $x_{ij\pm}$ is the start or the end point of the integral (Figure~\ref{fig:boundary orientation}).

    \begin{figure}[ht]\centering
        \begin{subfigure}[t]{0.24\textwidth}\raggedleft
            \caption{}\label{subfig:intersection sign 1}
%            \begin{tikzpicture}
%                \clip (-1.45,-1.25) rectangle (1.75,1.25);
%                \draw[included positions] (0,0) circle (1);
%                \draw[included positions] (1.5,0) circle (1);
%
%                \node[included point] (xi) at (0,0) [label=below:$x_i$] {};
%                \node[included point] (xj) at (1.5,0) [label=below:$x_j$] {};
%
%                \coordinate (t1) at (0.75,0.66);
%                \coordinate (t2) at (0.75,-0.66);
%
%                \node[above=2pt] at (t1) {$x_{ij+}$};
%                \node[below=4pt] at (t2) {$x_{ij-}$};
%
%                \draw (t2) arc ({-41.41}:{41.41}:1);
%                \draw[->] ({cos(20)},{sin(20)}) -- ({1.5*cos(20)},{1.5*sin(20)}) node[below, pos=0.5, font=\footnotesize] {$\vec{n}$};
%            \end{tikzpicture}
			\includegraphics[width=0.9\linewidth]{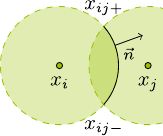}
        \end{subfigure}
        \hfill
        \begin{subfigure}[t]{0.24\textwidth}\raggedleft
            \caption{}\label{subfig:intersection sign 2}
%            \begin{tikzpicture}
%                \clip (-1.45,-1.25) rectangle (1.75,1.25);
%                \draw[excluded positions] (0,0) circle (1);
%                \draw[included positions] (1.5,0) circle (1);
%
%                \node[excluded point] (xi) at (0,0) [label=below:$x_i$] {};
%                \node[included point] (xj) at (1.5,0) [label=below:$x_j$] {};
%
%                \coordinate (t1) at (0.75,0.66);
%                \coordinate (t2) at (0.75,-0.66);
%
%                \node[above=2pt] at (t1) {$x_{ij+}$};
%                \node[below=4pt] at (t2) {$x_{ij-}$};
%
%                \draw (t2) arc ({-41.41}:{41.41}:1);
%                \draw[->] ({cos(20)},{sin(20)}) -- ({0.5*cos(20)},{0.5*sin(20)}) node[below, pos=0.5, font=\footnotesize] {$\vec{n}$};
%            \end{tikzpicture}
			\includegraphics[width=0.9\linewidth]{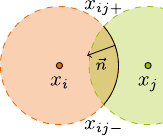}
        \end{subfigure}
        \hfill
        \begin{subfigure}[t]{0.24\textwidth}\raggedleft
            \caption{}\label{subfig:intersection sign 3}
%            \begin{tikzpicture}
%                \clip (-1.45,-1.25) rectangle (1.75,1.25);
%                \draw[included positions] (0,0) circle (1);
%                \draw[excluded positions] (1.5,0) circle (1);
%
%                \node[included point] (xi) at (0,0) [label=below:$x_i$] {};
%                \node[excluded point] (xj) at (1.5,0) [label=below:$x_j$] {};
%
%                \coordinate (t1) at (0.75,0.66);
%                \coordinate (t2) at (0.75,-0.66);
%
%                \node[above=2pt] at (t1) {$x_{ij+}$};
%                \node[below=4pt] at (t2) {$x_{ij-}$};
%
%                \draw (t1) arc ({41.41}:{318.59}:1);
%                \draw[->] ({cos(160)},{sin(160)}) -- ({1.5*cos(160)},{1.5*sin(160)}) node[below, pos=0.5, font=\footnotesize] {$\vec{n}$};
%            \end{tikzpicture}
			\includegraphics[width=\linewidth]{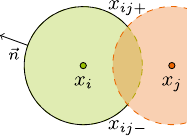}
        \end{subfigure}
        \hfill
        \begin{subfigure}[t]{0.24\textwidth}\raggedleft
            \caption{}\label{subfig:intersection sign 4}
%            \begin{tikzpicture}
%                \clip (-1.45,-1.25) rectangle (1.75,1.25);
%                \draw[excluded positions] (0,0) circle (1);
%                \draw[excluded positions] (1.5,0) circle (1);
%
%                \node[excluded point] (xi) at (0,0) [label=below:$x_i$] {};
%                \node[excluded point] (xj) at (1.5,0) [label=below:$x_j$] {};
%
%                \coordinate (t1) at (0.75,0.66);
%                \coordinate (t2) at (0.75,-0.66);
%
%                \node[above=2pt] at (t1) {$x_{ij+}$};
%                \node[below=4pt] at (t2) {$x_{ij-}$};
%
%                \draw (t1) arc ({41.41}:{318.59}:1);
%                \draw[->] ({cos(160)},{sin(160)}) -- ({0.5*cos(160)},{0.5*sin(160)}) node[below, pos=0.5, font=\footnotesize] {$\vec{n}$};
%            \end{tikzpicture}
			\includegraphics[width=0.9\linewidth]{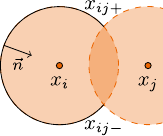}
        \end{subfigure}
        \caption{
            Case combinations for boundary orientation sign.
            (\subref{subfig:intersection sign 1}) Outward pointing normal vector is pointing away from the reference point $x_i$ ($c_{\v}(x_i) = 1$) on the inner boundary segment ($c_{\v}(x_j) = 1$).
            (\subref{subfig:intersection sign 2}) Outward pointing normal vector is pointing towards the reference point $x_i$ ($c_{\v}(x_i) = -1$) on the inner boundary segment ($c_{\v}(x_j) = 1$).
            (\subref{subfig:intersection sign 3}) Outward pointing normal vector is pointing away from the reference point $x_i$ ($c_{\v}(x_i) = 1$) on the outer boundary segment ($c_{\v}(x_j) = -1$).
            (\subref{subfig:intersection sign 4}) Outward pointing normal vector is pointing towards the reference point $x_i$ ($c_{\v}(x_i) = -1$) on the outer boundary segment ($c_{\v}(x_j) = -1$).
        }\label{fig:boundary orientation}
    \end{figure}

    We can therefore use the combined sign
    \begin{align*}
        c_{\v}(x_{ij\pm})= (\pm 1)\cdot c_{\v}(x_i) \cdot c_{\v}(x_j) = \begin{cases}
                                                                            \mp 1, & \text{if either }x_i\in\v \text{ or } x_j\in\v, \\
                                                                            \pm 1, & \text{ else,}
                                                                        \end{cases}
    \end{align*}
    to write the leaf probability as
    \begin{align*}
        \P(L_a=\v) = \frac{1}{B}\sum_{l=1}^L\left(\int_{r_l}^{r_{l+1}}f_X(r)\sum_{x_{ij\pm}\in I_r}\delta_{r,\v}(x_{ij\pm})\cdot c_{\v}(x_{ij\pm})\cdot A(x_{ij\pm};r)\mathbf{d}r\right).
    \end{align*}
    Since $c_{\v}$, $\delta_{r,\v}$, and the set of intersection points $I_r$ are constant with each radius interval $(r_l,r_{l+1})$ we can move them together with the sum outside of the integral
    \begin{align*}
        \P(L_a=\v) = \frac{1}{\abs{B}}\sum_{l=1}^L\left(\sum_{x_{ij\pm}\in I_{r_{l+\frac{1}{2}}}}\delta_{r_{l+\frac{1}{2}},\v}(x_{ij\pm})\cdot c_{\v}(x_{ij\pm})\int_{r_l}^{r_{l+1}}f_X(r)\cdot A(x_{ij\pm};r)\mathbf{d}r\right)
    \end{align*}
    where $r_{l+\frac{1}{2}}=\frac{1}{2}(r_l+r_{l+1})$.

    The remaining step is now to solve the radius integral
    \begin{align*}
        \int f_X(r) A(x_{ij\pm};r) \mathbf{d}r.
    \end{align*}
    Recall that the density function $f_X(r)$ is given by
    \begin{align*}
        f_X(r) = \frac{2r^{-3}}{r_{\min}^{-2}-r_{\max}^{-2}} \quad\text{for }r \in [r_{\min},r_{\max}].
    \end{align*}
    Moving the constant factor $\frac{1}{r_{\min}^{-2}-r_{\max}^{-2}}$ outside of the integral the antiderivative $b$ can be written as
    \begin{align*}
        b(r;x_{ij\pm}) =
          & \int 2r^{-3} A(x_{ij\pm};r) \mathbf{d}r                                                              \\
        = & \int r^{-1}\alpha_{ij} \pm r^{-1}\cos^{-1}\left(\frac{\norm{x_i-x_j}}{2r}\right)                     \\
          & + \frac{\norm{x_i-x_j}r^{-3}}{2}\skp{x_i,\vec{n}\left(\alpha_{ij}-\frac{\pi}{2}\right)}              \\
          & \pm r^{-2}\sqrt{1-\left(\frac{\norm{x_i-x_j}}{2r}\right)^2}\skp{x_i,\vec{n}(\alpha_{ij})}\mathbf{d}r \\
    \end{align*}
    Lets consider each summand of this integral separately for simplification.
    The first summand simply gives
    \begin{align*}
        \int \frac{\alpha_{ij}}{r}\mathbf{d}r
        = \log(r)\alpha_{ij}.
    \end{align*}
    The second term is more complex (figuratively and literally).
    For the initial integral we need the dilogarithm $\operatorname{Li}_2$ and use that
    \begin{align*}
          & \int x^{-1}\cos^{-1}\left(\frac{a}{x}\right)\mathbf{d}x                                               \\
        = & \frac{\mathrm{i}}{2}\operatorname{Li}_2\left(-e^{2\mathrm{i}\cos^{-1}\left(\frac{a}{x}\right)}\right)
        +\frac{\mathrm{i}}{2}\cos^{-1}\left(\frac{a}{x}\right)^2
        -\cos^{-1}\left(\frac{a}{x}\right)\log\left(1+e^{2\mathrm{i}\cos^{-1}(\frac{a}{x})}\right)
    \end{align*}
    In our setting we have $a=\frac{\norm{x_i-x_j}}{2}$ and $x=r$, such that $\cos^{-1}\left(\frac{a}{x}\right) = \cos^{-1}\left(\frac{\norm{x_i-x_j}}{2r}\right)=\beta_{ij}(r)$.
    There for the integral is given by
    \begin{align*}
        \int r^{-1}\cos^{-1}\left(\frac{\norm{x_i-x_j}}{2r}\right) \mathbf{d}r
         & = \frac{\mathrm{i}}{2}\operatorname{Li}_2\left(-e^{2\mathrm{i}\beta_{ij}(r)}\right) + \frac{\mathrm{i}}{2}\beta_{ij}^2(r) - \beta_{ij}(r)\log\left(1+e^{2\mathrm{i}\beta_{ij}(r)}\right).
    \end{align*}
    The imaginary part of this antiderivative is constant and therefore vanishes for any definite integral.
    Consequently, it suffices to consider the real part of the integral.
    Additionally, we can write the negative exponential function as phase shift of $\pi$ instead ($-e^{\mathrm{i}\theta}=e^{\mathrm{i}(\theta+\pi)}$) such that we get
    \begin{align*}
          & \int r^{-1}\cos^{-1}\left(\frac{\norm{x_i-x_j}}{2r}\right) \mathbf{d}r                                                                                                                                                    \\
        = & \operatorname{Re}\left(\frac{\mathrm{i}}{2}\operatorname{Li}_2\left(e^{\mathrm{i}(2\beta_{ij}(r)+\pi)}\right) + \frac{\mathrm{i}}{2}\beta_{ij}^2(r) - \beta_{ij}(r)\log\left(1+e^{2\mathrm{i}\beta_{ij}(r)}\right)\right) \\
        = & -\frac{1}{2}\operatorname{Im}\left(\operatorname{Li}_2\left(e^{\mathrm{i}(2\beta_{ij}(r)+\pi)}\right)\right) - \beta_{ij}(r)\operatorname{Re}\left(\log\left(1+e^{2\mathrm{i}\beta_{ij}(r)}\right)\right).
    \end{align*}
    We then utilize that $\operatorname{Im}(\operatorname{Li}_2(e^{i\theta})) = \operatorname{Cl}_2(\theta)$ where $\operatorname{Cl}_2$ is the Clausen function of order~2:
    \begin{align*}
        \int r^{-1}\cos^{-1}\left(\frac{\norm{x_i-x_j}}{2r}\right) \mathbf{d}r
         & = -\frac{1}{2}\operatorname{Cl}_2(2\beta_{ij}(r)+\pi) - \beta_{ij}(r)\log\left(\frac{\norm{x_i-x_j}}{r}\right).
    \end{align*}
    The third term is again simpler as it is essentially a polynomial w.\,r.\,t. the radius and therefore
    \begin{align*}
        \int \frac{\norm{x_i-x_j}}{2r^3}\skp{x_i,\vec{n}\left(\alpha_{ij}-\frac{\pi}{2}\right)} \mathbf{d}r
        = - \frac{\norm{x_i-x_j}}{4r^2}\skp{x_i,\vec{n}\left(\alpha_{ij}-\frac{\pi}{2}\right)}.
    \end{align*}
    Finally, for the last summand we substitute with $r = g(\theta) = \frac{\norm{x_i-x_j}}{2\cos(\theta)}$ to solve the integral. This substitution function has the inverse function $\theta = g^{-1}(r) = \beta_{ij}(r)$ and the derivative is $g'(\theta) = \frac{\norm{x_i-x_j}\sin(\theta)}{2\cos^2(\theta)}$. Integration by substitution yields
    \begin{align*}
          & \int \frac{\skp{x_i,\vec{n}(\alpha_{ij})}}{r^2} \sqrt{1-\left(\frac{\norm{x_i-x_j}}{2r}\right)^2} \mathbf{d}r                                                         \\
        = & \int \frac{4\cos^2(\theta)\skp{x_i,\vec{n}(\alpha_{ij})}}{\norm{x_i-x_j}^2} \sqrt{1-\cos^2(\theta)}\frac{\norm{x_i-x_j}\sin(\theta)}{2\cos^2(\theta)}\mathbf{d}\theta \\
        = & \int \frac{2\skp{x_i,\vec{n}(\alpha_{ij})}\sin^2(\theta)}{\norm{x_i-x_j}}\mathbf{d}\theta
        = \int \frac{2\skp{x_i,\vec{n}(\alpha_{ij})}}{\norm{x_i-x_j}}\left(1-\cos(2\theta)\right)\mathbf{d}\theta                                                                 \\
        = & \frac{2\skp{x_i,\vec{n}(\alpha_{ij})}}{\norm{x_i-x_j}}\left(\theta - \frac{1}{2}\sin(2\theta)\right)                                                                  \\
        = & \frac{2\skp{x_i,\vec{n}(\alpha_{ij})}}{\norm{x_i-x_j}}\left(\beta_{ij}(r) - \frac{1}{2}\sin\left(2\beta_{ij}(r)\right)\right)                                         \\
    \end{align*}
    Taking these results together we get
    \begin{align*}
        b(r;x_{ij\pm}) = &
        \alpha_{ij}\log(r)
        \mp  \left(\frac{1}{2}\operatorname{Cl}_2(2\beta_{ij}(r)+\pi)+\beta_{ij}(r)\log\left(\frac{\norm{x_i-x_j}}{r}\right)\right)                                                                                                                 \\
                         & - \frac{\norm{x_i-x_j}}{4r^2} \skp{x_i,\vec{n}\left(\alpha_{ij}-\frac{\pi}{2}\right)} \pm \frac{2\skp{x_i,\vec{n}(\alpha_{ij})}}{\norm{x_i-x_j}}\left(\beta_{ij}(r) - \frac{1}{2}\sin\left(2\beta_{ij}(r)\right)\right).
    \end{align*}
    The derivation for $\P(L_a\neq \emptyset)$ follows analogously, with $\delta_{\emptyset,r}(x_{ij\pm})$ following from Lemma~\ref{lem:relevant ip}.
    While the direction of the unit normal vector is always positive for the area of impossible positions for the empty leaf, we integrate from positive to negative intersection points.
    Consequently, the boundary orientation sign is given by
    \begin{align*}
        c_{\emptyset}(x_{ij\pm}) = \mp 1.
    \end{align*}
\end{proof}

We now have a way to compute the leaf probability for all cases where the boundary of possible positions has some singular points.
It remains to consider the special case when $\v$ only contains a single element and the radius is smaller than the smallest critical radius of this point.

\begin{cor}\label{cor:area before first cr}
    Let $x_i \in a$ and $r_{\min} < r^{\ast}_i = \min_{x_j \in a, j \neq i} \frac{\norm{x_i - x_j}}{2}$. Then the first summand of the leaf probability $\P(L_a=\{x_i\})$ is given by
    \begin{align*}
        \frac{1}{\abs{B}}\int_{r_{\min}}^{r_i^{\ast}} f_X(r) A(r)\mathbf{d}r = \frac{2\pi}{\abs{B}(r_{\min}^{-2}-r_{\max}^{-2})} \log\left(\frac{r_i^{\ast}}{r_{\min}}\right)
    \end{align*}
\end{cor}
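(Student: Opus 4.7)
The plan is to observe that below the first critical radius $r_i^{\ast}$, the set of possible positions $\p_{r,\{x_i\}}$ reduces to a full disk of radius $r$ centred at $x_i$, after which the integral is an elementary computation.

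First I would argue that for every $r \in [r_{\min}, r_i^{\ast})$ we have $\p_{r,\{x_i\}} = L(r,x_i)$ with area $\pi r^2$. By Lemma~\ref{lem:area of possible positions}, $\p_{r,\{x_i\}} = L(r,x_i) \setminus \bigcup_{x_j \in a\setminus\{x_i\}} L(r,x_j)$. For any $p \in L(r,x_i)$ and any $x_j \neq x_i$, the triangle inequality gives $\norm{x_i - x_j} \leq \norm{p - x_i} + \norm{p - x_j}$; since $r < r_i^{\ast} \leq \norm{x_i - x_j}/2$, this forces $\norm{p - x_j} > r$, so $p \notin L(r,x_j)$. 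Hence no excluded disk meets $L(r,x_i)$, and $\p_{r,\{x_i\}}$ is indeed the full disk. This is precisely the single-subcurve case noted in the remark following Lemma~\ref{lem:singular points}, so $A(r) = \pi r^2$.

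Next, I would substitute the power-law density \eqref{eq:power law distribution} and $A(r) = \pi r^2$ directly into the integral:
\begin{align*}
    \frac{1}{\abs{B}}\int_{r_{\min}}^{r_i^{\ast}} f_X(r)\, A(r)\,\mathbf{d}r
    = \frac{1}{\abs{B}} \int_{r_{\min}}^{r_i^{\ast}} \frac{2}{r_{\min}^{-2} - r_{\max}^{-2}}\, r^{-3} \cdot \pi r^2 \,\mathbf{d}r
    = \frac{2\pi}{\abs{B}(r_{\min}^{-2} - r_{\max}^{-2})} \int_{r_{\min}}^{r_i^{\ast}} r^{-1}\,\mathbf{d}r.
\end{align*}
Evaluating the remaining $r^{-1}$ integral yields $\log(r_i^{\ast}/r_{\min})$ and gives the claimed formula.

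There is no real obstacle here: the only substantive step is the geometric observation that no other pixel disk can clip $L(r,x_i)$ when $2r < \min_j \norm{x_i - x_j}$, so Theorem~\ref{thm:leaf prob} collapses to integrating $\pi r^2$ against $f_X$. The statement of Proposition~\ref{prop:constructive computation} does not cover this regime because $I_r = \emptyset$ (there are no singular points), which is exactly why the corollary is needed to complete the computation of $\P(L_a = \{x_i\})$.
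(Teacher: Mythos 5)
Your proposal is correct and follows essentially the same route as the paper: both reduce to the observation that $\p_{r,\{x_i\}} = L(r,x_i)$ for $r < r_i^{\ast}$ (your triangle-inequality justification is in fact more explicit than the paper's appeal to Lemma~\ref{lem:boundary parametrization}) and then integrate $f_X(r)\cdot \pi r^2$ to get $\frac{2\pi}{\abs{B}(r_{\min}^{-2}-r_{\max}^{-2})}\log(r_i^{\ast}/r_{\min})$. The only cosmetic difference is that the paper obtains the area $\pi r^2$ by evaluating the contour integral over the full circle (with the cross term $\int_0^{2\pi}\skp{x_i,\vec{n}(t)}\,\mathbf{d}t$ vanishing), whereas you write the disk area down directly.
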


\begin{proof}
    Following the proof of Lemma~\ref{lem:boundary parametrization} we have $\p_{r,\{x_i\}} = L(r,x_i)$ for all $r < r_i^{\ast}$. Therefore, the boundary $\gamma_r$ is given by
    \begin{align*}
        \gamma_r(t) = x_i + r \begin{pmatrix}
                                  \cos(t) \\ \sin(t)
                              \end{pmatrix} \quad \text{ for } t \in [0,2\pi).
    \end{align*}
    Hence, for $r_{\min} \leq r < r_{i}^{\ast}$ we get
    \begin{align*}
          & \frac{1}{\abs{B}}\int_{r_{\min}}^{r_{i}^{\ast}} f_X(r) \frac{1}{2} \int_0^{2\pi} r\cdot \skp{\gamma_{r}(t), \vec{n}(t)}\mathbf{d}t \mathbf{d}r                                              \\
        = & \frac{1}{\abs{B}}\int_{r_{\min}}^{r_i^{\ast}} \frac{r^{-3}}{r_{\min}^{-2} - r_{\max}^{-2}} \int_0^{2\pi} r^2 + r\cdot\skp{x_i,\vec{n}(t)}\mathbf{d}t \mathbf{d}r                            \\
        = & \frac{1}{\abs{B}}\int_{r_{\min}}^{r_i^{\ast}} \frac{r^{-3}}{r_{\min}^{-2} - r_{\max}^{-2}} \left[ r^2t + r\cdot\skp{x_i,\vec{n}\left(t-\frac{\pi}{2}\right)} \right]_{0}^{2\pi} \mathbf{d}r \\
        = & \frac{1}{\abs{B}}\int_{r_{\min}}^{r_i^{\ast}} \frac{r^{-3}}{r_{\min}^{-2} - r_{\max}^{-2}} \cdot 2\pi r^2 \mathbf{d}r                                                                       \\
        = & \frac{2\pi}{\abs{B}(r_{\min}^{-2} - r_{\max}^{-2})} \log\left(\frac{r_i^{\ast}}{r_{\min}}\right)
    \end{align*}
\end{proof}

With this we can now setup an algorithm to compute the prior probability of a dead leaves partition.

\subsubsection{Implementation}
The computation is done based on the theoretical ideas presented before by iterating through the layers of the dead leaves partition.

Since each leaf probability $\P(L_a=\v)$ is normalized with $\P(L_a\neq \emptyset)$ and both probabilities contain the constant scaling factors $\frac{1}{r_{\min}^2-r_{\max}^2}$ and $\frac{1}{\abs{B}}$ these cancel each other out and we can remove them from the computation.

The angle-dependent antiderivatives $b(r_l;x_{ij\pm})$ are only exact up to an ambiguity of multiples of $2\pi$.
While this ambiguity cancels out in the theoretical case, for the actual computation we need a way to handle it.
For the area of possible positions the area can maximally subtend the full circle and we can get rid of the ambiguity by reducing the result to the remainder when removing full circles, i.\,e. computing the result modulo $2\pi\log\left(\frac{r_{l+1}}{r_l}\right)$.
Since this only works for the area of possible positions but not for the area of impossible positions we fall back to computing the probability $\P(L_a\neq\emptyset)$ via
\begin{align*}
    \P(L_a\neq\emptyset) = \sum_{\v\in\mathcal{P}(a)\setminus \{\emptyset\}} \P(L_a=\v).
\end{align*}
Since we are interested in the ideal observer we need to compute the prior for every partition already so this workaround does not need additional computation.

\SetKw{Init}{Initialization:}
\SetKwFunction{Prior}{Prior}
\SetKwProg{Fn}{Function}{:}{}

\begin{algorithm}\caption{Prior}\label{alg:prior}
    \DontPrintSemicolon
    \SetNoFillComment
    \KwIn{$\m_a, (r_{\min},r_{\max})$}
    \Fn{\Prior{$\m_a$}}{
    \If{$\m_a$ is empty}{
        \KwRet{$\P(\m_a)=1$}
    }
    \Init{$\P(L_a=\v) \gets 0$ for $\v \in \m_a$, $\P(L_a\neq \emptyset)\gets 0$\;}
    \tcc{Area before first critical radius}
    \For{$x_i\in a$}{
    \If{$\{x_i\} \in \m_a$}{
    $r_i^{\ast} \gets \min_{x_j\in a\setminus \{x_i\}} r^{\ast}_{ij}$\tcp*{Lem.~\ref{lem:constant number of subcurves}}
    $\P(L_a=\{x_i\}) \mathrel{+}= 2\pi\log\left(\frac{r_i^{\ast}}{r_{\min}}\right)$\tcp*{Cor.~\ref{cor:area before first cr}}
    }
    }
    \tcc{Critical radii}
    $r^{\ast} \gets$ list($r_{\min},r_{\max}$)\;
    \For{$x_i\neq x_j \in a$}{
        Append $r^{\ast}_{ij}$ to list $r^\ast$\tcp*{Lem.~\ref{lem:constant number of subcurves}}
    }
    \For{$x_i\neq x_j \neq x_k \in a$}{
        Append $r^{\ast}_{ijk}$ to list $r^\ast$\tcp*{Lem.~\ref{lem:constant number of subcurves}}
    }
    $r^{\ast} \gets$ sort($r^{\ast}$)\;
    \tcc{Area for each radius interval}
    \For{
    $l = 1,\dots,$ length($r^{\ast}$)$-1$
    }{
    $\Delta\P \gets 0$\;
    $I \gets$ list of intersection points\;
    \For{$x_{ij\pm}\in I$}{
    \For{$\v\in \m_a$}{
    $\delta_{\v} \gets$ AND($\norm{x_{ij\pm}-x}\leq r^{\ast}_l~\forall x\in\v$, $\norm{x_{ij\pm}-x}\geq r^{\ast}_l~\forall x\in a\setminus\v$)\tcp*{Lem.~\ref{lem:relevant ip}}
    $c_{\v} \gets \pm (-1)$ ** XOR($x_i\in\v$,$x_j\in\v$)\tcp*{Prop.~\ref{prop:constructive computation}}
    $\Delta\P \mathrel{+}= \delta_{\v}\cdot c_{\v} \cdot (b(r^{\ast}_{l+1};x_{ijs})-b(r^{\ast}_l;x_{ijs}))$\tcp*{Prop.~\ref{prop:constructive computation}}
    }
    }
    $\P(L_a = \v) \mathrel{+}= \Delta\P$ mod $2\pi\l\left(\frac{r_{l+1}^{\ast}}{r_{l}^{\ast}}\right)$\;
    }
    \For{$\v\in \m_a$}{
        $\P(M_{a\setminus \v}) \gets$ \Prior{$\m_{a}\setminus \v$}\;
    }
    $\P(L_a\neq\emptyset) \gets \sum_{\v\in \mathcal{P}(a)\setminus \{\emptyset\}} \P(L_a = \v)$\;
    $\P(\m_a) \gets \sum_{\v\in\m_a}\frac{\P(L_a=\v)}{\P(L_a\neq\emptyset)}\P(M_{a\setminus \v})$\tcp*{Thm.~\ref{thm:prior decomposition}}
    \KwRet{$\P(\m_a)$}}
    \KwOut{$\P(\m_a)$}
\end{algorithm}

Even though this algorithm scales badly with the size of $a$, it can be modulated to obtain more efficient versions depending on the use case.
For the computation of all prior probabilities, we can avoid computing the same integral multiple times by iterating through the intersection points and adding the integral to the corresponding leaf probabilities instead of first iterating through the pixel sets and recomputing each integral.
For grids of pixels we can additionally use symmetries to avoid multiple computations of the same results.

\begin{exone} \label{ex1: prior subset}
    Lets consider our previous example. Let $a=\{0,1,2\}^2$ and $\m_a = \{\v_1,\v_2,\v_3\}$ where
    \begin{align*}
        \v_1 & = \{(1,2),(2,1),(1,1),(2,2)\}, \\
        \v_2 & = \{(0,0),(0,1),(1,0),(0,2)\}, \\
        \v_3 & = \{(2,0)\},
    \end{align*}
    Following Theorem~\ref{thm:prior decomposition} the prior probability is the given by
    \begin{align*}
        \P(\m_a) = \prod_{i=1}^3 \frac{\P(L_{a_{\setminus i-1}, i} = \v_{a,i})}{\P(L_{a_{\setminus i-1}, i} \neq \emptyset)}
    \end{align*}
    with $a_{\setminus 0} = a$, $a_{\setminus 1} = \{(0,0),(0,1),(1,0),(0,2),(2,0)\}$ and $a_{\setminus 2} = \{(2,0)\}$.

    % i=1
    We will start with the first layer.
    At this stage we have multiple critical radii were the number of subcurves in the boundary can change.
    Due to the grid arrangement of the points eight cases cover all critical radii for pairs and triplets of the given nine points (Figure~\ref{fig:critical radii}).
    \begin{figure*}[ht]\centering
        \begin{subfigure}[t]{0.24\textwidth}\centering
            \caption{}
%            \begin{tikzpicture}[scale=0.65]
%                \clip (-0.5,{-sqrt(5)/2}) rectangle (2.5,{2+sqrt(5)/2});
%                \foreach \x in {0,1,2}{
%                        \foreach \y in {0,1,2}{
%                                \node[point] at (\x,\y) {};
%                                \draw[irrelevant positions] (\x,\y) circle (0.5);
%                            }
%                    }
%                \foreach \p in {(0,0.5),(0,1.5),(0.5,0),(1.5,0),(1,0.5),(1,1.5),(0.5,1),(1.5,1),(2,0.5),(2,1.5),(0.5,2),(1.5,2)}{
%                        \node[point, fill=tud-10b, draw=tud-10b] at \p {};
%                    }
%                \foreach \x in {0,0.5,1,1.5}{
%                        \foreach \y in {0,1,2}{
%                                \draw[<->] (\x,\y) -- (\x+0.5,\y);
%                            }
%                    }
%                \foreach \y in {0,0.5,1,1.5}{
%                        \foreach \x in {0,1,2}{
%                                \draw[<->] (\x,\y) -- (\x,\y+0.5);
%                            }
%                    }
%            \end{tikzpicture}
			\vspace{3.4mm}
			\includegraphics[width=0.63\linewidth]{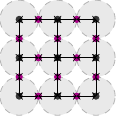}
        \end{subfigure}
        \begin{subfigure}[t]{0.24\textwidth}\centering
            \caption{}
%            \begin{tikzpicture}[scale=0.65]
%                \clip ({-sqrt(2)/2},{-sqrt(5)/2}) rectangle ({2+sqrt(2)/2},{2+sqrt(5)/2});
%                \foreach \x in {0,1,2}{
%                        \foreach \y in {0,1,2}{
%                                \node[point] at (\x,\y) {};
%                                \draw[irrelevant positions] (\x,\y) circle ({sqrt(2)/2});
%                            }
%                    }
%                \foreach \p in {(0.5,0.5),(0.5,1.5),(1.5,0.5),(1.5,1.5)}{
%                        \node[point, fill=tud-10b, draw=tud-10b] at \p {};
%                    }
%                \foreach \x in {0.5,1.5}{
%                        \foreach \y in {0.5,1.5}{
%                                \draw[<->] (\x,\y) -- (\x+0.5,\y+0.5);
%                                \draw[<->] (\x,\y) -- (\x-0.5,\y+0.5);
%                                \draw[<->] (\x,\y) -- (\x-0.5,\y-0.5);
%                                \draw[<->] (\x,\y) -- (\x+0.5,\y-0.5);
%                            }
%                    }
%            \end{tikzpicture}
			\vspace{2.8mm}
			\includegraphics[width=0.69\linewidth]{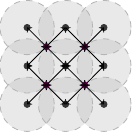}
        \end{subfigure}
        \begin{subfigure}[t]{0.24\textwidth}\centering
            \caption{}
%            \begin{tikzpicture}[scale=0.65]
%                \clip ({-1},{-sqrt(5)/2}) rectangle ({2+1},{2+sqrt(5)/2});
%                \foreach \x in {0,1,2}{
%                        \foreach \y in {0,1,2}{
%                                \node[point] at (\x,\y) {};
%                                \draw[irrelevant positions] (\x,\y) circle (1);
%                            }
%                    }
%                \foreach \p in {(0,1),(1,0),(1,1),(1,2),(2,1)}{
%                        \node[point, fill=tud-10b, draw=tud-10b] at \p {};
%                    }
%                \foreach \x in {0,1}{
%                        \foreach \y in {0,1,2}{
%                                \draw[<->] (\x,\y) -- (\x+1,\y);
%                            }
%                    }
%                \foreach \y in {0,1}{
%                        \foreach \x in {0,1,2}{
%                                \draw[<->] (\x,\y) -- (\x,\y+1);
%                            }
%                    }
%            \end{tikzpicture}
			\vspace{0.3mm}
			\includegraphics[width=0.86\linewidth]{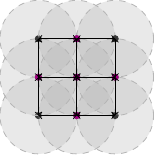}
        \end{subfigure}
        \begin{subfigure}[t]{0.24\textwidth}\centering
            \caption{}
%            \begin{tikzpicture}[scale=0.65]
%                \foreach \x in {0,1,2}{
%                        \foreach \y in {0,1,2}{
%                                \node[point] at (\x,\y) {};
%                                \draw[irrelevant positions] (\x,\y) circle ({sqrt(5)/2});
%                            }
%                    }
%                \foreach \p in {(0.5,1),(1,0.5),(1.5,1),(1,1.5)}{
%                        \node[point, fill=tud-10b, draw=tud-10b] at \p {};
%                    }
%                \foreach \p in {(0,0),(0,2),(1,2),(1,0)}{
%                        \draw[<->] \p -- (0.5,1);
%                    }
%                \foreach \p in {(0,0),(2,0),(2,1),(0,1)}{
%                        \draw[<->] \p -- (1,0.5);
%                    }
%                \foreach \p in {(0,1),(2,2),(2,1),(0,2)}{
%                        \draw[<->] \p -- (1,1.5);
%                    }
%                \foreach \p in {(1,0),(2,2),(1,2),(2,0)}{
%                        \draw[<->] \p -- (1.5,1);
%                    }
%            \end{tikzpicture}
			\includegraphics[width=0.87\linewidth]{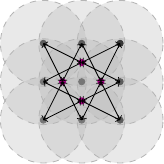}
        \end{subfigure}

        \begin{subfigure}[t]{0.24\textwidth}\centering
            \caption{}
%            \begin{tikzpicture}[scale=0.65]
%                \clip ({-5*sqrt(2)/6},{-sqrt(10)/2}) rectangle ({2+5*sqrt(2)/6},{2+sqrt(10)/2});
%                \foreach \x in {0,1,2}{
%                        \foreach \y in {0,1,2}{
%                                \node[point] at (\x,\y) {};
%                                \draw[irrelevant positions] (\x,\y) circle ({5*sqrt(2)/6});
%                            }
%                    }
%                \foreach \p in {(5/6,5/6),(7/6,7/6),(5/6,7/6),(7/6,5/6)}{
%                        \node[point, fill=tud-10b, draw=tud-10b] at \p {};
%                    }
%                \foreach \p in {(0,0),(1,2),(2,1)}{
%                        \draw[<->] \p -- (5/6,5/6);
%                    }
%                \foreach \p in {(0,2),(1,0),(2,1)}{
%                        \draw[<->] \p -- (5/6,7/6);
%                    }
%                \foreach \p in {(2,0),(0,1),(1,2)}{
%                        \draw[<->] \p -- (7/6,5/6);
%                    }
%                \foreach \p in {(2,2),(1,0),(0,1)}{
%                        \draw[<->] \p -- (7/6,7/6);
%                    }
%            \end{tikzpicture}
			\vspace{2mm}
			\includegraphics[width=0.88\linewidth]{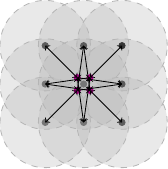}
        \end{subfigure}
        \begin{subfigure}[t]{0.24\textwidth}\centering
            \caption{}
%            \begin{tikzpicture}[scale=0.65]
%                \clip ({-5/4},{-sqrt(10)/2}) rectangle ({2+5/4},{2+sqrt(10)/2});
%                \foreach \x in {0,1,2}{
%                        \foreach \y in {0,1,2}{
%                                \node[point] at (\x,\y) {};
%                                \draw[irrelevant positions] (\x,\y) circle ({5/4});
%                            }
%                    }
%                \foreach \p in {(5/4,1),(1,5/4),(1,3/4),(3/4,1)}{
%                        \node[point, fill=tud-10b, draw=tud-10b] at \p {};
%                    }
%                \foreach \p in {(0,0),(0,2),(2,1)}{
%                        \draw[<->] \p -- (3/4,1);
%                    }
%                \foreach \p in {(2,0),(2,2),(0,1)}{
%                        \draw[<->] \p -- (5/4,1);
%                    }
%                \foreach \p in {(0,0),(2,0),(1,2)}{
%                        \draw[<->] \p -- (1,3/4);
%                    }
%                \foreach \p in {(0,2),(2,2),(1,0)}{
%                        \draw[<->] \p -- (1,5/4);
%                    }
%            \end{tikzpicture}
			\vspace{1.8mm}
			\includegraphics[width=0.89\linewidth]{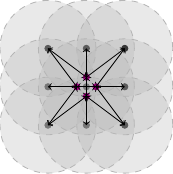}
        \end{subfigure}
        \begin{subfigure}[t]{0.24\textwidth}\centering
            \caption{}
%            \begin{tikzpicture}[scale=0.65]
%                \clip ({-sqrt(2)},{-sqrt(10)/2}) rectangle ({2+sqrt(2)},{2+sqrt(10)/2});
%                \foreach \x in {0,1,2}{
%                        \foreach \y in {0,1,2}{
%                                \node[point] at (\x,\y) {};
%                                \draw[irrelevant positions] (\x,\y) circle ({sqrt(2)});
%                            }
%                    }
%                \node[point, fill=tud-10b, draw=tud-10b] at (1,1) {};
%                \foreach \p in {(0,0),(2,0),(0,2),(2,2)}{
%                        \draw[<->] \p -- (1,1);
%                    }
%            \end{tikzpicture}
			\vspace{1.6mm}
			\includegraphics[width=0.9\linewidth]{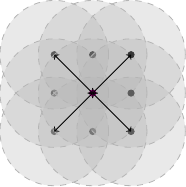}
        \end{subfigure}
        \begin{subfigure}[t]{0.24\textwidth}\centering
            \caption{}
%            \begin{tikzpicture}[scale=0.65]
%                \foreach \x in {0,1,2}{
%                        \foreach \y in {0,1,2}{
%                                \node[point] at (\x,\y) {};
%                                \draw[irrelevant positions] (\x,\y) circle ({sqrt(10)/2});
%                            }
%                    }
%                \foreach \p in {(0.5,0.5),(1.5,0.5),(0.5,1.5),(1.5,1.5)}{
%                        \node[point, fill=tud-10b, draw=tud-10b] at \p {};
%                    }
%                \foreach \p in {(0,2),(2,0),(1,2),(2,1)}{
%                        \draw[<->] \p -- (0.5,0.5);
%                    }
%                \foreach \p in {(0,0),(0,1),(1,2),(2,2)}{
%                        \draw[<->] \p -- (1.5,0.5);
%                    }
%                \foreach \p in {(1,0),(2,0),(0,1),(0,2)}{
%                        \draw[<->] \p -- (1.5,1.5);
%                    }
%                \foreach \p in {(0,0),(1,0),(2,1),(2,2)}{
%                        \draw[<->] \p -- (0.5,1.5);
%                    }
%            \end{tikzpicture}
			\includegraphics[width=\linewidth]{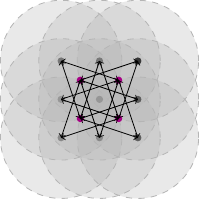}
        \end{subfigure}
        \caption{
            Critical radii for a $3\times3$ grid of pixels.
            (a) Any points that are next to each other on one of the axes.
            (b) Points which are next to each other on a diagonal, which is also the critical radius for the triplet/quadruplet intersection of a point with its neighbors along the axes.
            (c) Point pairs which are separated by one point but still on the same axes, and triplet/quadruplet with the point the between the previous points in an adjacent row or column.
            (d) Points which are in adjacent columns or row in one dimension and are separated by one point along the other axis.
            (e) Triplet of two center edge points and the opposite corner point.
            (f) Triplet of two corner points on the same axis and the center point of the opposite edge.
            (g) Pair of opposite corners.
            (h) Opposite corners and edge center points.
        }\label{fig:critical radii}
    \end{figure*}
    With radius limits $r_{\min} = 1$ and $r_{\max}=2$ we have the following eight radii to consider:
    \begin{align*}
        r_{\min}=1, \frac{\sqrt{5}}{2}, \frac{5\sqrt{2}}{6}, \frac{5}{4}, \sqrt{2}, \frac{\sqrt{10}}{2}, 2=r_{\max}.
    \end{align*}

    For the minimal radius $r_{\min}=1$ the set $\p_{r,\v_{a,1}}$ is already non-empty (Figure~\ref{fig:ex1 prior first layer r3}).
    The shape of the area of possible positions the changes twice at $r=\frac{\sqrt{5}}{2}$ (Figure~\ref{fig:ex1 prior first layer r5}) and $r=\frac{5\sqrt{2}}{6}$ (Figure~\ref{fig:ex1 prior first layer r6}).
    The shape then stays the same with three relevant intersection points for all larger radii (Figure~\ref{fig:ex1 prior first layer r9}).
    Using the above algorithm we get unscaled leaf probability $\P(L_a=\v_{a,1})\approx0.209$.
    Similarly, the probability for the leaf being non-empty can be computed with only two radius intervals: for the minimal radius till the fist critical radius and from the first critical radius to the maximal radius.
    The algorithm then yields the unscaled value $\P(L_a\neq\emptyset)\approx7.578$.
    \begin{figure*}[ht]\centering
        \begin{subfigure}[t]{0.24\textwidth}\centering
            \caption{}\label{fig:ex1 prior first layer r3}
			\vspace{2.5mm}
			\includegraphics[width=0.78\linewidth]{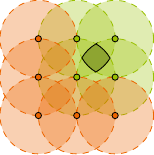}
        \end{subfigure}
        \hfill
        \begin{subfigure}[t]{0.24\textwidth}\centering
            \caption{}\label{fig:ex1 prior first layer r5}
			\vspace{2mm}
			\includegraphics[width=0.83\linewidth]{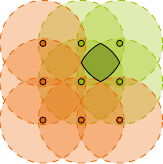}
        \end{subfigure}
        \hfill
        \begin{subfigure}[t]{0.24\textwidth}\centering
            \caption{}\label{fig:ex1 prior first layer r6}
			\vspace{1mm}
			\includegraphics[width=0.9\linewidth]{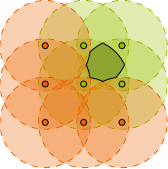}
        \end{subfigure}
        \hfill
        \begin{subfigure}[t]{0.24\textwidth}\centering
            \caption{}\label{fig:ex1 prior first layer r9}
			\includegraphics[width=\linewidth]{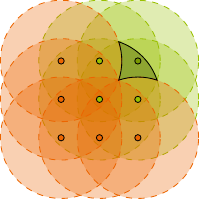}
        \end{subfigure}
        \caption{
            Area of possible positions for a leaf to match $\v_{a,1}$ for different radii.
            (\subref{fig:ex1 prior first layer r3}) Minimal radius $r_{\min} = 1$.
            (\subref{fig:ex1 prior first layer r5}-\subref{fig:ex1 prior first layer r6}) Radii where the number of singularities in the boundary changes $r=\frac{\sqrt{5}}{2}$ and $r=\frac{5\sqrt{2}}{6}$.
            (\subref{fig:ex1 prior first layer r9}) Radius greater than the largest critical radius.
        }
        \label{fig:ex1 prior first layer}
    \end{figure*}

    % i=2
    For the second layer the pixels in $\v_{a,1}$ are occluded by the first leaf and we only consider the points in $a\setminus \v_{a,1}$ (Figure~\ref{fig:ex1 prior second layer}).

    The area of possible positions is empty until $r=\frac{\sqrt{5}}{2}$ (Figure~\ref{fig:ex1 prior second layer r1}) and then changes shape only once (Figure~\ref{fig:ex1 prior second layer r4}).
    The resulting leaf probability is $\P(L_{a_{\setminus1}}=\v_{a,2})\approx0.161$.
    Since we reduced the set of pixels we also need to compute the corresponding probability for a non-empty leaf, which is $\P(L_{a_{\setminus 1}}\neq\emptyset)\approx5.959$.

    \begin{figure*}[ht]\centering
        \begin{subfigure}[t]{0.3\textwidth}\centering
            \caption{}\label{fig:ex1 prior second layer r1}
%            \begin{tikzpicture}[scale=0.65]
%                \clip ({-2},{-2}) rectangle ({2+2},{2+2});
%
%                \foreach \A in {(1,2),(2,1),(1,1),(2,2)}{
%                        \node[irrelevant point] at \A {};
%                    }
%
%                \foreach \A in {(0,0),(0,1),(1,0),(0,2)}{
%                        \draw[included positions] \A circle ({sqrt(5)/2});
%                    }
%
%                \foreach \A in {(2,0)}{
%                        \draw[excluded positions] \A circle ({sqrt(5)/2});
%                    }
%
%                \foreach \A in {(0,0),(0,1),(1,0),(0,2)}{
%                        \node[included point] at \A {};
%                    }
%
%                \foreach \A in {(2,0)}{
%                        \node[excluded point] at \A {};
%                    }
%            \end{tikzpicture}
			\vspace{5.5mm}
			\includegraphics[width=0.65\linewidth]{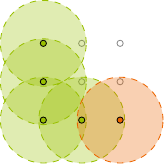}
        \end{subfigure}
        \hfill
        \begin{subfigure}[t]{0.3\textwidth}\centering
            \caption{}\label{fig:ex1 prior second layer r3}
			\vspace{4.5mm}
			\includegraphics[width=0.75\linewidth]{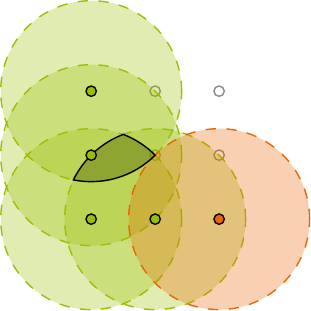}
        \end{subfigure}
        \hfill
        \begin{subfigure}[t]{0.3\textwidth}\centering
            \caption{}\label{fig:ex1 prior second layer r4}
			\includegraphics[width=\linewidth]{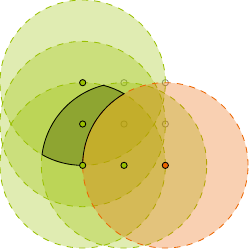}
        \end{subfigure}
        \caption{
            Area of possible positions for a leaf to match $\v_{a,2}$ on $a\setminus\v_{a,1}$ for different radii.
            (\subref{fig:ex1 prior second layer r1}) Smallest possible radius.
            (\subref{fig:ex1 prior second layer r3}) Radius where the number of singularities in the boundary changes.
            (\subref{fig:ex1 prior second layer r4}) Radius greater than the largest critical radius.
        }\label{fig:ex1 prior second layer}
    \end{figure*}

    % i=3
    For the last layer the set of pixels we are looking at is reduced to $a_{\setminus 2} = v_3 = \{(2,0)\}$.
    Since this set of pixels only contains a single point the area of possible positions is simply the circle around this point for all possible radii.
    Hence the unscaled leaf probability and the probability for a non-empty leaf are the same at $\P(L_{a_{\setminus 2}}=\v_{a,3})=\P(L_{a_{\setminus 2}}\neq\emptyset)\approx2.178$.

    By scaling every leaf probability by the probability of a non-empty leaf and multiplying the results we get the prior probability for this ordered partition
    \begin{align*}
        \P(\m_a)
        = \prod_{i=1}^3 \frac{\P(L_{a_{\setminus i-1}, i} = \v_{a,i})}{\P(L_{a_{\setminus i-1}, i} \neq \emptyset)}
        =\frac{0.209}{22.528} \cdot \frac{0.161}{5.959} \cdot \frac{2.178}{2.178}
        \approx 7.437\cdot10^{-4}.
    \end{align*}
\end{exone}

\subsection{Likelihood} \label{sec: likelihood}
The likelihood describes the probability of an image $\mathbf{s}$ on $a$ given some partition $\m_a=\{\v_{a,i}\}_{i=1,\dots,n}$.
Therefore, it depends on the memberships defined by the model as well as the distributions of color and texture.

\begin{prop}\label{prop:likelihood}
    Let $\mathbf{s}$ be a dead leaves image on $a$.
    Then the likelihood for this image given some dead leaves partition $\m_a=\{\v_{a,i}\}_{i\in[n]}$ is given by
    \begin{align*}
        \P(\mathbf{s}\mid \m_a) = \prod_{i=1}^n \left(\sum_{c}\P_c(c)\left(\prod_{x_j\in \v_i}\P_t(\mathbf{s}_{x_j}-c)\right)\right),
    \end{align*}
    where $\mathbf{s}_{x_j}$ is the value of the image at pixel $x_j$.
\end{prop}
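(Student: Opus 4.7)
The plan is to unfold Definition~\ref{def:image} and exploit the independence structure of the color and texture distributions to factor the joint probability into a product over the leaves of $\m_a$, each factor being a marginalization over the (unknown) leaf color. Conditioning on $\m_a$ fixes the membership function $\mu$, so the only randomness left in $S_x=C_{\mu(x)}+T_x$ is carried by the colors $(C_i)_{i\in[n]}$ and the textures $(T_x)_{x\in a}$; by Definition~\ref{def:image} these two families are mutually independent, the $C_i$ are i.i.d.\ across leaves, and the $T_x$ are i.i.d.\ across pixels.

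The first step is to factor across leaves. Because each pixel $x\in a$ lies in exactly one visible leaf, the pairs $\{(C_i,(T_x)_{x\in\v_{a,i}})\}_{i\in[n]}$ are jointly independent, and the image values on leaf $i$ only depend on the $i$-th pair. Consequently
\begin{align*}
    \P(\mathbf{s}\mid \m_a) = \prod_{i=1}^n \P\bigl((S_x)_{x\in\v_{a,i}}=(\mathbf{s}_x)_{x\in\v_{a,i}}\bigm| \m_a\bigr).
\end{align*}

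The second step handles each factor by conditioning on the leaf color $C_i$ via the law of total probability. Conditional on $C_i=c$, the identity $S_x=c+T_x$ turns $S_x=\mathbf{s}_x$ into $T_x=\mathbf{s}_x-c$, an event of probability $\P_t(\mathbf{s}_x-c)$, and the $T_x$'s for $x\in\v_{a,i}$ remain independent. Multiplying these independent factors and summing over $c$ weighted by $\P_c(c)$ produces the inner expression $\sum_c \P_c(c)\prod_{x_j\in\v_{a,i}}\P_t(\mathbf{s}_{x_j}-c)$; plugging this back into the leaf product gives the stated formula. The main subtlety is notational rather than deep: one must restrict the partition to the grid $a$ carefully so that the leaves appearing in the product are precisely the non-empty traces $\v_{a,i}$, and in applications where $\mathcal{C}$ or $\mathcal{T}$ is continuous (as in Section~\ref{sec: dlm implementation}) the symbol $\sum_c$ should be read as an integral against the density of $\mathcal{C}$ with $\P_t$ interpreted as the corresponding texture density.
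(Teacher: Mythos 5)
Your proposal is correct and follows essentially the same route as the paper's proof: factor the joint probability over leaves using the independence of leaves' colors and textures, then apply the law of total probability over the unknown leaf color, and finally factor over pixels within a leaf using the i.i.d.\ additive texture so that $S_x=\mathbf{s}_x$ becomes $T_x=\mathbf{s}_x-c$. Your write-up is in fact somewhat more explicit than the paper's about why each independence claim holds and about the continuous-distribution caveat, which the paper relegates to a remark after the proposition.
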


This proposition employs discrete color and texture distributions. For continuous distributions the sum becomes an integral and the probabilities become probability density functions.
While the likelihood then isn't a probability any more we can still use it to compute the maximum posterior.

\begin{proof}
    The probability of a given image $\mathbf{s}$ is the joint probability for the value of each single pixel.
    \begin{align*}
        \P(\mathbf{s}\mid \m_a) = \P(\mathbf{s}_{x_1},\dots,\mathbf{s}_{x_n}\mid \m_a)
    \end{align*}

    Since all leaves are independent in terms of size, position, color, and texture, the pixel values are independent for pixels in different leaves.
    Therefore,
    \begin{align*}
        \P(\mathbf{s}_{x_1},\dots,\mathbf{s}_{x_n}\mid \m_a) = \prod_{i=1}^n \P(\mathbf{s}_{x_j} \mid x_j\in \v_i).
    \end{align*}
    The law of total probability yields
    \begin{align*}
        \P(\mathbf{s}_{x_j} \mid x_j\in \v_i) = \sum_{c}\P_c(c)\P(\mathbf{s}_{x_j} \mid x_j\in \v_i, c).
    \end{align*}
    Since the texture was independently added we have
    \begin{align*}
        \P(\mathbf{s}_{x_j} \mid x_j\in \v_i, c) = \prod_{x_j\in \v_i}\P_t(\mathbf{s}_{x_j}-c).
    \end{align*}
\end{proof}

Depending on the specific color and texture distributions the likelihood can become very simple.
Lets consider for example discrete uniform color and texture, with constant probability $p_c$ and $p_t$.
Then the likelihood reduces to
\begin{align*}
    \P(\mathbf{s}\mid \m_a)
     & = \prod_{i=1}^n\left(\sum_c\P_c(c)\left(\prod_{x_j\in \v_i}\P_t(\mathbf{s}_{x_j}-c)\right)\right)
    = \prod_{i=1}^n\left(\sum_c p_c\left(\prod_{x_j\in\v_i}p_t\right)\right)                             \\
     & = \prod_{i=1}^n\left(\sum_c p_c \cdot p_t^{\abs{\v_i}}\right)
    = \prod_{i=1}^n  p_t^{\abs{\v_i}}
    = p_t^{\sum_{i=1}^n \abs{\v_i}}
    = p_t^{\abs{a}}.
\end{align*}
Consequently, the likelihood is also constant and the posterior will only depend on the prior probability.

Another example is the use of independent Gaussian distributions for color and texture \cite[see also][]{Pitkow2010}.
With zero mean texture we then have three parameters: color mean $\mu_c$ and standard deviations $\sigma_c$, $\sigma_t$.
The dead leaves partition then influences the covariance of pixel values.
Since the texture is added pixel-wise to the leaf color the joint probability for pixels of one leaf is distributed according to
\begin{align*}
    \P(\mathbf{s}_{x_j}\mid x_j\in \v_i) \sim \mathcal{N}(\mu_c,\Sigma_i), \quad\text{with}\quad
    \Sigma_{i;x,y} = \begin{cases}
                         \sigma_c^2 + \sigma_t^2, & \text{if } x=y, \\
                         \sigma_c^2,              & \text{else}
                     \end{cases} \in \R^{\abs{\v_i} \times \abs{\v_i}}.
\end{align*}
The likelihood is then given by
\begin{align*}
    f(\mathbf{s}\mid \m_a) = \prod_{i=1}^n f_i(\mathbf{s}_{x_j\in \v_i})
\end{align*}
where $f_i$ is the Gaussian probability density function with mean $\mu_c$ and covariance $\Sigma_i$.
For colored images with independent color channels the likelihood would additionally be the product of the likelihood for each single channel.

For better computational performance we might want to use the $\log$-likelihood such that
\begin{align*}
    f(\mathbf{s}\mid \m_a) = \sum_{i=1}^n \log(f_i(\mathbf{s}_{x_j\in \v_i})).
\end{align*}

\begin{exone} \label{ex1: likelihood full}
    We recall the partition and image from before:
    \begin{center}
        \begin{tikzpicture}[scale = 0.5]
            \foreach \x in {0,1,2}{
                    \draw[tud-0a] (-1,{\x}) node {$\x$};
                    \draw[tud-0a] ({\x},-1) node {$\x$};
                }

            \draw (0,0) node {$2$};
            \draw (0,1) node {$2$};
            \draw (1,0) node {$2$};
            \draw (1,1) node {$1$};
            \draw (2,0) node {$3$};
            \draw (2,1) node {$1$};
            \draw (2,2) node {$1$};
            \draw (0,2) node {$2$};
            \draw (1,2) node {$1$};
        \end{tikzpicture}
        \quad
        \definecolor{t1}{rgb}{0.2577, 0.3319, 0.5822}
        \definecolor{t2}{rgb}{0.2571, 0.3444, 0.5866}
        \definecolor{t3}{rgb}{0.2714, 0.7642, 0.7442}
        \definecolor{t4}{rgb}{0.2688, 0.3927, 0.5983}
        \definecolor{t5}{rgb}{0.5611, 0.2157, 0.6222}
        \definecolor{t6}{rgb}{0.5631, 0.2172, 0.5969}
        \definecolor{t7}{rgb}{0.2536, 0.3439, 0.5845}
        \definecolor{t8}{rgb}{0.5559, 0.2208, 0.6352}
        \definecolor{t9}{rgb}{0.5406, 0.2098, 0.6082}
        \begin{tikzpicture}[scale = 0.5]
            \fill[t1] (0,0) rectangle +(1,1);
            \fill[t2] (1,0) rectangle +(1,1);
            \fill[t3] (2,0) rectangle +(1,1);
            \fill[t4] (0,1) rectangle +(1,1);
            \fill[t5] (1,1) rectangle +(1,1);
            \fill[t6] (2,1) rectangle +(1,1);
            \fill[t7] (0,2) rectangle +(1,1);
            \fill[t8] (1,2) rectangle +(1,1);
            \fill[t9] (2,2) rectangle +(1,1);
            \foreach \x in {0,1,2}{
                    \draw[tud-0a] (-0.5,{\x+0.5}) node {$\x$};
                    \draw[tud-0a] ({\x+0.5},-0.5) node {$\x$};
                }
        \end{tikzpicture}
    \end{center}

    With a discrete uniform color distribution on 8-bit RGB color the probability for each color would be $\frac{1}{256^3}$, and analogously, the texture probability on a range from $-10$ to $10$ for each color channel would be $p_t=\frac{1}{21^3}$.
    Since we have $9$ pixels the likelihood is then $\P(\mathbf{s}\mid \m_a) = \left(\frac{1}{21^3}\right)^9 \approx 1.996\cdot 10^{-36}$.

    With Gaussian color and texture lets assume HSV with values between $0$ and $1$ for each channel.
    Assume we have color parameters $\mu_c = 0.6$, $\sigma_c=0.1$ and texture standard deviation $\sigma_t=0.01$.
    The covariance matrices for each color are then of the form
    \begin{align*}
        \begin{pNiceArray}{cccc}[first-row]
            \multicolumn{2}{c}{\Sigma_3} & \Sigma_1 & \Sigma_2 \\
            \sigma_{c+t}^2 & \sigma_c^2 & \sigma_c^2 & \sigma_c^2 \\
            \sigma_c^2 & \sigma_{c+t}^2 & \sigma_c^2 & \sigma_c^2 \\
            \sigma_c^2 & \sigma_c^2 & \sigma_{c+t}^2 & \sigma_c^2 \\
            \sigma_c^2 & \sigma_c^2 & \sigma_c^2 & \sigma_{c+t}^2 \\
            \CodeAfter
            \SubMatrix{.}{1-1}{2-2}{.}[code = \tikz \draw[dotted] (1-|3) -- (3-|3) -- (3-|1);]
            \SubMatrix{.}{1-1}{3-3}{.}[code = \tikz \draw[dotted] (1-|4) -- (4-|4) -- (4-|1);]
        \end{pNiceArray}
        =
        \begin{pNiceArray}{cccc}[first-row]
            \multicolumn{2}{c}{\Sigma_3} & \Sigma_1 & \Sigma_2 \\
            .0101 & .01 & .01 & .01 \\
            .01 & .0101 & .01 & .01 \\
            .01 & .01 & .0101 & .01 \\
            .01 & .01 & .01 & .0101 \\
            \CodeAfter
            \SubMatrix{.}{1-1}{2-2}{.}[code = \tikz \draw[dotted] (1-|3) -- (3-|3) -- (3-|1);]
            \SubMatrix{.}{1-1}{3-3}{.}[code = \tikz \draw[dotted] (1-|4) -- (4-|4) -- (4-|1);]
        \end{pNiceArray}
    \end{align*}
    Since we are working with 3 independent color channels the likelihood of each leaf is given by
    \begin{align*}
        f_i(\mathbf{s}_{x_j\in \v_i})=\prod_{c=1}^{3} f_{i,c}(\mathbf{s}_{x_j\in \v_i})
    \end{align*}
    where $f_{i,c}$ is the probability density function for each individual color channel.

    With these parameters we get the probability densities
    \begin{align*}
        \log(f_1(\mathbf{s}_{x_j\in \v_1})) & \approx 26.83 &
        \log(f_2(\mathbf{s}_{x_j\in \v_2})) & \approx 32.06 &
        \log(f_3(\mathbf{s}_{x_j\in \v_3})) & \approx 2.14
    \end{align*}

    which results in the total image $\log$-likelihood of $\log(f(\mathbf{s}\mid \m_a))\approx 61.02$.
    Since this is the value of a probability density function it's hard to interpret without context.
    For comparison the partition
    \begin{center}
        \begin{tikzpicture}[scale = 0.5]
            \foreach \x in {0,1,2}{
                    \draw[tud-0a] (-1,{\x}) node {$\x$};
                    \draw[tud-0a] ({\x},-1) node {$\x$};
                }

            \draw (0,0) node {$2$};
            \draw (0,1) node {$2$};
            \draw (1,0) node {$3$};
            \draw (1,1) node {$1$};
            \draw (2,0) node {$3$};
            \draw (2,1) node {$1$};
            \draw (2,2) node {$1$};
            \draw (0,2) node {$2$};
            \draw (1,2) node {$1$};
        \end{tikzpicture}
    \end{center}
    has a $\log$-likelihood of approximately $-75.85$ and the original partition is the partition with the maximal likelihood.
\end{exone}

As the last step we can now combine the prior probability and the likelihood to obtain the posterior probability of a partition given an image.

\subsection{Posterior} \label{sec: posterior}

Taking everything together we can derive the posterior probability of a dead leaves partition through
\begin{align*}
    \P(\m_a\mid \mathbf{s}) \propto \P(\mathbf{s}\mid \m_a)\P(\m_a)
\end{align*}
where the prior (Theorem~\ref{thm:prior decomposition}) is given by
\begin{align*}
    \P(\m_a) = \sum_{i=1}^n \frac{\P(L_{a,i}=\v_{a,i})}{\P(L_{a,i}\neq \emptyset)}\P(\m_{a\setminus\v_{a,i}})
\end{align*}
with leaf probability
\begin{align*}
    \P(L_a=\v) & = \int_{-\infty}^\infty f_X(r)\int_{\R^2}f_{P}(p)\P(L_a(r,p)=\v\mid r,p)\mathbf{d}p\mathbf{d}r.
\end{align*}
This leaf probability can either be approximated, for example through a grid approximation, or derived analytically (Theorem~\ref{thm:leaf prob}, Proposition~\ref{prop:constructive computation}) yielding
\begin{align*}
      & \P(L_a=\v)                                                                                                                                                                                              \\
    = & \sum_{l=1}^L \int_{r_l}^{r_{l+1}}f_X(r)\sum_{k=1}^{K_l}\frac{1}{2}\int_{t_k}^{t_k'}\skp{\gamma_{r,\v}^k(t),\vec{n}(\gamma_{r,\v}^k(t))}\norm{\dot{\gamma}_{r,\v}^k(t)}\mathbf{d}t\mathbf{d}r            \\
    = & \frac{1}{\abs{B}(r_{\min}^{-2}-r_{\max}^{-2})}\sum_{l=1}^L\left(\sum_{x_{ij\pm\in I_r}}\delta_{r_l,\v}(x_{ij\pm})\cdot c_{\v}(x_{ij\pm})\cdot\left(b(r_{l+1};x_{ij\pm})-b(r_l;x_{ij\pm})\right)\right).
\end{align*}
The likelihood (Proposition~\ref{prop:likelihood}) is given through
\begin{align*}
    \P(\mathbf{s}\mid \m_a) = \prod_{i=1}^n\left(\sum_c\P_c(c)\left(\prod_{x_j\in\v_i}\P_t(\mathbf{s}_{x_j}-c)\right)\right).
\end{align*}
Hence the posterior probability is
\begin{align*}
    \P(\m_a\mid \mathbf{s}) \propto \left(\prod_{i=1}^n\left(\sum_c\P_c(c)\left(\prod_{x_j\in\v_i}\P_t(\mathbf{s}_{x_j}-c)\right)\right)\right)\cdot \sum_{i=1}^n \frac{\P(L_{a,i}=\v_{a,i})}{\P(L_{a,i}\neq \emptyset)}\P(\m_{a\setminus\v_{a,i}}).
\end{align*}
For an ordered partition we get
\begin{align*}
    \P(\m_a\mid \mathbf{s})
     & \propto \left(\prod_{i=1}^n\left(\sum_c\P_c(c)\left(\prod_{x_j\in\v_i}\P_t(\mathbf{s}_{x_j}-c)\right)\right)\right)\cdot \prod_{i=1}^n \frac{\P(L_{a_{\setminus i-1},i}=\v_{a,i})}{\P(L_{a_{\setminus i},i}\neq \emptyset)} \\
     & = \prod_{i=1}^n\left(\left(\sum_c\P_c(c)\left(\prod_{x_j\in\v_i}\P_t(\mathbf{s}_{x_j}-c)\right)\right) \cdot \frac{\P(L_{a_{\setminus i-1},i}=\v_{a,i})}{\P(L_{a_{\setminus i},i}\neq \emptyset)}\right).                   \\
\end{align*}

For the partition with maximal posterior probability we then need to compute the posterior probability of any possible partition.
In this instance we can arrive at the actual posterior probability by computing the unscaled posterior for all partitions and then scale them by the sum of the results.

\subsection{Implementation} \label{sec: ideal observer implementation}

To implement the ideal observer we now combine the previous sections.
Since the probabilities get exponentially smaller with number of pixels we will perform the computations in $\log$-space to avoid underflow.

\begin{algorithm}\caption{Ideal observer}\label{alg:io}
    \DontPrintSemicolon
    \KwIn{$S$, $\P_c$, $\P_t$}
    \Init{$M^{\ast}\gets$ None, MAP $\gets$ 0\;}
    \For{$\m_a\in \M_a$}{
        prior $\gets \log($\Prior($M_a$)$)$\tcp*{Alg.~\ref{alg:prior}}
        likelihood $\gets \sum_{i=1}^n \log\left(\sum_{c}\P_c(c)\left(\prod_{x_j\in\v_i}\P_t(\mathbf{s}_{x_j}-c)\right)\right)$\tcp*{Prop.~\ref{prop:likelihood}}
        posterior $\gets$ prior $+$ likelihood\;
        \If{posterior $>$ MAP}{
            $M^{\ast} \gets \m_a$\;
            MAP $\gets$ posterior\;
        }
    }
    \KwOut{$M^\ast$, MAP}
\end{algorithm}

For images with state of the art resolutions and color depth the computation of this ideal observer is infeasible since the number of possible partitions of the pixel space explodes.
For example, on our current workstation, computing the maximum posterior for $9$~pixels Figure~\ref{fig:all probs for ex} took $15$~minutes, but already $10$~pixels would take roughly $20$~hours since we also have less symmetries we can exploit.
The next larger square with $16$~pixels then has $10^{10}$~partitions which we are not able to store simultaneously with $32$~GB of RAM.

Similarly, but less extrem, the computation of the likelihood can become infeasible.
For example, with RGB colors and 8-bit color depth if we have a non-uniform distribution which is not independent between color channels we have to sum over $256^3 \approx 1.7\cdot 10^{7}$~single colors and their probability.
For practical use it is therefore necessary to perform the computation on a small set of pixels within an image or on very small images and with independent colors, lower color depth or in gray scale.

\begin{exone}
    Using algorithm~\ref{alg:io} we can compute the posterior probabilities of all possible partitions for our example image.
    Figure~\ref{fig:all probs for ex} shows the resulting probabilities for the $15$ partitions with the highest posterior probabilities.
    For this example the ideal observer produces the original partition as most probable.

    \begin{figure}[ht]\centering
        \begin{subfigure}[t]{0.48\textwidth}
            \caption{}\label{fig:all probs for ex}
            \includegraphics[height=4.8cm]{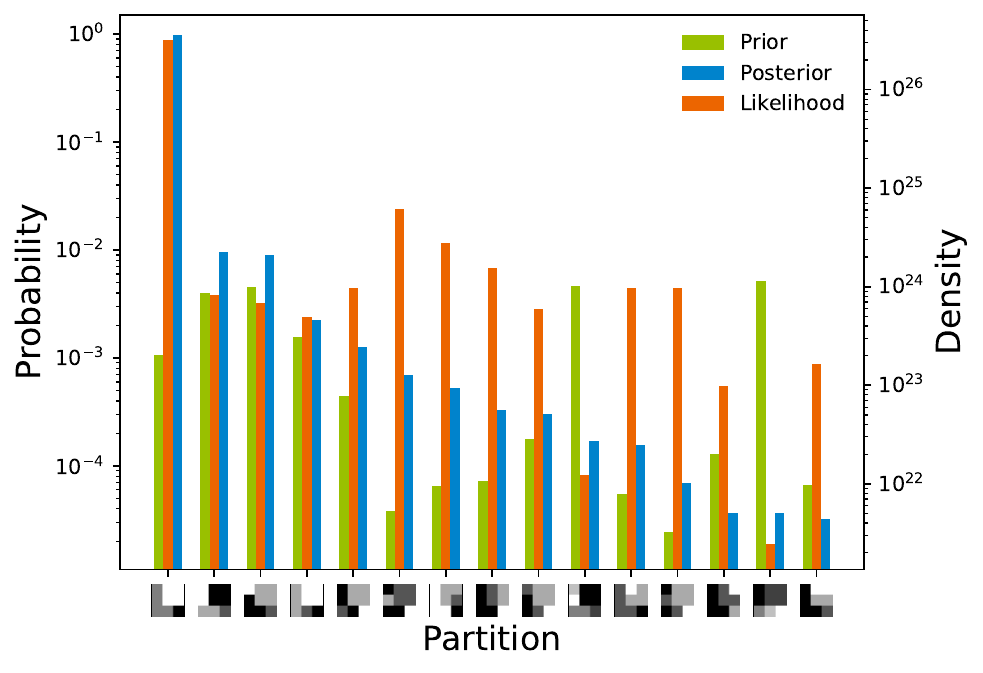}
        \end{subfigure}
        \hfill
        \begin{subfigure}[t]{0.48\textwidth}
            \caption{}\label{fig:all priors for ex}
            \includegraphics[height=4.8cm]{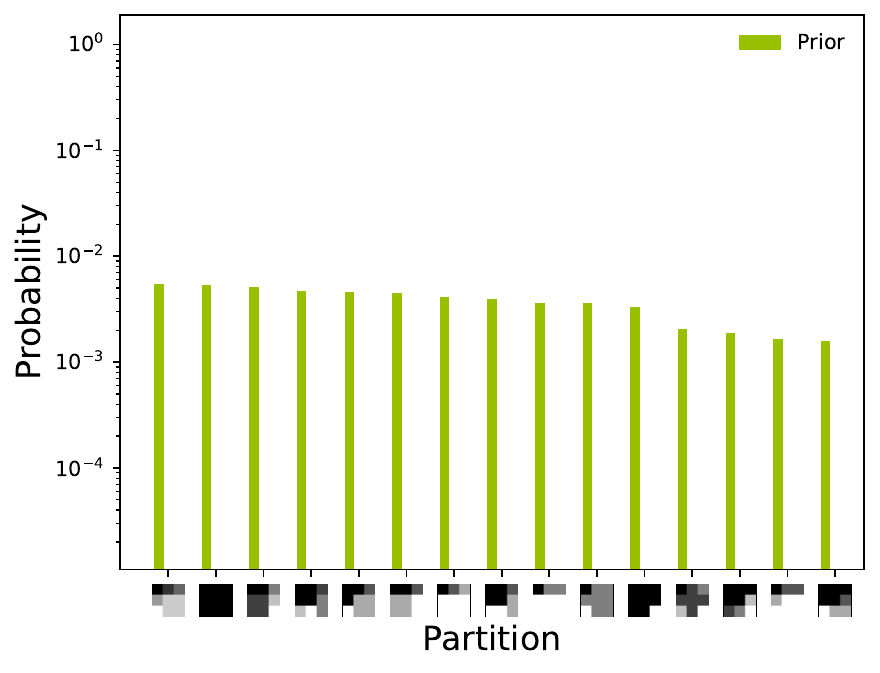}
        \end{subfigure}
        \caption{
            Resulting partition probabilities of the ideal observer.
            (\subref{fig:all probs for ex}) Prior and posterior probabilities and likelihood density function values for the fifteen partitions with the highest posterior probabilities.
            (\subref{fig:all priors for ex}) Prior probabilities for the fifteen partitions with the highest unique prior probability.
            The x-axis depict the corresponding unordered partition of the nine considered pixels.
        }
    \end{figure}

    In this example the likelihood for the original partition is very large since the leaves have a large color difference in comparison to the texture that is introduced on a pixel-wise basis.
    Changing the membership of pixels reduces the likelihood, in particular if the respective pixel has a very similar color.
    For example the partition with the second highest likelihood (6th partition in Figure~\ref{fig:all probs for ex}) assumes that the pixel $(0,1)$ is on an individual leaf.
    In the original image this pixel is the one with the highest texture, i.\,e. largest difference to the other pixels on the leaf so it is still relatively likely that the picture was generate from this partition with similar colors for two of the leaves.

    For the prior we can see that some partitions are less probable than others, for example it is improbable that we have a leaf with a single pixel that occludes another leaf (e.\,g. partitions 6 and 12 in Figure~\ref{fig:all probs for ex}).
    In fact there are also partitions with a prior probability of zero, for example any partition with $\{\{(0,0),(1,1)\},\{(0,1),(1,0)\}\}$ since this configuration can not be produced through occlusion of circles.
    While in general, due to the power law distributed radius, smaller leaves are more probable, here we have a minimal radius set to $1$ such that it is quite likely to sample a leaf that will include a square of pixels as can be observed from the high prior probabilities in Figure~\ref{fig:all priors for ex}.
    Additionally, the prior probability is invariant to mirroring and rotations due to the symmetric of the leaves such there are many partitions which have the same prior probability (Figure~\ref{fig:all priors for ex} only shows exemplars with a unique prior probability).
\end{exone}

\section{Discussion}
% summary & contribution
In this  work we provided a step-by-step derivation of a Bayesian ideal observer for the segmentation of pixels in an image generated through a dead leaves model.
This includes the thorough definition of dead leaves models and in particular, rigorous proofs for the ideas presented by \citet{Pitkow2010}.
Additionally, we extend the work of \citet{Pitkow2010} by removing constant scaling factors, demonstrating the independence of the partition probability from the area we sample leaf positions from.
Finally, we provide the resulting algorithms necessary to compute the ideal observer and give examples for each step in the computation.

% limitations
Since the number of possible segmentations of a set of pixels scales according to Bell's number (at least exponentially) the computation of the maximum posterior over all possible segmentations is infeasible for more than a few pixels.
Computing the ideal observer for $10$~pixels already takes around $20$~hours.
With the high dimensionality also come very small probabilities for any single image or partition, in particular for images where the segementation relies on the prior.
While the partitions with the maximal prior for $2$~pixels has a probability of roughly $0.6$, for $2\times2$~pixel that number already drops tops to $0.16$ and to $0.005$ for $3\times3$~pixel.
Consequently, even though the ideal observer analytically produces the correct results, we might run into numerical issues which inhibit the identification of the real MAP.
When designing tasks using this setup one therefore should consider the dimensionality of the tasks carefully.

% smoothness and smart sampling
The probabilities depicted in Figure~\ref{fig:all probs for ex} suggest that the probability functions are relatively ``smooth'' on the set of possible segmentations, i.\,e. changing the segmentation only slightly by for example switching the membership of one pixel has a small effect on the probability.
Smart sampling algorithms could be developed to exploit this observation and find a maximal posterior segmentation for images with more pixels, for which the computation of all posterior probabilities is infeasible.
However, any work in this direction needs to find a sound implementation for computing the leaf probability for a non-empty leaf.

% model extensions
While this work already extends the model considered by \cite{Pitkow2010} to different color and texture distributions, it is still constrained to a specific leaf shape distribution, i.\,e. circles with Power law distributed radius.
In theory the ideal observer could be extended to other shapes and distributions, as long as the shape can be parameterized with a piecewise smooth boundary.
Whether there is an analytical solution for the leaf probability then depends on the specific setup.
Additionally, the theory is only dependent on the leaves being sampled independently --- not on the independence of the leaves' features.
Consequently, it could be extended to models with a dependency (for example, between position and color).

% future work
For segmentation tasks on a limited number of pixels this ideal observer already provides a measure for the ideal performance given the generative dead leaves model.
This ideal performance can be used as benchmark for comparison with both human observers and other algorithms like deep neural networks.
For example, combining dead leaves stimuli and the ideal observer model with a recent experimental technique to measure probabilistic perceptual segmentation maps \citep{Vacher2023} could represent a fruitful avenue.
Future work could also aim to develop and test approximations to this benchmark solution with the goal to use these approximations to extend this work to different dead leaves models and tasks of relevant dimensionality.

\section*{Acknowledgements}
We would like to thank Johann Bauer for many helpful discussions when working through this theory, as well as Xaq Pitkow for his insights on solving the dead leaves model.
We also thank Lars Reining for testing the code.
Additionally, we are thanking Thomas Wallis for his support and feedback throughout this project.

This work was supported by the Deutsche Forschungsgemeinschaft (German Research Foundation, DFG) under Germany’s Excellence Strategy (EXC 3066/1 “The Adaptive Mind”, Project No. 533717223).
Additionally, this work was co-funded by the European Union (ERC, SEGMENT, 101086774).
Views and opinions expressed are however those of the author(s) only and do not necessarily reflect those of the European Union or the European Research Council.
Neither the European Union nor the granting authority can be held responsible for them.

\section*{Contributions}
The authors SM and MO were supported in this research by Thomas Wallis (TW).
TW and SM developed the central research questions.
SM together with MO derived the main propositions, lemmas, and theorems, provided the formal proofs, and validated their internal consistency.
MO further refined the mathematical arguments.
SM translated the theoretical results into algorithmic procedures, prepared the codebase, and ensured computational correctness.
SM generated the examples, ensured their alignment with the theoretical assumptions.
SM also designed all figures illustrating algorithmic behavior and theoretical examples.
SM supported by MO wrote the main theoretical sections.
SM wrote the examples and framing text.
MO critically revised the draft for mathematical accuracy and clarity.
SM, MO, and TW contributed to iterative review and editing.
SM coordinated the collaborative work, and TW provided conceptual guidance.

\section*{Code availability}
All code implementing this theory can be found at: \url{https://doi.org/10.5281/zenodo.18151853}.

\bibliography{bib}

\newpage

\setcounter{section}{0}
\renewcommand{\thesection}{\Alph{section}}

\section{Random closed sets}\label{app:preliminaries}
We can think of an object in the real world as the realisation of a connected random closed set, i.\,e. some volume in space which contains its borders.
Formally a random closed set can be defined as follows.

\begin{defi}[Random closed set {\cite[p.~2]{Molchanov2017}}]\label{def:random closed set}
    Let $(\Omega,\Sigma,\P)$ be a probability space, $\F$ the family of closed sets $\F$ in $\R^d$ and $(\F, \mathcal{B}(\F))$ a measurable set with $\sigma$-algebra $\mathcal{B}(\F)$.
    A measurable map \[X\colon (\Omega,\Sigma,\P) \to (\F,\mathcal{B}(\F))\] is called \emph{random closed set}.
\end{defi}

A random set can be thought of as a random variable that does not take scalars as value, but sets.
When sampling each set is selected with some probability given through the probability measure $\P$.
Figure~\ref{fig:random_sets} shows multiple examples for samples from different simple distributions on closed sets in $\R^2$.

\begin{figure}[ht]\centering
    \adjustbox{minipage=1em,raise=2mm}{\subcaption{}\label{fig:random circle}}
    \begin{subfigure}{0.4\textwidth}\centering
        \includegraphics[width=\linewidth, trim=75 0 75 0,clip]{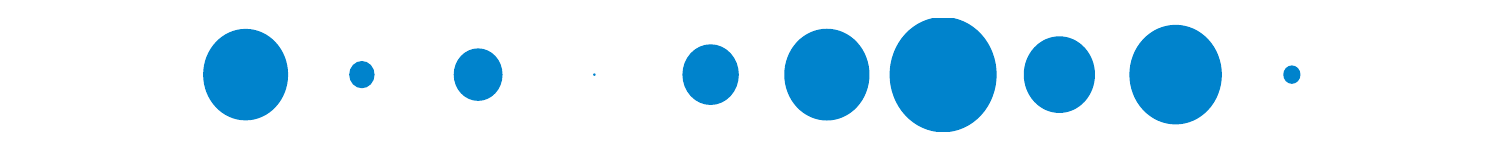}
    \end{subfigure}

    \adjustbox{minipage=1em,raise=2mm}{\subcaption{}\label{fig:random ellipses}}
    \begin{subfigure}{0.4\textwidth}\centering
        \includegraphics[width=\linewidth, trim=75 0 75 0,clip]{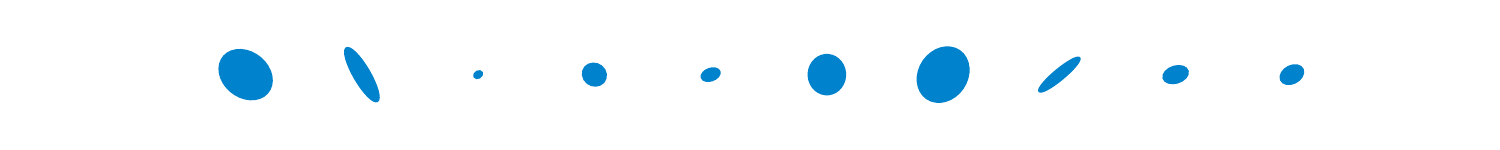}
    \end{subfigure}

    \adjustbox{minipage=1em,raise=2mm}{\subcaption{}\label{fig:random rectangles}}
    \begin{subfigure}{0.4\textwidth}\centering
        \includegraphics[width=\linewidth, trim=90 0 60 0,clip]{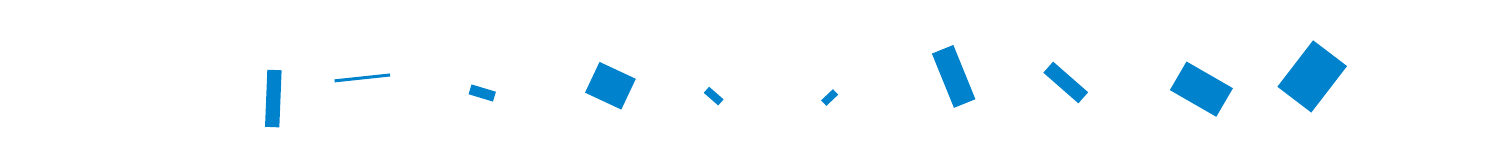}
    \end{subfigure}
    \caption{Simple examples for random sets. The figures show samples of circles with uniformly distributed radius (\subref{fig:random circle}), ellipses with uniformly distributed width, height and angle (\subref{fig:random ellipses}) and rectangles with uniformly distributed width, height and angle (\subref{fig:random rectangles}).}\label{fig:random_sets}
\end{figure}

In our case we consider the three-dimensional space ($d=3$) so that the probability distribution of $P$ mirrors the occurrence of a specific object in the real world.
For an ideal model we would know this distribution.
Since it is most likely impossible to know the probability distribution of objects, we have to make do with the best approximation we can come up with in the later model construction.

\end{document}